\tikzset{
	->, 
	every state/.style={thick, fill=gray!10}, 
	initial text=$ $, 
}
\pgfplotsset{width=10cm,compat=1.9}
\newif\if@restonecol
\newtheorem{lemma}{Lemma}
\newtheorem{theorem}{Theorem}
\newtheorem*{theorem*}{Theorem}
\newtheorem{proposition}{Proposition}
\newtheorem{remark}{Remark}
\newcommand{\Rmnum}[1]{\expandafter\@slowromancap\romannumeral #1@}
\DeclareMathOperator*{\argmax}{arg\,max}
\newcommand{\cA}{1}
\newcommand{\cM}{{\mathcal{M}}}
\newcommand{\cP}{{\mathcal{P}}}
\newcommand{\cL}{2}
\newcommand{\norm}[1]{\left\lVert#1\right\rVert}
\newcommand{\expct}[1]{\mathbb{E}\left[#1\right]}
\newcommand{\ind}[1]{\mathds{1}\left[#1\right]}
\newcommand{\pos}[1]{\left[#1\right]^+}
\newcommand{\eps}{\epsilon}
\newcommand{\pola}{\pi_{\cA}}
\newcommand{\polap}{\pi_{\theta}}
\newcommand{\Pola}{\Pi^{\cA}}
\newcommand{\poll}{\pi_{\cL}}
\newcommand{\pollp}{\pi_{\phi}}
\newcommand{\pollpopt}{\pi_{\phi^*}}
\newcommand{\Poll}{\Pi^{\cL}}
\newcommand{\Polld}{\Pi^{\cL}_{\text{det}}}
\newcommand{\targetpi}{\ensuremath{{\pi^{\dagger}_{\cL}}}}
\newcommand{\initpi}{\ensuremath{{\pi_{\cA}^0}}}
\newcommand{\targetpiproxy}{\ensuremath{{\tilde \pi^{\dagger}_{\cL}}}}
\newcommand{\ntargetpiproxy}{\ensuremath{{\bar \pi^{\dagger}_{\cL}}}}
\newcommand{\dpi}{\ensuremath{ \delta }}
\newcommand{\score}{{\rho}}
\newcommand{\occupancy}{{\psi}}
\newcommand{\occstate}{{\mu}}
\newcommand{\optpil}{\pi^{*|\pi_1}_{\cL}}
\newcommand{\Optl}{\textsc{{Opt}}_{\cL}^{\eps} }
\newcommand{\Cost}{\textsc{Cost}}
\newcommand{\dist}{\textup{dist}}
\newcommand{\cost}{\textup{cost}}
\newcommand{\conspolsearch}{\textsc{Conservative Policy Search for Implicit Attacks (General)}}
\newcommand{\conspolsearcherg}{\textsc{Conservative Policy Search for Implicit Attacks and Ergodic Environments}}
\newcommand{\alterupdates}{\textsc{Alternating Policy Updates for Implicit Attacks}}
\title[Implicit Poisoning Attacks in Two-Agent Reinforcement Learning]{Implicit Poisoning Attacks in Two-Agent Reinforcement Learning:\\Adversarial Policies for Training-Time Attacks}
\author{Mohammad Mohammadi}\authornote{Equal contributions}
\affiliation{
    \institution{MPI-SWS}
  \city{Saarbr\"ucken}
  \country{Germany}
  }
\email{mmohamma@mpi-sws.org}
\author{Jonathan N\"other}\authornotemark[1]
\affiliation{
  \institution{Saarland University}
  \city{Saarbr\"ucken}
  \country{Germany}
  }
\email{s8jonoet@stud.uni-saarland.de}
\author{Debmalya Mandal}
\affiliation{
  \institution{MPI-SWS}
  \city{Saarbr\"ucken}
  \country{Germany}
  }
\email{dmandal@mpi-sws.org}
\author{Adish Singla}
\affiliation{
  \institution{MPI-SWS}
  \city{Saarbr\"ucken}
  \country{Germany}
  }
\email{adishs@mpi-sws.org}
\author{Goran Radanovic}
\affiliation{
  \institution{MPI-SWS}
  \city{Saarbr\"ucken}
  \country{Germany}
  }
\email{gradanovic@mpi-sws.org}
\begin{abstract}
In targeted poisoning attacks, an attacker manipulates an agent-environment interaction to force the agent into adopting a policy of interest, called {\em target} policy. Prior work has primarily focused on attacks that modify standard MDP primitives, such as rewards or transitions. In this paper, we study targeted poisoning attacks in a two-agent setting where an attacker 
{\em implicitly} poisons the {\em effective} environment of one of the agents by modifying the policy of its peer. 
We develop an optimization framework for designing optimal attacks, where the cost of the attack measures how much the solution deviates from the assumed default policy of the peer agent. We further study the computational properties of this optimization framework. 
Focusing on a tabular setting, we show that in contrast to poisoning attacks based on MDP primitives (transitions and (unbounded) rewards), which are always feasible, it is NP-hard to determine the feasibility of implicit poisoning attacks. 
We provide characterization results that establish sufficient conditions for the feasibility of the attack problem, as well as an upper and a lower bound on the optimal cost of the attack.
We propose two algorithmic approaches for finding an optimal adversarial policy: a model-based approach with tabular policies and a model-free approach with parametric/neural policies. 
We showcase the efficacy of the proposed algorithms through experiments.
\end{abstract}
\newcommand{\BibTeX}{\rm B\kern-.05em{\sc i\kern-.025em b}\kern-.08em\TeX}
\begin{document}

\maketitle

\newtoggle{longversion}
\settoggle{longversion}{true}


\section{Introduction}

Recent works on adversarial attacks in reinforcement learning (RL) have demonstrated the susceptibility of RL algorithms to various forms of adversarial attacks \cite{huang2017adversarial,lin2017tactics,sun2020stealthy,sun2020vulnerability,kiourti2020trojdrl}, 
including poisoning attacks which manipulate a learning agent in its training phase, altering the end result of the learning process, i.e., the agent's policy \cite{ma2019policy,rakhsha2020policy,zhang2020adaptive,sun2020vulnerability,rakhsha2021policy,liu2021provably,ijcai2022-471}. 
In order to understand and evaluate the stealthiness of such attacks, it is important to assess the underlying assumptions that are made in the respective attack models. 
Often, attack models are based on altering the environment feedback of a learning agent. For example, in environment poisoning attacks, the attacker can manipulate the agent's rewards or transitions \cite{ma2019policy,rakhsha2020policy}---these manipulations could correspond to changing the parameters of the model that the agent is using during its training process. However, directly manipulating the environment feedback of a learning agent may not always be practical. For example, rewards are often internalized or are goal specific, in which case one cannot directly poison the agent's rewards. Similarly, direct manipulations of transitions and observations may not be practical in environments with complex dynamics, given the constraints on what can be manipulated by such attacks. 

In order to tackle these practical challenges, \citet{gleave2019adversarial} introduce a novel class of attack models for a competitive two-agent RL setting with a zero-sum game structure. In particular, they consider an attacker that controls one of the agents. By learning an adversarial policy for that agent, the adversary can force the other, victim agent, to significantly degrade its performance. \citet{gleave2019adversarial} focus on test-time attacks that learn adversarial policies for an already trained victim. This idea has been further explored by \citet{guo2021adversarial} in the context of more general two-player games, which are not necessarily zero-sum, and by \citet{wang2021backdoorl} in the context of backdoor attacks, where the action of the victim's opponent can trigger a backdoor hidden in the victim's policy. 

In this paper, we focus on {\em targeted} poisoning attacks that aim to force a learning agent into adopting a certain policy of interest, called {\em target policy}. In contrast to prior work on targeted poisoning attacks \cite{ma2019policy,rakhsha2020policy,rakhsha2021policy}, which primarily considered single-agent RL and attack models that directly manipulate MDP primitives (e.g., rewards or transitions), we consider a two-agent RL setting and an attack model that is akin to the one studied in \cite{gleave2019adversarial,guo2021adversarial,wang2021backdoorl}, but tailored to environment poisoning attacks. More specifically, in our setting, the attacker {\em implicitly} poisons the {\em effective} environment of a victim agent by controlling its peer at training-time. 
To ensure the stealthiness of the attack, the attacker aims to minimally alter the default behavior of the peer, which we model through the cost of the attack. 

\begin{figure*}[ht]
	\centering
	\begin{subfigure}{0.48\textwidth}
    	\centering
		\resizebox{0.98\linewidth}{!}{\includegraphics{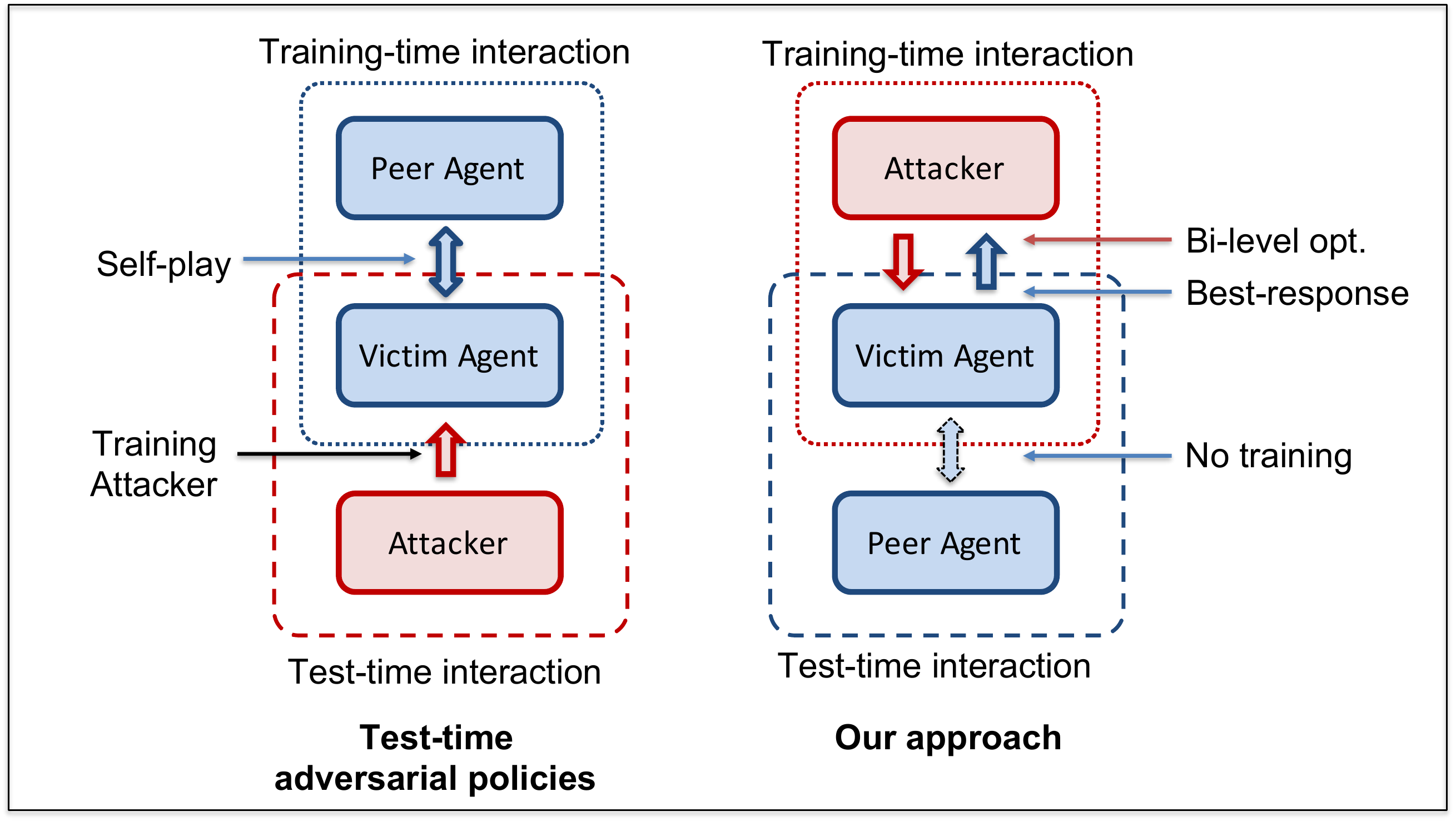}}
		\caption{Comparison to test-time adversarial policies}\label{fig.intro.tap}
	\end{subfigure}
	\begin{subfigure}{0.48\textwidth}
    	\centering
		\resizebox{0.98\linewidth}{!}{\includegraphics{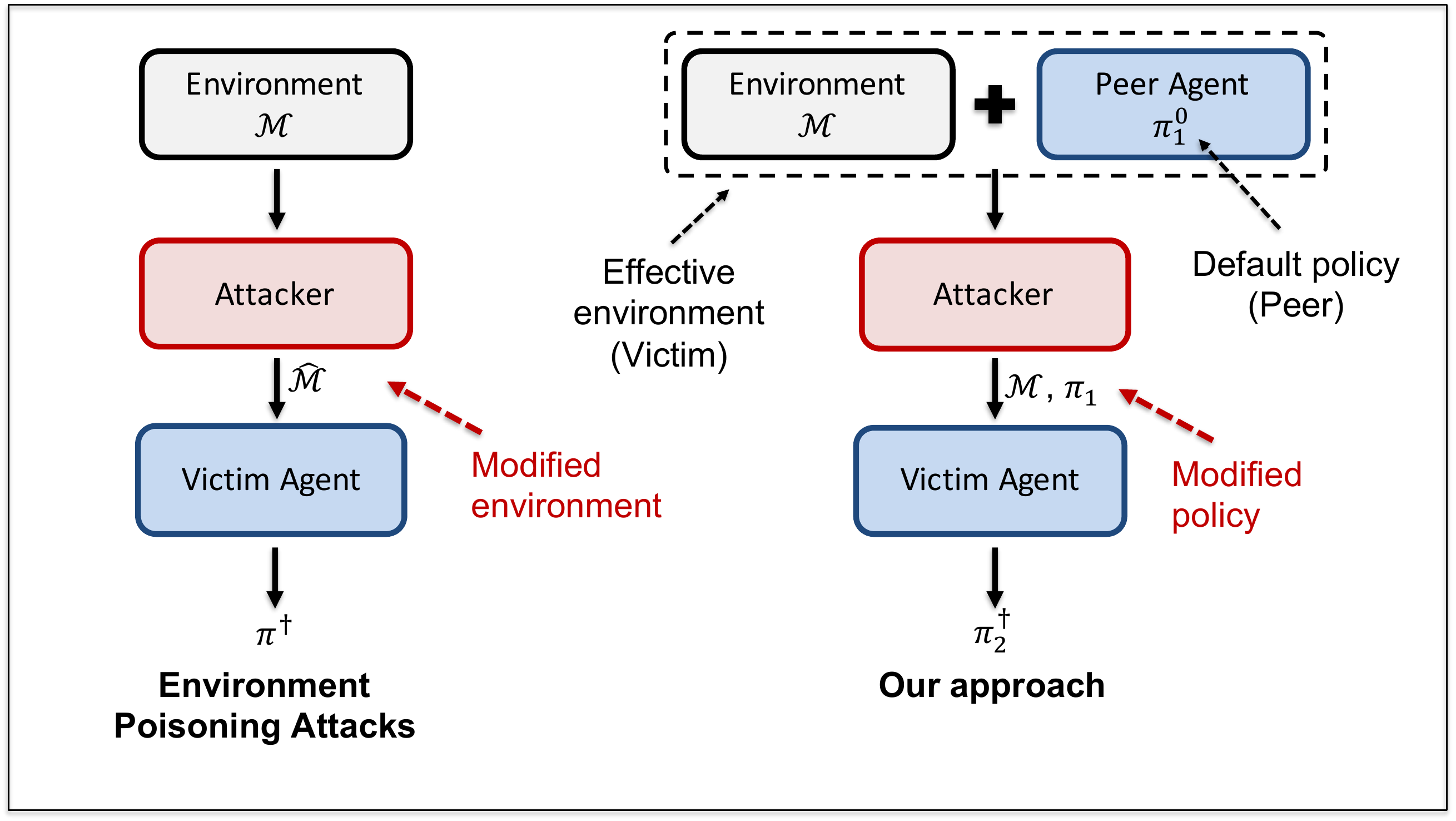}}
		\caption{Comparison to environment poisoning attacks}\label{fig.intro.tap.vs.epa}
	\end{subfigure}
    \caption{The figure compares our approach to the closely related prior work. Fig.  \ref{fig.intro.tap} illustrates the differences between our setting and test-time adversarial policies (from \cite{gleave2019adversarial}). Test-time adversarial policies are attacking a fixed victim (trained in self-play), whereas our approach attacks a victim during training phase. We consider an optimization framework based on bi-level optimization that minimizes the attack cost while ensuring that the victim's best response to our attack is to adopt target policy $\targetpi$. As shown in Fig. \ref{fig.intro.tap.vs.epa}, this optimization approach is similar to environment poisoning attacks (from \cite{ma2019policy,rakhsha2020policy,rakhsha2021policy}). 
    However, it differs from environment poisoning attacks in that the attack only modifies the default policy of the victim's peer, $\initpi$, but not the underlying environment $\mathcal M$. I.e, our approach implicitly poisons the effective environment of the victim.}
	\label{fig.intro}
\end{figure*}

Fig. \ref{fig.intro} illustrates the main aspects of the setting considered in this work. In this setting, the default policy of the victim's peer is fixed and the victim is trained to best respond to a corrupted version of the peer's  policy.\footnote{While the setting is similar to Stackelberg models where the peer agent commits its policy~(e.g., \cite{letchford2012computing}),
a critical difference is that the adversary can modify this policy. See also the related work section.} This is different from test-time adversarial policies where the victim is fixed and the adversary controls the victim's opponent/peer at test-time.
Our setting corresponds to a practical scenario in which an attacker controls the peer agent at training-time and executes a training-time adversarial policy instead of the peer's default policy. If the victim is trained offline, the attacker can corrupt the offline data instead, e.g., by executing training-time adversarial policies when the offline data is collected or by directly poisoning the data.
Note that direct access to the training process of the victim agent is not required to train an adversarial policy. It suffices that the victim agent approximately best respond to the adversarial policy. This is effectively the same assumption that prior work on environment poisoning attacks in offline RL adopts, where the attacker first manipulates the underlying environment, after which the victim agent optimizes its policy in the poisoned environment~\cite{ma2019policy,rakhsha2020policy,rakhsha2021policy}. 
\iftoggle{longversion}{
 
Fig. \ref{fig.intro} also shows how our setting compares to  environment poisoning target attacks. As explained in the figure, there are conceptual differences between the corresponding attack models. To see this more concretely, consider a simple single-shot two-agent setting where the peer can take actions $\{a, b \}$ and the victim can take actions $\{a, b, \text{no-op}\}$. In this setting, the victim receives reward of $1$ if its action matches the action the peer, $r \in (0, 1)$ if it takes $\text{no-op}$, and $0$ otherwise. Under the peer's default policy, which takes action $a$, the optimal policy of the victim is to take $a$. Now, suppose that an attacker controls the peer and wants to force action $\text{no-op}$ on the victim, so that $\text{no-op}$ is strictly optimal for the victim by some margin $\eps$. It is easy to show that this attack is feasible only if $r \ge 0.5 + \eps$ and only when the adversarial policy is non-deterministic. In contrast, under the default policy of the peer, reward poisoning attacks can force the target policy by simply increasing $r$ to $1 + \eps$. Nevertheless, training-time adversarial policies are arguably a more practical attack modality, and hence understanding their effectiveness and limitations is important. 
}
{
Fig. \ref{fig.intro} also shows how our setting compares to  environment poisoning target attacks. As explained in the figure, there are conceptual differences between the corresponding attack models.
}

To the best of our knowledge, this is the first work that studies adversarial policies for training-time attacks. Our contributions are summarized below:
\begin{itemize}
\item We introduce a novel optimization framework for studying an implicit form of targeted poisoning attacks in a two-agent RL setting, where an attacker manipulates the victim agent by controlling its peer at training-time. 
\item We then analyze computational aspects of this optimization problem. 
We show that it is NP-hard to decide whether the optimization problem is feasible, i.e., whether it is possible to force a target policy. This is in contrast to general environment poisoning attacks that manipulate both rewards and transitions~\cite{rakhsha2021policy}, which are always feasible.\footnote{
Attacks that poison only rewards are always feasible.
Attacks that poison only transitions may not be always feasible since transitions cannot be arbitrarily changed~\cite{rakhsha2020policy}. 
Similarly, when rewards are bounded, \citet{ijcai2022-471} show that reward poisoning attacks may be infeasible. 
The computational complexity of such attacks have not been formally analyzed.
}  
\item We further analyze the cost of the optimal attack, providing a lower and an upper bound on the cost of the attack, as well as a sufficient condition for the feasibility of the attack problem. To obtain the lower bound, we follow the theoretical analysis in recent works on environment poisoning attacks \cite{ma2019policy,rakhsha2020policy,rakhsha2021policy} that establish similar lower bounds for the single-agent setting, and we adapt it to the two-agent setting of interest. The theoretical analysis that yields the upper bound does not follow from prior work, since the corresponding proof techniques cannot be directly applied to our setting. 
\item We propose two algorithmic approaches for finding an efficient adversarial policy. The first one is a model-based  approach with tabular policies which outputs a feasible solution to the attack problem, if one is found. It is based on a conservative policy search algorithm that performs efficient policy updates that account for the cost of the attack, while aiming to minimize the margin by which the constraints of the attack problem are not satisfied. The second one is a model-free approach with parametric/neural policies, which is based on a nonconvex-nonconcave minimax optimization.   
\item Finally, we conduct extensive experiments to demonstrate the efficacy of our algorithmic approaches. \iftoggle{longversion}{Our experimental test-bed is based on or inspired by environments from prior work on poisoning attacks and single-agent RL (which we modify and make two-agent) and standard multi-agent RL environments (which we modify to fit our problem setting). The experimental results showcase the utility of our algorithmic approaches in finding cost-efficient adversarial policies. }{}  
\end{itemize}
\iftoggle{longversion}{These results complement those on environment poisoning attacks, demonstrating the effectiveness of training time-adversarial policies.}{}

\subsection{Related Work}

{\bf Adversarial Attacks in ML.}  
Adversarial attacks on machine learning models have been extensively studied by prior work. We recognize two main attack approaches on machine learning: test-time attacks~\cite{biggio2013evasion,szegedy2014intriguing, nguyen2015deep, moosavi2016deepfool, papernot2017practical}, which do not alter a learning model, but rather they fool the model by manipulating its input, and  training-time or data poisoning attacks~\cite{biggio2012poisoning,xiao2012adversarial,mei2015using,xiao2015feature,li2016data}, which manipulate a learning model by, e.g., altering its training points. We also mention backdoor attacks~\cite{liu2017neural,gu2017badnets,chen2017targeted}, that hide a trigger in a learned model, which can then be activated at test-time. 

{\bf Adversarial Attacks in RL.}  Needless to say, such attack strategies have also been studied in RL.  \cite{huang2017adversarial,kos2017delving,lin2017tactics,behzadan2017whatever,sun2020stealthy} consider efficient test-time attacks on agents' observations. In contrast to this line of work, we consider training-time attacks which are not based on state/observation perturbations. \cite{yang2019design,kiourti2020trojdrl,wang2021stop} consider backdoor attacks on RL policies. These works are different in that backdoor triggers affect the victim's observations; our attack model influences the victim's transitions and rewards. Poisoning attacks in single-agent RL have been studied under different poisoning aims: attacking rewards \cite{ma2019policy,rakhsha2020policy,ijcai2022-471}, attacking transitions \cite{rakhsha2020policy}, attacking both rewards and transition \cite{rakhsha2021policy}, attacking actions \cite{liu2021provably}, or attacking a generic observation-action-reward tuple \cite{sun2020vulnerability}. Reward poisoning attacks have also been studied in multi-agent RL~\cite{wu2022reward}. In contrast to such poisoning attacks, our attack model does not directly poison any of the mentioned poisoning aims. It instead indirectly influences the victim's rewards and transitions. 
This work, therefore, complements prior work on poisoning attacks in RL and adversarial policies, as already explained. 

{\bf Other Related Work.} 
We also mention closely related work on robustness to adversarial attacks and settings that have similar formalisms.
Much of the works on robustness to these attacks study robustness to test-time attacks~\cite{pattanaik2017robust,zhang2020robust,zhang2021robust,wu2021crop} and, closer to this paper, poisoning attacks~\cite{lykouris2021corruption,zhang2021robust-icml,zhang2022corruption,wu2021copa,kumar2021policy,banihashem2021defense}. Out of these, \cite{banihashem2021defense} has the formal setting that is the most similar to ours, focusing on defenses against targeted reward poisoning attacks.  
Our setting is also related to stochastic Stackelberg games and similar frameworks~\cite{vorobeychik2012computing,letchford2012computing,dimitrakakis2017multi} in that we have an attacker who acts as a leader that aims to minimize its cost, while accounting for a rational follower (victim) that optimizes its return. 
However, in our framework, the cost of the attack is not modeled via a reward function, while the attack goal of forcing a target policy is a hard constraint. Hence, the computational intractability results for Stackelberg stochastic games do not directly apply to our setting.
Nonetheless, the reduction that we use to show our NP-hardness result is inspired by the proofs of the hardness results in \cite{letchford2012computing}. 
Finally, we also mention the line of work on policy teaching~\cite{zhang2008value,zhang2009policy,banihashem2022admissible}, whose formal settings are quite similar to those of targeted reward poisoning attacks~\cite{ma2019policy,rakhsha2021policy}.

\section{Implicit Poisoning Attacks}\label{sec.setting}

In this section, we formalize the attack problem of interest: adversarial policies for training-time attacks. 

\subsection{Multi-Agent Environment}

\textbf{Environment model. }
We study a reinforcement learning setting formalized by a two-agent Markov Decision Process $\cM = (\{ \cA, \cL \}, S, A, P, R_{\cL}, \gamma, \sigma)$, where $\cA$ is the index of an agent controlled by an attacker, $\cL$ is the index of a learning agent (victim) under attack, $S$ is the state space, $A = A_{\cA} \times A_{\cL}$ is the joint action space with $A_{\cA}$ and $A_{\cL}$ defining the action spaces of agents $\cA$ and $\cL$ respectively,  $P: S \times A \times S \rightarrow [0, 1]$ is the transition model, $R_{\cL}: S \times A \rightarrow \mathds R$ is the reward function of the learner,  $\gamma$ is the discount factor, and $\sigma$ is the initial state distribution. We denote the probability of transitioning to state $s'$ from $s$ by $P(s, a_{\cA}, a_{\cL}, s')$ and the reward obtained in state $s$ by $R_{\cL}(s, a_{\cA}, a_{\cL})$, where $a_{\cA}$ and $a_{\cL}$ are the actions of agent $\cA$ and agent $\cL$ taken in state $s$. 
In our formal treatment of the problem, we will primarily focus on finite state and action spaces $S$ and $A$.

\textbf{Policies.}
The policy of agent $\cA$ is denoted by $\pola$ and we assume that it comes from the set of stochastic stationary policies $\Pola$. 
That is, policy $\pola$ is mapping $\pola: S \rightarrow \mathcal P(A_{\cA})$, where $\mathcal P(A_\cA)$ is the probability simplex over $A_{\cA}$. 
Analogously, the policy of agent $\cL$ is denoted by $\poll$. A stochastic stationary policy $\poll \in \Poll$ 
is a mapping $\poll: S \rightarrow \mathcal P(A_{\cL})$. The set of all deterministic policies in $\Poll$ is denoted by $\Polld = \{ \poll \in \Poll \text{ s.t. } \poll(s, a_{\cL}) \in \{0, 1\} \}$. 

\textbf{Score \& Occupancy Measures.}
We further consider standard quantities. The (normalized) expected discounted return 
 of agent $\cL$ under policies $\pola$ and $\poll$ is defined as 
$$\score_{\cL}= (1 - \gamma) \cdot \expct{\sum_{t = 1}^{\infty} \gamma^{t-1} \cdot R_{\cL}(s_t, a_{\cA, t}, a_{\cL, t}) | \sigma, \pola, \poll},$$
where the expectation is taken over trajectory $(s_1, a_{\cA, 1}, a_{\cL, 1}, ...)$ obtained by executing policy $\pi$ starting in a state sampled from $\sigma$. The return $\score_{\cL}^{\pola, \poll}$ is equal to 
\begin{align}\label{eq.return_occupance}
    \score_{\cL}^{\pola, \poll} = 
    \sum_{s, a_{\cA}, a_{\cL}} \occupancy^{\pola, \poll}(s, a_{\cA}, a_{\cL}) \cdot R_{\cL}(s, a_{\cA}, a_{\cL}),
\end{align}
where $\occupancy^{\pola, \poll}(s, a_{\cA}, a_{\cL}) = \occstate^{\pola, \poll}(s) \cdot \pola(s, a_{\cA}) \cdot \poll(s, a_{\cL})$ is the state-action occupancy measure, and  $\occstate^{\pola, \poll}$ 
is the state occupancy measure, i.e., 
$\occstate^{\pola, \poll}(s) = (1 - \gamma) \cdot \expct{\sum_{t = 1}^{\infty} \gamma^{t-1} \cdot \ind{s_t = s}|\sigma, \pola, \poll}$.
Note that we do not assume that the underlying MDP is ergodic, i.e., we allow that $\occstate^{\pola, \poll}(s) = 0$ for some states $s$. Finally, we also define value function $V^{\pola, \poll}: S \rightarrow \mathds R$ as $$V^{\pola, \poll}(s) = \expct{\sum_{t = 1}^{\infty} \gamma^{t-1} \cdot R_{\cL}(s_t, a_{\cA, t}, a_{\cL, t}) | s_1 = s, \pola, \poll}.$$

\begin{remark}\label{rm.notation}
To simplify the notation, we often abbreviate summations, e.g., the summation over $a_{\cA}$ and $a_{\cL}$ can be replaced by $R_{\cL}(s, \pola, \poll)$. Furthermore, since in our formal treatment of the problem we focus on a tabular setting with finite state and action spaces, we in part utilize vector notation when convenient. For example, $R_{\cL}$ can be thought of as a vector with $|S|\cdot |A|$ components.
\end{remark}

\subsection{Problem Statement}

We focus on an attack model that manipulates a {\em default} policy of the victim's peer, $\initpi$, to force a target policy $\targetpi$.  
Following prior work on targeted policy attacks \cite{ma2019policy,rakhsha2020policy,rakhsha2021policy}, we first consider an optimization problem which models the attack goal as a hard constraint with {\em deterministic} $\targetpi$ and a victim agent that adopts an approximately optimal deterministic policy\footnote{Similar learner models have been considered in prior work that analyzes a dual to optimal reward poisoning attacks~\cite{banihashem2022admissible}.}:
\begin{align*}
    \label{prob.instance_1}\tag{P1}
    &\min_{\pola} \quad \Cost(\pola, \initpi) & \mbox{ s.t. } 
    \quad \Optl(\pola) \subseteq \Pi_2^{\dagger}(\pola).
\end{align*} 
Here, $\Pi_2^{\dagger}(\pola)$ is a set of policies $\poll$ that are equal to $\targetpi$ on visited states, i.e.,  $\poll(s, a_{\cL}) = \targetpi(s, a_{\cL})$ when $\occstate^{\pola, \targetpi}(s) > 0$. Furthermore, $\Optl(\pola)$ is the set of approximately optimal deterministic policies $\poll$ given $\pola$, i.e.,  $\Optl(\pola) = \{ \poll \in \Polld \text{ s.t. } \score^{\pola, \poll} > \score^{\pola, \optpil} - \eps  \}$, where $\optpil \in \argmax_{\poll} \score_{\cL}^{\pola, \poll}$, while $\eps \ge 0$ is a parameter that controls the sub-optimality of the learner.
As standard in this line of work, in our characterization results of \eqref{prob.instance_1}, we focus on a norm-based attack cost function:
\begin{align*}
\Cost(\pola, \initpi) = \left ( \sum_{s}\left (\sum_{a_\cA} | \pola (s, a_\cA) - \initpi(s, a_\cA) | \right )^{\frac{1}{p}} \right )^p,
\end{align*}
where $p \ge 1$. In the next sections, we formally analyze \eqref{prob.instance_1} and propose an algorithm for solving it. We also consider an optimization problem that relaxes the attack goal, but is more amenable to optimization with deep RL and allows {\em stochastic} target policies $\targetpi$:
\begin{align}
\label{prob.instance_2}\tag{P2}
    \min_{\theta} \max_{\phi} \mathcal L_I(\theta, \initpi) - \lambda \cdot \left[ \score_{\cL}^{\polap, \targetpi} -  \score_{\cL}^{\polap, \pollp} \right].
\end{align}
Here, $\polap$ and $\pollp$ are parametric policies that respectively correspond to $\pola$ and $\poll$, and $\mathcal L_I(\theta, \initpi)$ is an imitation learning loss function. The imitation learning loss corresponds to the cost of the attack: we instantiate it with standard cross-entropy imitation objective for deterministic $\initpi$ and Kullback–Leibler divergence for stochastic $\initpi$.
We further motivate \eqref{prob.instance_2} in the next sections.

\section{Characterization Results}\label{sec.characterization}

In this section, we provide a theoretical treatment of the optimization problem \eqref{prob.instance_1} akin to those from prior work on poisoning attacks in RL ~\cite{ma2019policy,rakhsha2020policy,rakhsha2021policy}. We start by analyzing the complexity of the optimization problem, followed by the analysis that provides bounds on the optimal value of \eqref{prob.instance_1}. The full proofs of  our results from this section are provided in the appendix. 

\subsection{Computational Complexity}

To study the properties of the optimization problem \eqref{prob.instance_1}, let us more explicitly write its constraint using the following set of inequalities:
\begin{align*}
    \score_2^{\pola, \targetpi} \ge \score_2^{\pola, \poll} + \eps, \quad \forall \poll \in \Polld \backslash \Pi_2^{\dagger}(\pola).
\end{align*}
At the first glance, the optimization problem \eqref{prob.instance_1} appears to be computationally challenging: 
the number of inequality constraints
is exponential. 
On the other hand, Lemma 1 from \cite{rakhsha2021policy} suggests that it suffices to consider {\em neighbor} policies of the target policy to determine whether a solution is feasible---a neighbor policy $\pi\{s,a\}$ of policy $\pi$ is equal to $\pi$ in all states except in $s$, where it is defined as $\pi\{s,a\}(s, a) = 1.0$. However, given the differences between the setting of \citet{rakhsha2021policy} and the setting of this paper, in particular, because the latter considers a two-agent and possibly non-ergodic MDP environment, this result does not directly apply.
In the appendix, we prove a couple of results akin to Lemma 1 from \cite{rakhsha2021policy}, but for the setting of interest. These results allow us to reduce the number of constraints one ought to account for when testing the feasibility of solution $\pola$. For example, if the MDP environment is ergodic, $\pola$ is a feasible solution iff 
\begin{align}\label{eq.sufficient_condition.simple}
    \score_2^{\pola, \targetpi} \ge \score_2^{\pola, \targetpi \{s,a_\cL \}} +  \eps \quad \forall s, a \text{ s.t. } \targetpi(s, a_\cL ) = 0.
\end{align}

While such results are useful as they reduce the number of constraints one ought to account for when testing the feasibility of solution $\pola$, they do not necessarily imply that the optimization problem is easy to solve. The difficulty lies in the quadratic form of the constraints in Eq. \eqref{eq.sufficient_condition.simple}. Namely, as can be seen from Eq. \eqref{eq.return_occupance}, they depend on $\pola$ through policy $\pola$ itself but also through the state occupancy measure $\occstate^{\pola, \cdot }$. 
Our next result verifies this intuition. 

\begin{theorem}\label{thm.computational_complexity}
It is NP-hard to decide if the optimization problem \eqref{prob.instance_1} is feasible, i.e., whether there exists a solution $\pola$ s.t. the constraints of the optimization problem are satisfied.
\end{theorem}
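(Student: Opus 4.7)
The plan is to prove NP-hardness via a polynomial-time reduction from 3-SAT. Given an instance $\varphi$ with variables $x_1, \dots, x_n$ and clauses $C_1, \dots, C_m$, I would construct a two-agent MDP $\cM(\varphi)$, default policy $\initpi$, target policy $\targetpi$, and suboptimality parameter $\eps > 0$ such that \eqref{prob.instance_1} is feasible if and only if $\varphi$ is satisfiable. Following the spirit of the Stackelberg hardness proofs in \cite{letchford2012computing}, the reduction combines two kinds of gadgets: variable gadgets, where the attacker's action in a dedicated state $s_{x_i}$ encodes the truth value of $x_i$ via two distinguished actions $a_i^T, a_i^F$; and clause gadgets, where the neighbor-policy inequality \eqref{eq.sufficient_condition.simple} at a dedicated state $s_{C_j}$ holds if and only if at least one literal of $C_j$ is set to true by the attacker's choices at the corresponding variable gadgets. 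Both are glued into a single MDP with an initial distribution that gives positive occupancy to every variable and clause state under $(\pola, \targetpi)$, so that the attacker's choices at variable states always register in the feasibility constraints.

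I would then argue correctness in both directions. For soundness, given a satisfying assignment $\tau$, define $\pola(s_{x_i}, a_i^T) = \tau(x_i)$ and have $\pola$ match $\initpi$ elsewhere; a direct computation at each clause gadget shows that $\score_{\cL}^{\pola, \targetpi} \ge \score_{\cL}^{\pola, \targetpi\{s_{C_j}, a\}} + \eps$ holds with a margin proportional to the number of literals of $C_j$ set to true, which is at least one. For completeness, I would first show that any feasible $\pola$ must be nearly integral at every variable state, using an auxiliary rounding gadget whose own feasibility inequality is violated whenever $\pola(s_{x_i}, \cdot)$ is sufficiently fractional; then rounding each attacker policy to a Boolean assignment $\tau$ preserves feasibility at every clause gadget, so $\tau$ must satisfy $\varphi$.

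The main technical obstacle I anticipate is the quadratic coupling between the attacker's policy $\pola$ and the state occupancy measure $\occstate^{\pola, \cdot}$ that appears in the feasibility inequalities through \eqref{eq.return_occupance}. On the one hand, this coupling is precisely what leaves room for encoding 3-SAT; on the other hand, it complicates forcing near-integrality in the completeness direction, because a stochastic $\pola$ can redistribute occupancy in ways that simultaneously relax several clause constraints. I would address this by calibrating the rounding gadget so that any sufficiently fractional attacker policy at $s_{x_i}$ strictly decreases the left-hand side of its own feasibility inequality by more than $\eps$, effectively decoupling variable-level integrality from clause satisfaction. Choosing $\eps$ together with all rewards and transition probabilities as rationals of polynomial bit complexity then ensures the reduction runs in polynomial time and produces a polynomial-size instance of \eqref{prob.instance_1}.
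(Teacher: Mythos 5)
Your high-level plan (polynomial reduction from 3-SAT, inspired by the Stackelberg hardness proofs) matches the paper's, but the completeness direction has a genuine gap, and it stems from a design choice that diverges from the paper's construction. You insist that the initial distribution give positive occupancy to \emph{every} variable and clause state under $(\pola,\targetpi)$, so that fractional attacker policies can give each clause ``partial credit''; you then need an auxiliary ``rounding gadget'' whose feasibility inequality is violated whenever $\pola(s_{x_i},\cdot)$ is sufficiently fractional. You never construct this gadget, and it is unclear that it can exist in the form you describe: if $s_{x_i}$ is visited once along each trajectory and transitions out of the gadget do not depend on the attacker's action there, the relevant score difference $\score_{\cL}^{\pola,\targetpi}-\score_{\cL}^{\pola,\targetpi\{s,a\}}$ is \emph{affine} in $\pola(s_{x_i},\cdot)$, so the constraint cannot be satisfied at both vertices of the simplex yet violated in the interior. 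Making it genuinely nonlinear in the local policy requires routing occupancy back through the gadget or making the constraint's own occupancy depend on that policy, and you give no such mechanism. Without it, the standard LP-relaxation failure mode of SAT (e.g., all variables at $1/2$ satisfying every clause constraint) breaks the completeness direction.

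The paper avoids integrality entirely by a different mechanism, which is worth internalizing. In its construction the attacker's action at a clause state $s_{C_i}$ \emph{selects which literal state is visited}, and the feasibility constraints only bind at states with $\occstate^{\pola,\targetpi}(s)>0$. The constraint at a visited positive-literal state $s_{x_j}$ forces the attacker-controlled reward at $s_{val(x_j)}$ to exceed $1$, while the constraint at a visited negated-literal state $s_{\bar x_j}$ forces that same reward below $0$; since the reward is a convex combination over $\pola(s_{val(x_j)},\cdot)$, these two requirements are mutually exclusive for \emph{any} stochastic policy. Hence no rounding is needed: any feasible $\pola$, fractional or not, induces a consistent truth assignment (set $x_j$ true iff the value-state reward exceeds $1$), and positivity of every clause state's occupancy guarantees each clause selects at least one satisfiable literal. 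If you want to salvage your route, the cleanest fix is to adopt this occupancy-steering idea rather than trying to force near-integrality: let consistency be enforced by a pair of contradictory constraints that can never be simultaneously active, and let the attacker's freedom to zero out occupancy of one member of each pair encode the assignment.
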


The proof of the claim can be found in the appendix, and is based on a polynomial time reduction of the Boolean 3-SAT problem to our optimization framework. Hence, the tractability of the optimization problem \eqref{prob.instance_1} would imply that NP=P. 
To conclude, despite the similarities between our implicit poisoning attack model and the general environment poisoning attacks from \cite{rakhsha2021policy}, which are always feasible, determining the feasibly of implicit poisoning attacks is computationally challenging. 

\subsection{Bounds on the Optimal Value}\label{sec.bounds_section}

\textbf{Lower Bound.} Next, we aim to bound the value of the optimal solution. We first focus on a lower bound on the cost of optimal solution. In particular, we follow the recent line of work on poisoning attacks in RL~\cite{ma2019policy,rakhsha2020policy,rakhsha2021policy}, and adapt their proof techniques to our problem setting in order to establish a lower bound on the cost of the optimal attack. 
To state the main theorem, we define a state-action dependent quantity $\bar \chi_{\eps'}(s, a)$ similar to the one from~\cite{rakhsha2021policy}, but adapted to the setting of the paper. In particular, we define\footnote{Note that $[x]^+ = \max(0, x)$.} 
$\overline \chi_{\eps'}(s, a_{\cL}) = \pos{\frac{\score_{\cL }^{\initpi, \targetpi\{s, a_{\cL}\}} - \score_{\cL }^{\initpi, \targetpi} + \eps'}{ \occstate^{\initpi, \targetpi\{s,a_{\cL}\}}(s)}}$
if $\occstate^{\pola, \targetpi}(s) > 0$ for all $\pola$
and $\targetpi(s, a_{\cL}) = 0$, while $\overline \chi_{\eps'}(s, a_{\cL}) = 0$ otherwise.\footnote{The first condition can be verified for any given state $s$ by optimizing over $\pola$ a reward function that is strictly negative in state $s$, and is equal to $0$ otherwise. The condition is satisfied iff the optimal value is $0$. In general, the condition holds if the underlying Markov chain is ergodic for $\targetpi$ and every policy $\pola$ (see Theorem \ref{thm.upper_bound_2}).} $\overline \chi_{\eps'}(s, a)$ is a measure of the utility gap between the target policy $\targetpi$ and the neighbor policy $\targetpi\{s,a\}$ given the default policy $\initpi$ and some offset $\eps'$. Together with $R_{\cL}$ and $V^{\initpi, \targetpi}$, $\overline \chi_{\eps'}$ can be used to obtain the following lower bound.  

\begin{theorem}\label{thm.cost_lower_bound}
    The attack cost of any solution to the optimization problem \eqref{prob.instance_1}, if it exists, satisfies 
    \begin{align*}
        \Cost(\pola, \initpi) \ge \frac{1-\gamma}{2} \cdot \frac{\norm{\overline \chi_{0}}_{\infty} }{\norm{R_2}_{\infty} + \gamma \cdot \norm{V^{\initpi, \targetpi}}_{\infty}}.
    \end{align*}
\end{theorem}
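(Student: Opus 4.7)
The plan is to localize the feasibility constraint to a single state: pick $(s^*, a_\cL^*) \in \argmax_{s, a_\cL} \overline\chi_0(s, a_\cL)$ and let $\tilde\pi := \targetpi\{s^*, a_\cL^*\}$. We may assume $\norm{\overline\chi_0}_\infty > 0$, since the bound is otherwise trivial. By the definition of $\overline\chi_0$ this forces $\targetpi(s^*, a_\cL^*) = 0$ and $\occstate^{\pola, \targetpi}(s^*) > 0$ for every $\pola$; because $\tilde\pi$ coincides with $\targetpi$ at every state other than $s^*$, a first-hitting-time argument shows $\occstate^{\pola, \tilde\pi}(s^*) > 0$ as well.

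Next, I would invoke the reduction to neighbor policies (a non-ergodic refinement of Eq.~\eqref{eq.sufficient_condition.simple}, valid under the above positivity condition) to conclude that feasibility of $\pola$ implies $\score_\cL^{\pola, \targetpi} \ge \score_\cL^{\pola, \tilde\pi}$. Since $\targetpi$ and $\tilde\pi$ differ only at $s^*$, one application of the performance-difference lemma on the learner's side (keeping $\pola$ fixed) shows
\begin{align*}
\score_\cL^{\pola, \targetpi} - \score_\cL^{\pola, \tilde\pi} = \occstate^{\pola, \tilde\pi}(s^*) \bigl[Q^{\pola, \targetpi}(s^*, a^\dagger) - Q^{\pola, \targetpi}(s^*, a_\cL^*)\bigr],
\end{align*}
where $a^\dagger$ is the target action at $s^*$ and, with mild abuse of notation, $Q^{\pi_\cA, \pi_\cL}(s, a_\cL) := \sum_{a_\cA} \pi_\cA(s, a_\cA)\bigl[R_\cL(s, a_\cA, a_\cL) + \gamma \sum_{s'} P(s' \mid s, a_\cA, a_\cL) V^{\pi_\cA, \pi_\cL}(s')\bigr]$. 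Thus feasibility reduces to $Q^{\pola, \targetpi}(s^*, a^\dagger) \ge Q^{\pola, \targetpi}(s^*, a_\cL^*)$, and the identical identity with $\pola$ replaced by $\initpi$ rewrites $\overline\chi_0(s^*, a_\cL^*) = Q^{\initpi, \targetpi}(s^*, a_\cL^*) - Q^{\initpi, \targetpi}(s^*, a^\dagger)$. Subtracting the two yields
\begin{align*}
\norm{\overline\chi_0}_\infty \le \bigl[Q^{\initpi, \targetpi}(s^*, a_\cL^*) - Q^{\pola, \targetpi}(s^*, a_\cL^*)\bigr] + \bigl[Q^{\pola, \targetpi}(s^*, a^\dagger) - Q^{\initpi, \targetpi}(s^*, a^\dagger)\bigr].
\end{align*}

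The main technical hurdle is to upper bound each bracketed $Q$-difference by $\tfrac{C \max_s \Delta(s)}{1 - \gamma}$, with $C := \norm{R_\cL}_\infty + \gamma \norm{V^{\initpi, \targetpi}}_\infty$ and $\Delta(s) := \sum_{a_\cA} \abs{\pola(s, a_\cA) - \initpi(s, a_\cA)}$; the key design choice is to expand around the baseline $(\initpi, \targetpi)$, so the norm constant involves $V^{\initpi, \targetpi}$ exactly as the theorem demands, rather than $V^{\pola, \targetpi}$ or $V^{\initpi, \tilde\pi}$. I would carry this out via a simulation-lemma-style decomposition: one step of Bellman's equation at $s^*$ expresses each $Q$-difference as a local term $\sum_{a_\cA}[\pola - \initpi](s^*, a_\cA)\,[R_\cL(s^*, a_\cA, a_\cL) + \gamma P V^{\initpi, \targetpi}]$, bounded in magnitude by $C \Delta(s^*)$, plus a recursive term that propagates $\eta := V^{\pola, \targetpi} - V^{\initpi, \targetpi}$ along one transition. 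The value-function residual $\eta$ itself obeys the linear equation $\eta = b + \gamma P^{\pola, \targetpi} \eta$ whose inhomogeneity satisfies $\norm{b}_\infty \le C \max_s \Delta(s)$, so the contraction $\norm{(I - \gamma P^{\pola, \targetpi})^{-1}}_\infty \le (1-\gamma)^{-1}$ gives $\norm{\eta}_\infty \le \tfrac{C \max_s \Delta(s)}{1 - \gamma}$. Summing the two contributions yields the per-term bound, hence $\norm{\overline\chi_0}_\infty \le \tfrac{2 C \max_s \Delta(s)}{1-\gamma}$; since $\Cost(\pola, \initpi) \ge \max_s \Delta(s)$ for every $p \ge 1$, rearranging delivers the claimed lower bound.
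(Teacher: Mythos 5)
Your proposal is correct and follows essentially the same route as the paper's proof: you express both the feasibility constraint and $\overline\chi_{0}(s^*,a_\cL^*)$ as $Q$-gaps at the maximizing state via the performance-difference lemma, combine them into a sum of two $Q$-differences between $\pola$ and $\initpi$ (the paper writes this as $\norm{Q^{\targetpi}_{\pola}-Q^{\targetpi}_{\initpi}}_\infty\ge\tfrac12(\max-\min)$, which is the same manipulation), and bound each difference by $\tfrac{1}{1-\gamma}(\norm{R_\cL}_\infty+\gamma\norm{V^{\initpi,\targetpi}}_\infty)\max_s\Delta(s)$ via the same Bellman-equation decomposition around the baseline $(\initpi,\targetpi)$ that the paper packages as Lemma~\ref{lm.cost_lower_bound}. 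Your explicit resolvent bound on the residual $\eta$ is equivalent to the paper's rearrangement of the self-bounding inequality in $\norm{Q^{\targetpi}_{\pola}-Q^{\targetpi}_{\initpi}}_\infty$, so no substantive difference remains.
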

The lower bound in Theorem \ref{thm.cost_lower_bound} is similar to the corresponding lower bound for general environment poisoning attacks~\cite{rakhsha2021policy}, albeit not being fully comparable given the differences between the settings and the definitions of $\overline \chi$. One notable difference is that the bound in Theorem \ref{thm.cost_lower_bound} additionally depends on the reward vector $R_2$ because the adversary only 
influences rewards through its actions.

\textbf{Upper Bound.}
Compared to environment poisoning attacks, providing an interpretable upper bound in our setting is more challenging since the attack model of this paper cannot in general successfully force a target policy $\targetpi$. This is in stark contrast to, e.g., reward poisoning attacks, which remain feasible even under the restriction that rewards obtained by following $\targetpi$ are not modified. Additionally, as per Theorem \ref{thm.computational_complexity}, the feasibility of the attack problem is computationally intractable. Due to the latter challenge, we consider a special case when transitions are independent of policy $\pola$ (i.e., $P(s, a_{\cA}, a_{\cL}) = P(s, a_{\cA}', a_{\cL})$ for all $a_{\cA}$ and $a_{\cA}'$) and the Markov chain induced by $\targetpi$ and any policy $\pola$ is ergodic. In the appendix, we show that \eqref{prob.instance_1} can be efficiently solved in this case.

To state our formal result, we first define two quantities, $\alpha_2^{\pola}(s,a) = \score_2^{\pola, \targetpi} - \score_2^{\pola, \targetpi\{s, a\}}$,  and $\alpha_2^* = \sup_{\pola} \min_{s, a} \alpha_2^{\pola}(s,a)$. Intuitively, $\alpha_2^{\pola}$ measures the utility gap between $\targetpi$ and its neighbor policy $\targetpi\{s, a\}$ for a given policy $\pola$, whereas $\alpha_2^*$ denotes the optimal guaranteed gap that can be achieved. Note that there exists $\pola^{*}$ s.t.  $\alpha_2^{\pola^{*}} = \alpha_2^*$, and in the appendix we provide a linear program for finding $\pola^{*}$.
\begin{theorem}\label{thm.upper_bound_2}
Assume that $P(s, a_{\cA}, a_{\cL}) = P(s, a_{\cA}', a_{\cL})$ for all $a_{\cL}$ and $a_{\cA}'$, and that for $\targetpi$ and every policy $\pola$ the underlying Markov chain is ergodic, i.e., $\occstate^{\pola, \targetpi}(s)>0$ for all $\pola$. 
If $\alpha_2^* \ge \eps$, the optimization problem \eqref{prob.instance_1} is feasible and the cost of an optimal solution satisfies
\begin{align*}
         \Cost(\pola, \initpi) \le 2 \cdot \norm{\frac{\overline \chi_{\eps}}{\chi_2^*  + \overline \chi_{\eps}}}_{\infty} \cdot |S|^{1/p} 
\end{align*}
     with the element-wise division (equal to $0$ if $\chi_{\eps}(s, a_\cL) = \chi_2^*(s,a_\cL) = 0$), where $ \chi_{2}^*(s, a_\cL) = \frac{\alpha^{*}_2(s,  a_\cL) - \eps}{\occstate^{\initpi, \targetpi \{ s, a_\cL \}}(s)}$.
\end{theorem}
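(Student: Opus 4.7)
The plan is to exhibit an explicit feasible solution as a convex combination of $\initpi$ and the policy $\pola^{*}$ that attains $\alpha_2^{\pola^{*}} = \alpha_2^{*}$ (guaranteed by the linear program mentioned right before the theorem), and then read off the upper bound directly from the mixing weight.

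The key structural step is to use the hypothesis $P(s, a_\cA, a_\cL) = P(s, a_\cA', a_\cL)$: under it the state occupancy $\occstate^{\pola, \poll}(s)$ depends only on $\poll$, not on $\pola$. Substituting this into Eq.~\eqref{eq.return_occupance} shows that for every fixed victim policy $\poll$ the return $\score_2^{\pola, \poll}$ is \emph{affine} in $\pola$, and hence so is the gap $\alpha_2^{\pola}(s, a_\cL) = \score_2^{\pola, \targetpi} - \score_2^{\pola, \targetpi\{s, a_\cL\}}$. For the candidate $\pola_\beta := (1-\beta)\initpi + \beta \pola^{*}$ with $\beta \in [0,1]$, this yields $\alpha_2^{\pola_\beta}(s, a_\cL) = (1-\beta)\,\alpha_2^{\initpi}(s, a_\cL) + \beta\,\alpha_2^{\pola^{*}}(s, a_\cL)$.

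Next, by ergodicity the feasibility of $\pola_\beta$ in \eqref{prob.instance_1} reduces, via the neighbor-policy characterization recalled in Eq.~\eqref{eq.sufficient_condition.simple}, to checking $\alpha_2^{\pola_\beta}(s, a_\cL) \ge \eps$ for every $(s, a_\cL)$ with $\targetpi(s, a_\cL) = 0$. When $\alpha_2^{\initpi}(s, a_\cL) \ge \eps$ the constraint is vacuous; otherwise rearranging the affine expression gives $\beta \ge \frac{\eps - \alpha_2^{\initpi}(s, a_\cL)}{\alpha_2^{\pola^{*}}(s, a_\cL) - \alpha_2^{\initpi}(s, a_\cL)}$. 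Dividing numerator and denominator by $\occstate^{\initpi, \targetpi\{s, a_\cL\}}(s) > 0$ (positive by ergodicity) and identifying the two resulting terms with $\overline\chi_\eps(s, a_\cL)$ and $\chi_2^{*}(s, a_\cL)$ rewrites this threshold as $\frac{\overline\chi_\eps(s, a_\cL)}{\chi_2^{*}(s, a_\cL) + \overline\chi_\eps(s, a_\cL)}$, with the convention $0/0 = 0$ exactly covering the vacuous cases. Taking the maximum over $(s, a_\cL)$ gives the uniform choice $\beta^{*} := \norm{\frac{\overline\chi_\eps}{\chi_2^{*} + \overline\chi_\eps}}_\infty$; the assumption $\alpha_2^{*} \ge \eps$ guarantees $\chi_2^{*} \ge 0$ and hence $\beta^{*} \le 1$, so $\pola_{\beta^{*}}$ is a bona fide stochastic policy and is feasible, proving feasibility of \eqref{prob.instance_1}.

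The cost bound is then immediate: since $\initpi$ and $\pola^{*}$ are both probability distributions over $A_\cA$ per state, $\sum_{a_\cA}\lvert \pola_{\beta^{*}}(s, a_\cA) - \initpi(s, a_\cA)\rvert = \beta^{*}\sum_{a_\cA}\lvert \pola^{*}(s, a_\cA) - \initpi(s, a_\cA)\rvert \le 2\beta^{*}$. Substituting into the $\ell_p$-style definition of $\Cost$ and summing over the $|S|$ states yields $\Cost(\pola_{\beta^{*}}, \initpi) \le 2\beta^{*}\cdot |S|^{1/p}$, which is the claimed bound. The main obstacle I expect is not the algebra---which is essentially linear thanks to the transition-independence assumption---but the bookkeeping around edge cases: justifying the $0/0$ convention as aligning with genuinely non-binding constraints, verifying $\beta^{*} \le 1$ whenever $\alpha_2^{*} \ge \eps$, and confirming that the neighbor-policy characterization in Eq.~\eqref{eq.sufficient_condition.simple} applies to our \emph{stochastic} attacker policy $\pola_{\beta^{*}}$ (the determinism restriction there concerns only the victim's policies, so this goes through).
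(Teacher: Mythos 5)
Your proposal is correct and follows essentially the same route as the paper's proof: both construct the candidate $\pola^{\beta}=(1-\beta)\initpi+\beta\pola^{*}$, exploit the transition-independence assumption to make the sub-optimality gap affine in $\beta$, invoke the ergodic neighbor-policy characterization (Lemma~\ref{lm.neigbhor_policies_ergodic}) to reduce feasibility to the per-$(s,a_\cL)$ threshold $\beta\ge \overline\chi_{\eps}(s,a_\cL)/(\chi_2^{*}(s,a_\cL)+\overline\chi_{\eps}(s,a_\cL))$, and bound the cost by $2\beta^{*}|S|^{1/p}$ using that the two policies are per-state probability distributions. Your edge-case bookkeeping ($0/0$ convention, $\beta^{*}\le 1$ from $\alpha_2^{*}\ge\eps$, and the observation that the determinism restriction in the characterization concerns only the victim's policies) matches what the paper's argument needs.
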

As with the lower bound, the upper bound is not directly comparable to the bounds obtained in prior work~\cite{rakhsha2021policy}. In the appendix, we analyze another special case, when both $\pola$ and $\poll$ do not influence transitions, and obtain a slightly tighter bound. In that case, we obtain the  upper bound $2 \cdot \norm{\frac{\overline \chi_{\eps}}{\chi_2^*  + \overline \chi_{\eps}}}_{p, \infty}$, where $\overline \chi_{\eps}$ and $\chi_2^*$ are now treated as matrices with $|A_{\cL}| \times |S|$ entries.  We leave for the future work whether it is possible to improve the result in Theorem \ref{thm.upper_bound_2} and match this bound.


\section{Algorithms}\label{sec.algorithm}

In this section, we study two algorithmic approaches for solving the optimization problem \eqref{prob.instance_1}: a model-based approach with tabular policies for solving \eqref{prob.instance_1}, and a model-free approach with neural policies for solving \eqref{prob.instance_2}.

\subsection{Conservative Policy Search for Implicit Attacks}\label{sec.algorithm.planning}

In this subsection, we propose an algorithm for \eqref{prob.instance_1}.
To simplify the exposition, we focus on a version of the algorithm that applies to ergodic environments---in the appendix, we provide an extension to non-ergodic environments. 

\begin{algorithm}
\caption[labelsep=period]{\conspolsearcherg}\label{alg.conserv_pol_iter_main_text}
\begin{algorithmic}
\REQUIRE $\mathcal M = (\{1, 2\}, S, A, P, R, \gamma, \sigma)$, $\eps$, $\delta_\eps$, $\initpi$, $\lambda$, $p$
\ENSURE Policy of the adversary, $\pola$
\STATE Initialize $t = 0$
\FOR{$t = 0$ to $T-1$}
\STATE Calculate state occupancy measures $\occstate^{\pola^t, \targetpi}$ and $\occstate^{\pola^t, \targetpi \{ s, a_{\cL}\}}$
\STATE Evaluate the gap $\eps_{\pola^t} = \min_{\eps'} \eps'$ s.t. $\score_\cL^{\pola^t, \targetpi} \ge \score_\cL^{\pola^t, \targetpi\{s, a_{\cL}\}} + \eps'$ 
\STATE Solve the optimization problem \eqref{prob.instance_1.rel} to obtain $\pola^{t+1}$ \IF{$\pola^{t+1} = \pola^t$}
    \STATE {\bf break}
\ENDIF
\ENDFOR
\STATE Set the result $\pola$ to solution $\pola^{t}$ that minimizes $\norm{\pola^{t} - \initpi}_{1, p}$ while satisfying $\eps_{\pola^t} \ge \eps$  
\end{algorithmic}
\end{algorithm}

To design an efficient algorithmic procedure for finding a solution to \eqref{prob.instance_1}, we utilize the fact that \eqref{prob.instance_1} can be efficiently solved  
when policy $\pola$ does not affect the transition dynamics. 
Inspired by conservative policy iteration~\cite{kakade2002approximately} and similar approaches in RL~\cite{schulman2015trust}, we propose an algorithm that alternates between two phases. \begin{enumerate}
    \item 
In the first phase, we obtain the occupancy measures of the current solution $\pola^{t}$ and policies $\targetpi$ and $\targetpi\{s, a\}$. That is, we calculate $\occstate^{\pola^{t}, \targetpi}$ and $\occstate^{\pola^{t}, \targetpi\{s, a\}}$. 
\item 
In the second phase, we update the current solution $\pola^t$ by solving a relaxed version of \eqref{prob.instance_1}, i.e., 
\begin{align*}
    &\min_{\pola \in \mathcal B(\pola^t, \dpi), \eps'} \quad \Cost(\pola, \initpi) - \lambda \cdot \min \{\eps', \eps \cdot (1 + \delta_{\eps}) \}
    \\
    \label{prob.instance_1.rel}\tag{P1'}
    &\quad\quad\quad \mbox{ s.t. } \quad \hat \score_\cL^{\pola, \targetpi} \ge \hat \score_\cL^{\pola, \targetpi \{s, a_\cL\}} + \eps',
\end{align*}
for all $s \text{ s.t. } \occstate^{\pola, \targetpi}(s) > 0$ and $a_\cL \text{ s.t. } \targetpi(s, a_\cL) = 0$.
Here,  
$B(\pola^t, \dpi) = \{\pola \text{ s.t. } |\pola(s, a_\cA) - \pola^t(s, a_\cA)| \le \dpi \}$, $\hat \score_{\cL}^{\pola, \poll}$ is obtained via Eq. \eqref{eq.return_occupance} but by using $\occstate^{\pola^t, \poll}$ instead of $\occstate^{\pola, \poll}$, and $\delta_{\eps} \ge 0$ is a positive offset which adjusts $\eps$ (and whose role is explained later in the text). 
\end{enumerate}

The optimization problem \eqref{prob.instance_1.rel} is a relaxation of \eqref{prob.instance_1} since we optimize over the margin parameter $\eps'$, which can take negative values. Hence, \eqref{prob.instance_1.rel} is always feasible. Critically, when solving \eqref{prob.instance_1.rel}, the state occupancy measures are fixed to $\occstate^{\pola^{t}, \targetpi}$ and $\occstate^{\pola^{t}, \targetpi\{s, a\}}$, which implies that we can solve \eqref{prob.instance_1.rel} efficiently since the objective is convex, while the constraints are linear in $\pola$ and $\eps'$. The conservative update is reflected in the constraint $\pi_1 \in B(\pola^t, \dpi)$, which ensures that solutions to \eqref{prob.instance_1.rel} approximately satisfy the constraints of the original problem \eqref{prob.instance_1} (e.g., see Lemma 14.1 in \cite{agarwal2019reinforcement}). We can control the quality of this approximation through the hyperparameter $\delta_{\eps}$: for higher values of $\delta_{\eps}$ and $\eps' \ge \eps \cdot (1 + \delta_{\eps})$, solution $\pola$ to \eqref{prob.instance_1.rel} is more likely to be be a feasible solution to \eqref{prob.instance_1}. 

The final step of each iteration is to evaluate the true gap $\score_2^{\pola^t, \targetpi} - \score_2^{\pola^t, \poll}$ that each solution $\pola^t$ achieves.
The output of the algorithm is the solution that minimizes the cost while ensuring that the target gap $\eps$ is achieved. 

Algorithm \ref{alg.conserv_pol_iter_main_text} summarizes the main steps of conservative policy search for ergodic environments.\footnote{While the algorithm is well defined for any $\pola^t$, in the experiments we only consider $\pola^t$ that are fully stochastic, i.e., $\pola(s, a_\cA) > 0$ for any $s$ and $a_\cA$. In this case, the set of states $s$  s.t. $\occstate^{\pola^t, \targetpi}(s)$ does not change over time, and can be precalculated.}
 The algorithm assumes access to the model of the environment, i.e., the corresponding MDP parameters (rewards and transition probabilities), needed for obtaining relevant quantities, such as occupancy measures. The algorithm also takes the learner's parameter $\eps$ as its inputs; in practice, one can use a conservative estimate of the true parameter instead.

\subsection{Alternating Policy Updates  for Implicit Attacks}\label{sec.algorithm.learning}

We now turn to  \eqref{prob.instance_2}. First, note that we can view \eqref{prob.instance_2} as a parametric relaxation of \eqref{prob.instance_1.rel}. Namely, \eqref{prob.instance_2} is equivalent to the bi-level optimization problem:
 \begin{align}\label{prob.instance_2.bilevel}\tag{P2'}
     &\min_{\theta} \mathcal L_I(\theta, \initpi) - \lambda \cdot \left[ \score_{\cL}^{\polap, \targetpi} -  \score_{\cL}^{\polap, \pollpopt} \right]\\
     &\quad\quad\text{s.t.}\quad\quad \pollpopt \in \arg\max_{\phi} \score_2^{\polap, \pollp}\nonumber.
 \end{align}
The second term, $\score_{\cL}^{\polap, \targetpi} -  \score_{\cL}^{\polap, \pollpopt}$, measures the sub-optimality gap of the target policy, and corresponds to parameter $\eps'$ in \eqref{prob.instance_1.rel}, while the first term corresponds to the cost of the attack. This bi-level structure also motivates our algorithmic approach for finding an optimal $\theta$. 

In general, the objective of \eqref{prob.instance_2} is nonconvex-nonconcave, so the order of $\min$ and $\max$ is important~(e.g., see \cite{jin2020local}). To solve the optimization problem \eqref{prob.instance_2}, we alternate between minimizing the loss function $\mathcal L (\theta, \phi)$ over parameters $\theta$ while keeping parameters $\phi$ fixed, and maximizing $\score_{\cL}^{\polap, \pollp}$ over parameters $\phi$ while keeping $\theta$ fixed. Each optimization subroutine optimizes for a few episodes, with the latter one using more episodes. As shown by \cite{rajeswaran2020game}, this type of alternating optimization can be more effective in solving  game-theoretic bi-level optimization problems in RL similar to \eqref{prob.instance_2.bilevel} than a gradient descent-ascent approach that, in our setting, would simultaneously update $\theta$ and $\phi$. 
Algorithm \ref{alg.alter_updates_main_text} summarizes the main steps of our alternating policy updates (APU) approach. In our implementation, we pre-train policy $\pollp$ for  $\phi\text{-pretrain timesteps}$, which is typically larger than the number of timesteps ($\phi\text{-update timesteps}$) used for updating $\pollp$ in each epoch ($\phi\text{-pretrain timesteps} = 10000$ and $\phi\text{-update timesteps} = 5000$ in our experiments). 

\begin{algorithm}
    \caption[labelsep=period]{\alterupdates}\label{alg.alter_updates_main_text}
        \begin{algorithmic}
        \REQUIRE $epochs$, $\lambda$, $\epsilon$, $\targetpi$, $\initpi$, $\phi\text{-pretrain timesteps}$, $\phi\text{-update}$ timesteps
            \STATE Initialize $\polap$ and  $\pollp$
            \STATE Train $\pollp$ for $\phi\text{-pretrain timesteps}$ with PPO that optimizes performance under $R_\cL$ and $\polap$
            \FOR{epoch = 0, 1, $\ldots$}
                \STATE Update $\pollp$ for $\phi\text{-update}$ timesteps with PPO that optimizes performance under $R_\cL$ and $\polap$
        
                \STATE Collect trajectory $\tau^\phi$ with $\polap$ and $\pollp$
                \STATE Collect trajectory $\tau^\dagger$ with $\polap$ and $\targetpi$
                
                \STATE $bc\_loss \gets cost\_fn(\initpi, \polap)$
                \COMMENT{either cross entropy or kl-divergence}
                
                \STATE $loss^\dagger \gets \mathcal L^{PPO}(\tau^\dagger, \polap)$
                \STATE $loss^\phi \gets \mathcal L^{PPO}(\tau^\phi, \polap)$
                \STATE $policy\_loss \gets \frac{1}{|\tau^\phi| + |\tau^\dagger|} \cdot (loss^\phi - loss^\dagger)$ 
 \COMMENT{where $|\tau|$ is the length of trajectory $\tau$, i.e., the number of timesteps in $\tau$}
                \STATE $loss \gets  bc\_loss + \lambda \cdot policy\_loss$ 
                \STATE Update $\polap$ with gradients of $loss$
                \STATE Update critic network of adversary with $\tau^\phi$ and $\tau^\dagger$
            \ENDFOR
        \end{algorithmic}
    
\end{algorithm}

\section{Experiments}

In this section, we demonstrate the efficacy of our algorithmic approaches through simulation-based experiments. As explained in the introduction, our setting differs from those studied in prior work, so our algorithms are not directly comparable to approaches from prior work.  Hence, we compare our algorithms against their simplified versions and naive baselines.
Additional results and implementation details, including running times and training parameters, are provided in the appendix.\footnote{The code for this paper is available at \url{https://github.com/gradanovic/rl-implicit-poisoning-attacks}.}

\subsection{Experiments for Conservative Policy Search}\label{sec.experiments.CPS}

\begin{figure*}[ht]
	\centering
	\begin{subfigure}{0.96\textwidth}
    	\centering
		\resizebox{0.3\linewidth}{!}{\includegraphics{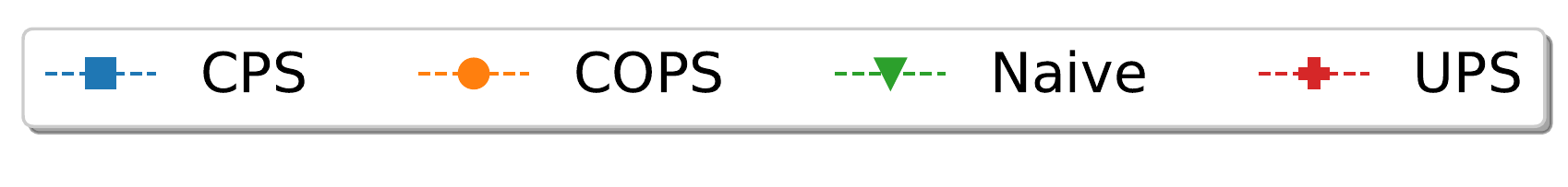}}
		\label{fig.general_legend}
	\end{subfigure}
	\begin{subfigure}{0.24\textwidth}
    	\centering
		\resizebox{0.98\linewidth}{!}{\includegraphics{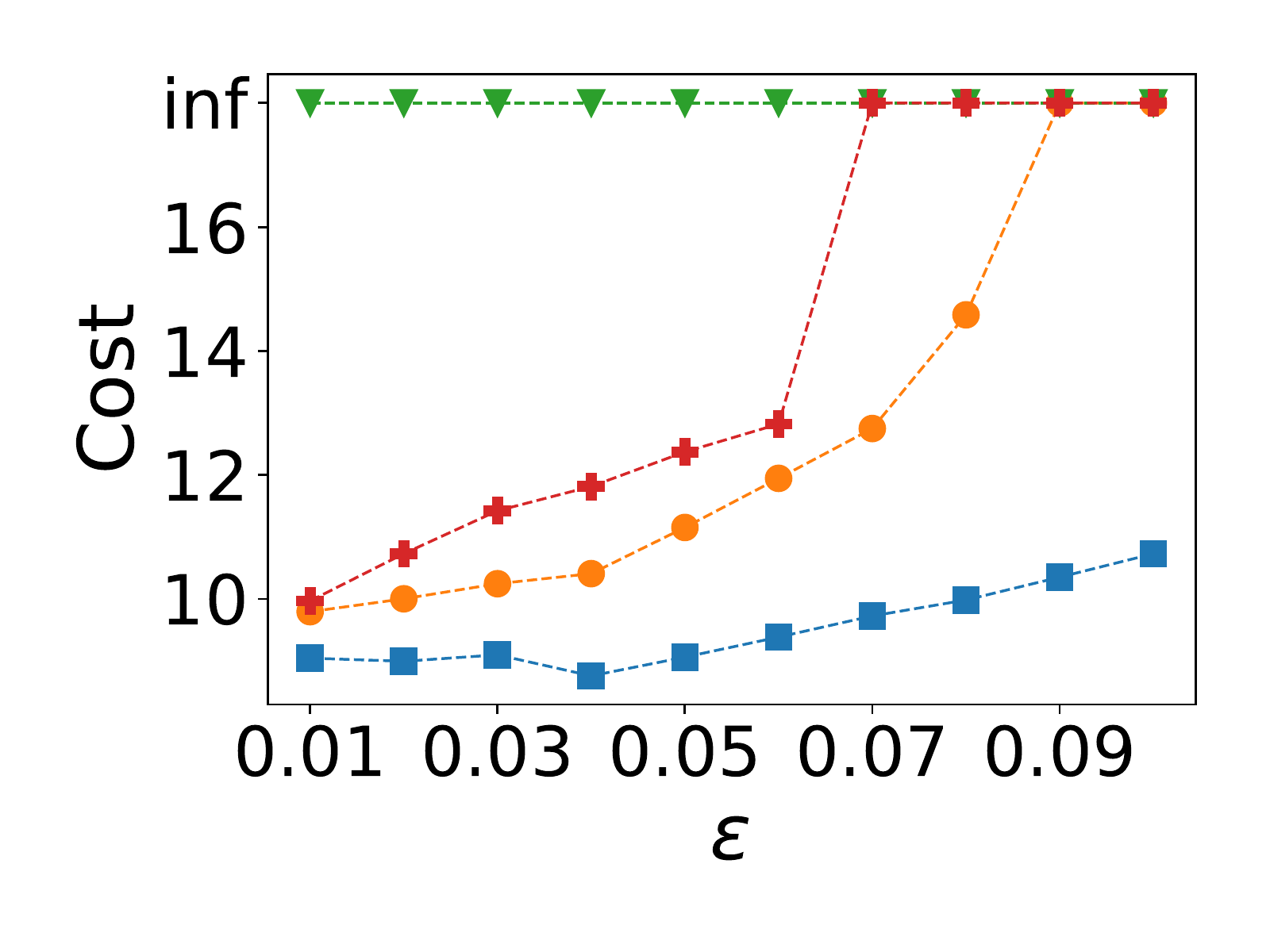}}
		\caption{Navigation, $\eps$}\label{fig.cost_vs_eps_navigation}
	\end{subfigure}
	\begin{subfigure}{0.24\textwidth}
    	\centering
		\resizebox{0.98\linewidth}{!}{\includegraphics{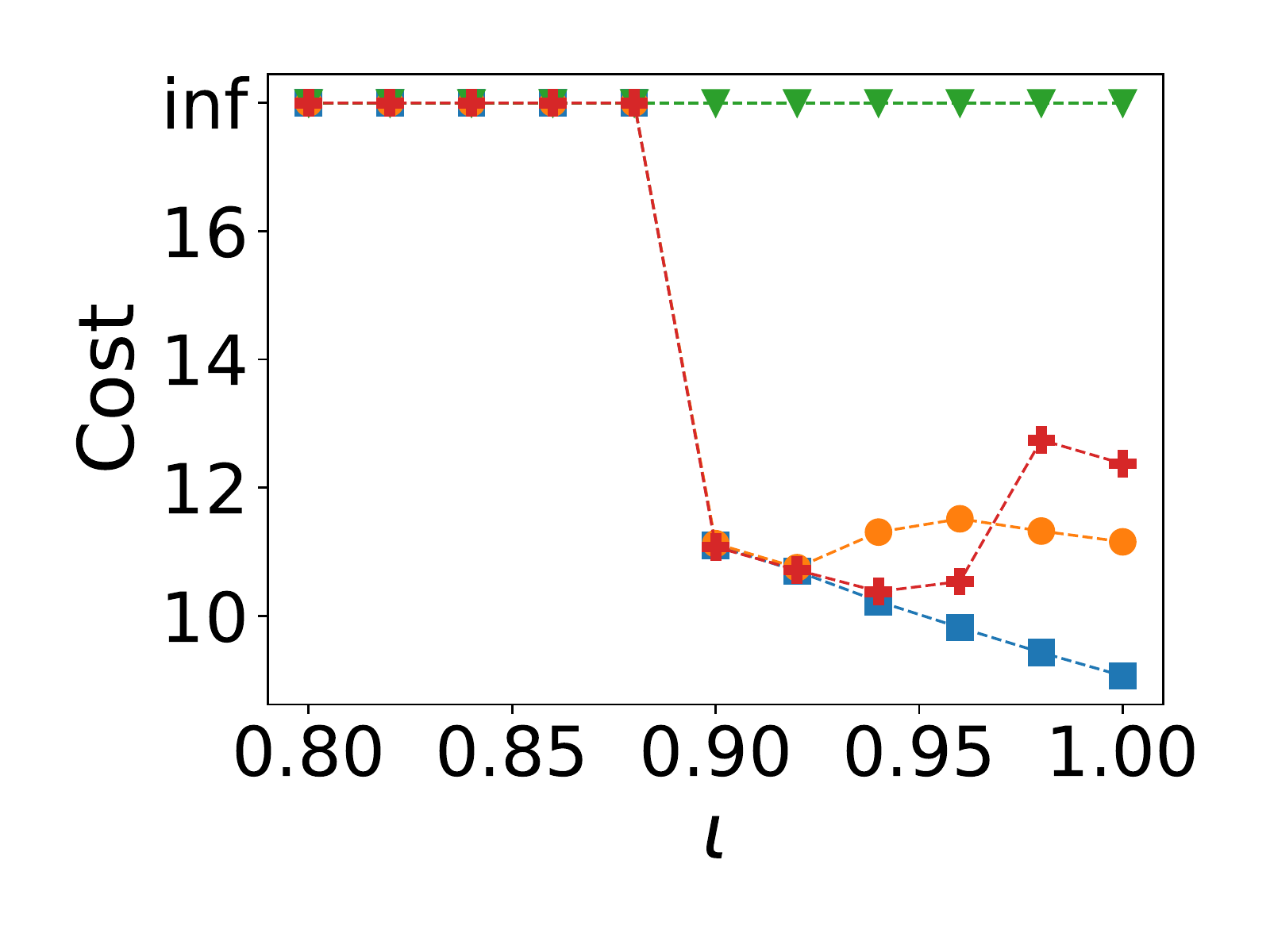}}
		\caption{Navigation, Influence}\label{fig.cost_vs_inf_navigation}
	\end{subfigure}
	\begin{subfigure}{.24\textwidth}
    	\centering
		\resizebox{0.98\linewidth}{!}{\includegraphics{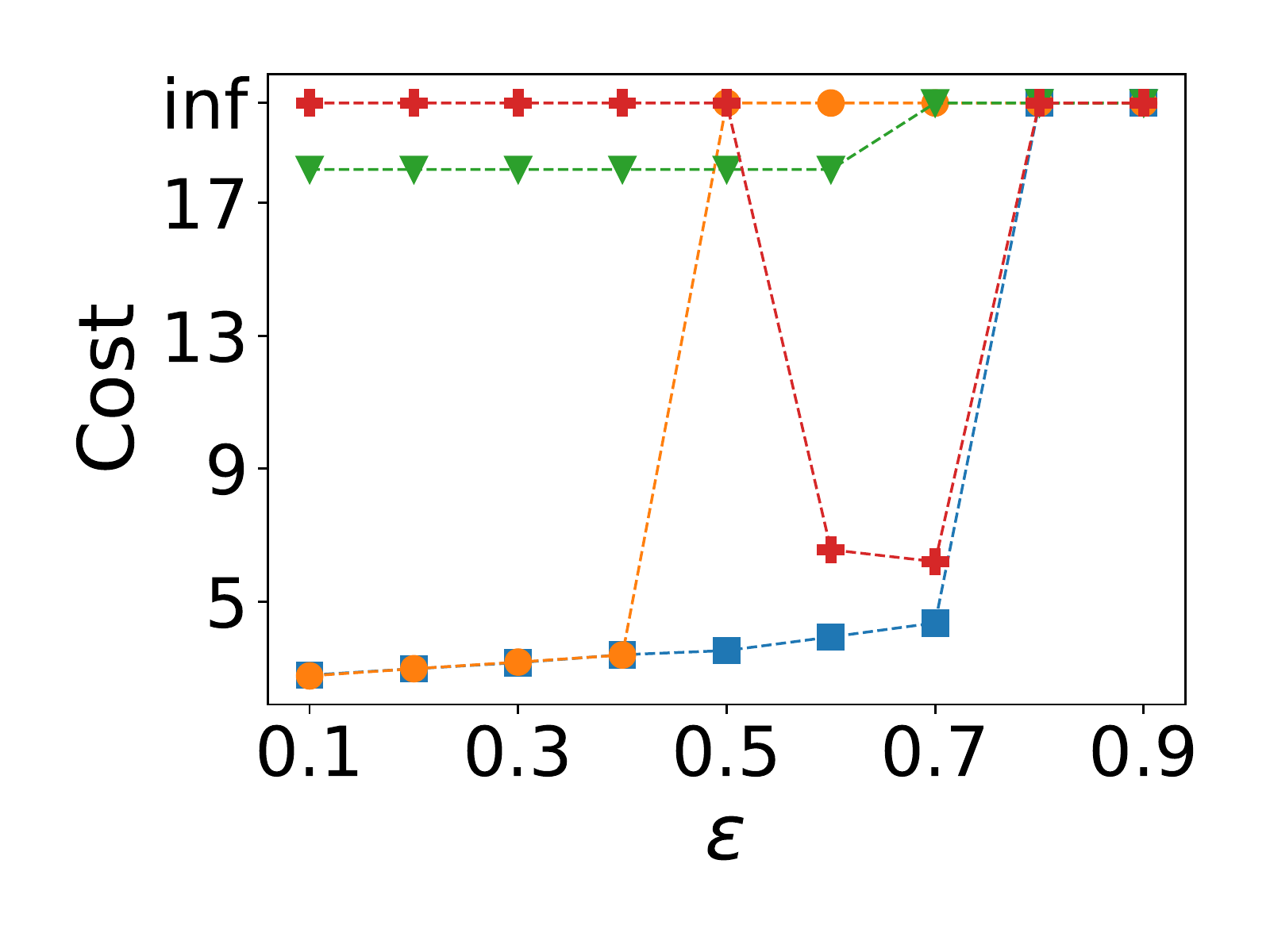}}
		\caption{Inventory, $\eps$}\label{fig.cost_vs_eps_inventory}
	\end{subfigure}
	\begin{subfigure}{.24\textwidth}
    	\centering
		\resizebox{0.98\linewidth}{!}{\includegraphics{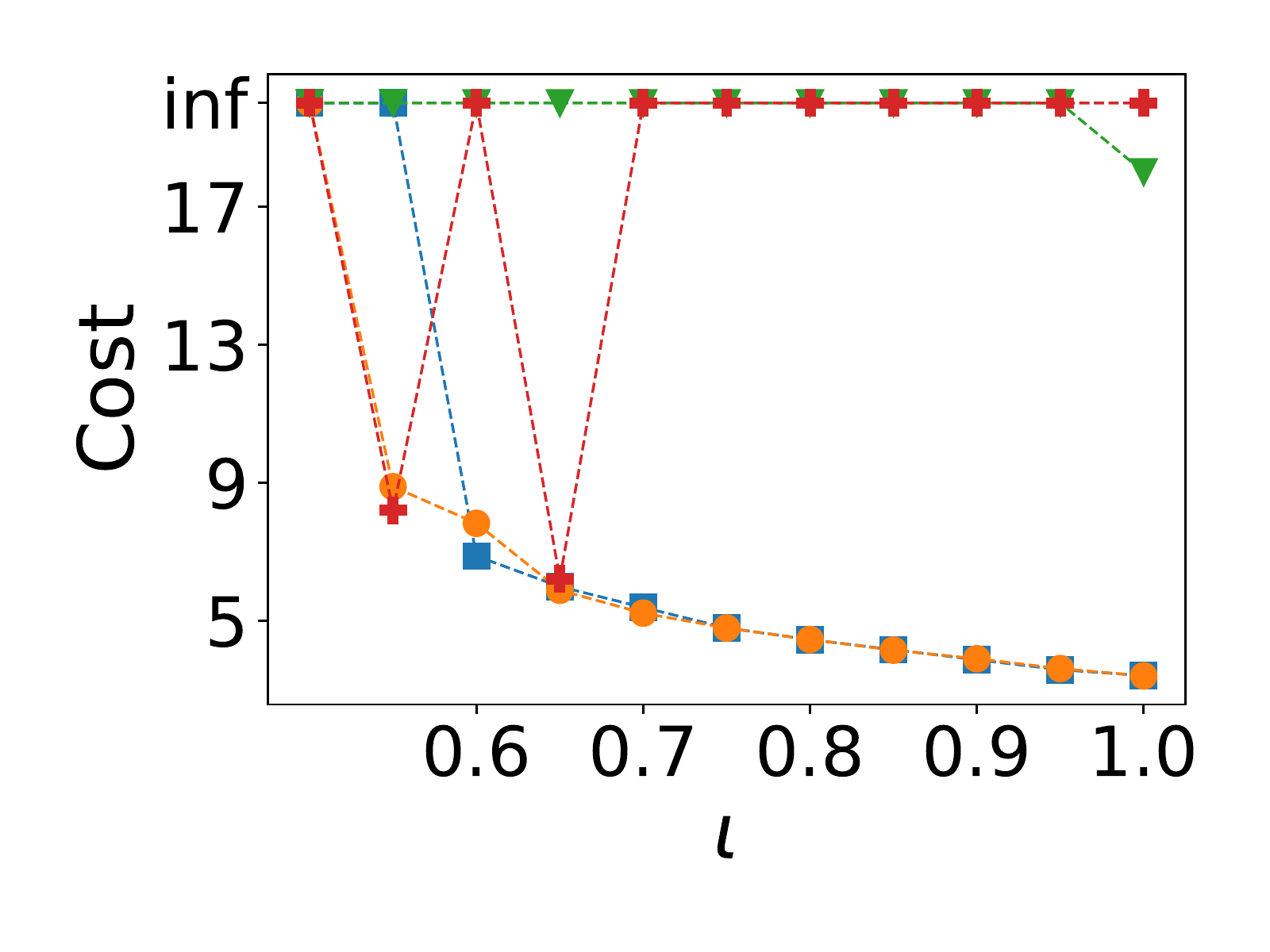}}
		\caption{Inventory, Influence}\label{fig.cost_vs_inf_inventory}
	\end{subfigure}
    \caption{The cost of the attack as a function of the victim’s sub-optimality and the adversary’s influence over the victim's peer agent.
    We use the same cost function as for the characterization results. The default value of $\eps$ is $0.05$ for Navigation and $0.4$ for Inventory. When $\text{inf}$ is reached, no solution was found. As explained in the text, $\eps$ parameter controls the sub-optimality of the learner. Fig. \ref{fig.cost_vs_eps_navigation} and Fig. \ref{fig.cost_vs_eps_inventory} show that the more sub-optimal the learner is, the harder it is to force a target policy. We further vary the influence of the attacker $\iota$ by executing policy $\pola^{\iota}(s, a) = (1-\iota) \cdot \initpi(s, a) + \iota \cdot \pola(s, a)$ instead of $\pola$. 
    Fig. \ref{fig.cost_vs_inf_navigation} and Fig. \ref{fig.cost_vs_inf_inventory} show
    that the lower the influence of the attacker is, the harder it is to force a target policy. The default value of $\iota$ is $1.0$.
    }
	\label{fig.performance}
\end{figure*}

\begin{figure*}[ht]
	\centering
		\begin{subfigure}{0.96\textwidth}
    	\centering
		\resizebox{0.7\linewidth}{!}{\includegraphics{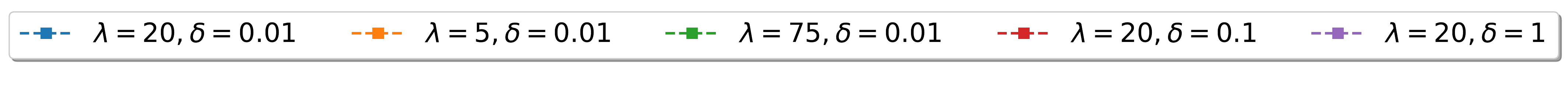}}
		\label{fig.specific_legend}
	\end{subfigure}
	\begin{subfigure}{0.24\textwidth}
    	\centering
		\resizebox{0.98\linewidth}{!}{\includegraphics{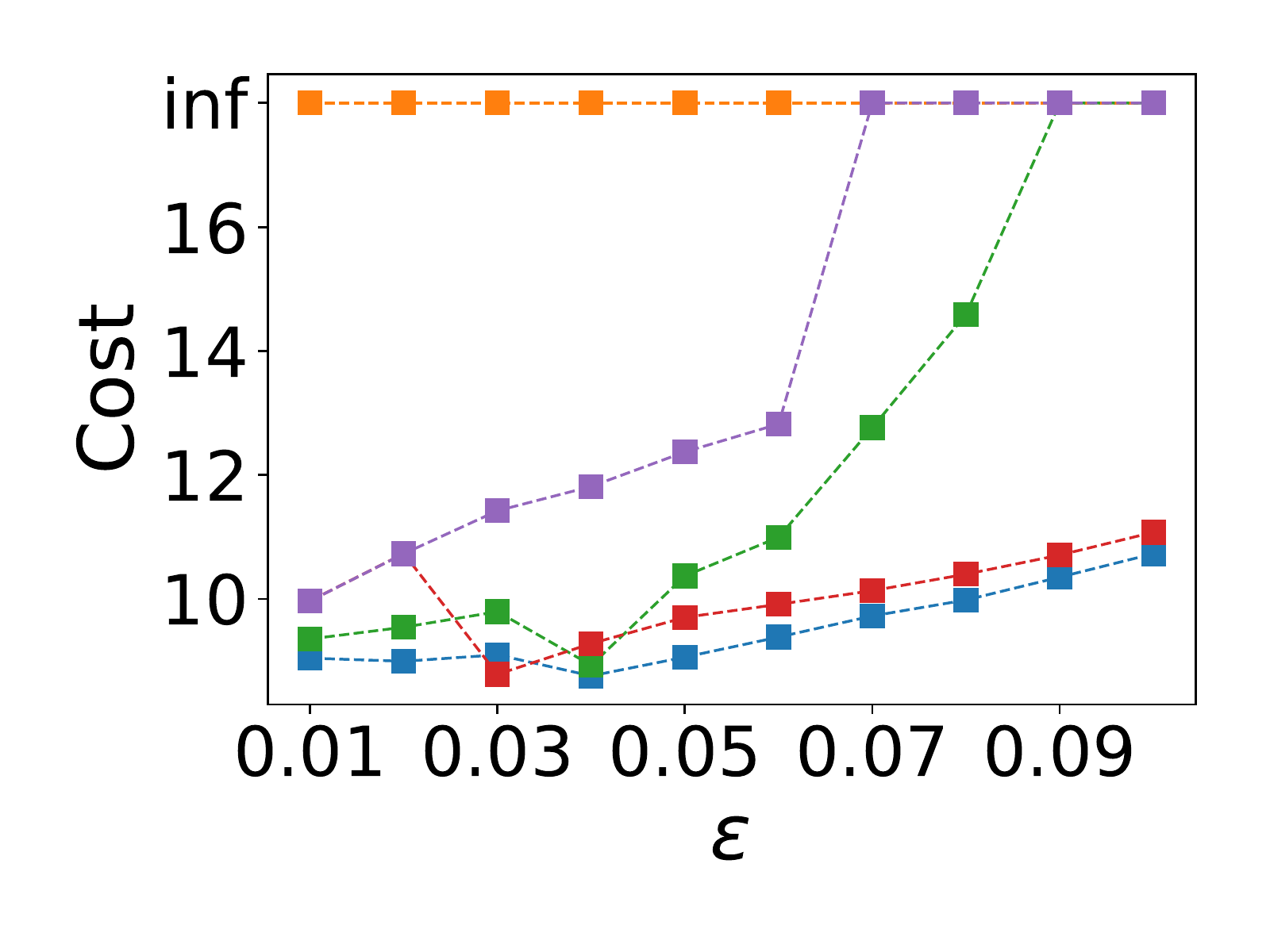}}
		\caption{Navigation, $\eps$}\label{fig.cost_vs_eps_navigation_b}
	\end{subfigure}
	\begin{subfigure}{0.24\textwidth}
    	\centering
		\resizebox{0.98\linewidth}{!}{\includegraphics{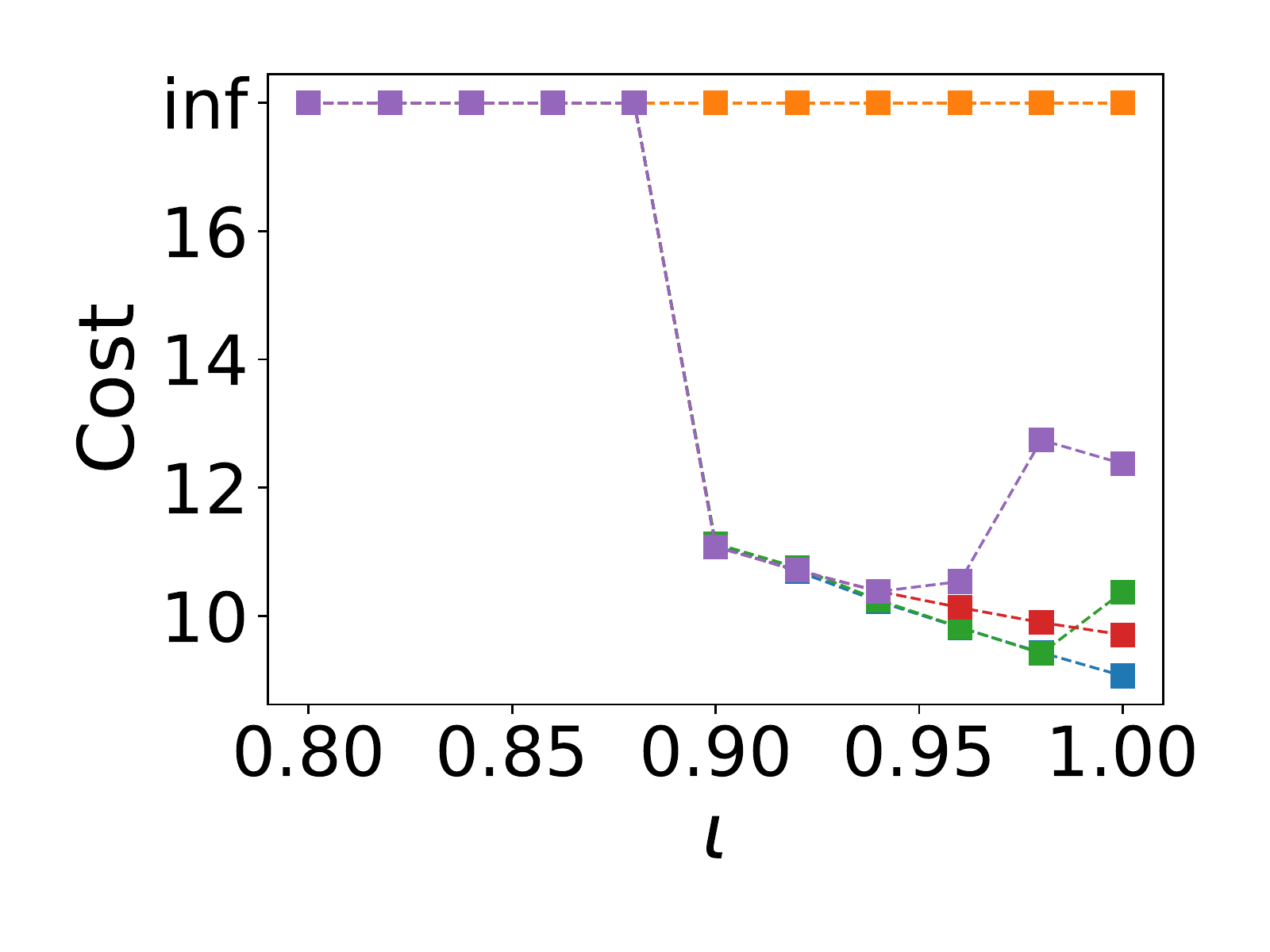}}
		\caption{Navigation, Influence}\label{fig.cost_vs_inf_navigation_b}
	\end{subfigure}
	\begin{subfigure}{.24\textwidth}
    	\centering
		\resizebox{0.98\linewidth}{!}{\includegraphics{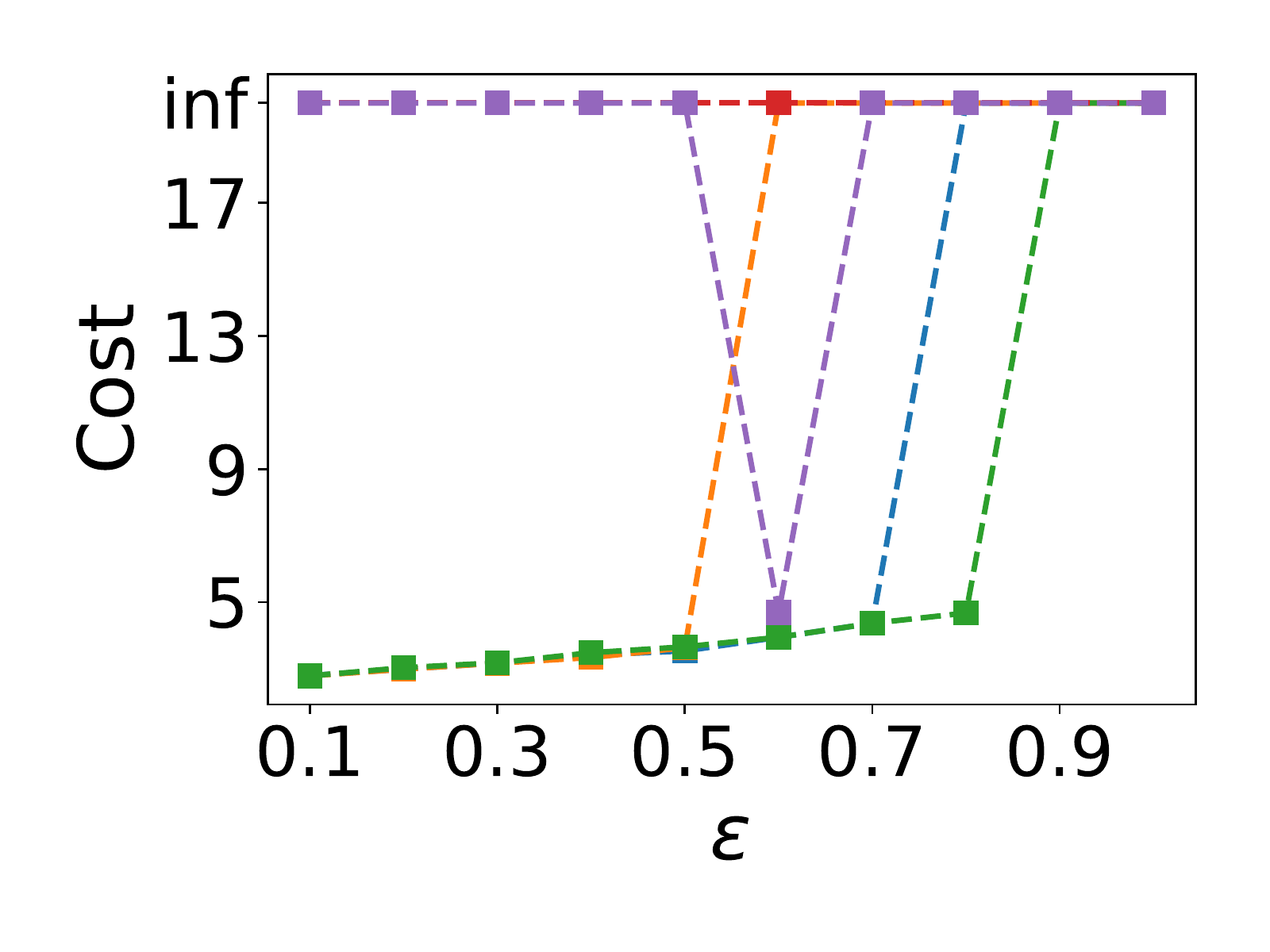}}
		\caption{Inventory, $\eps$}\label{fig.cost_vs_eps_inventory_b}
	\end{subfigure}
	\begin{subfigure}{.24\textwidth}
    	\centering
		\resizebox{0.98\linewidth}{!}{\includegraphics{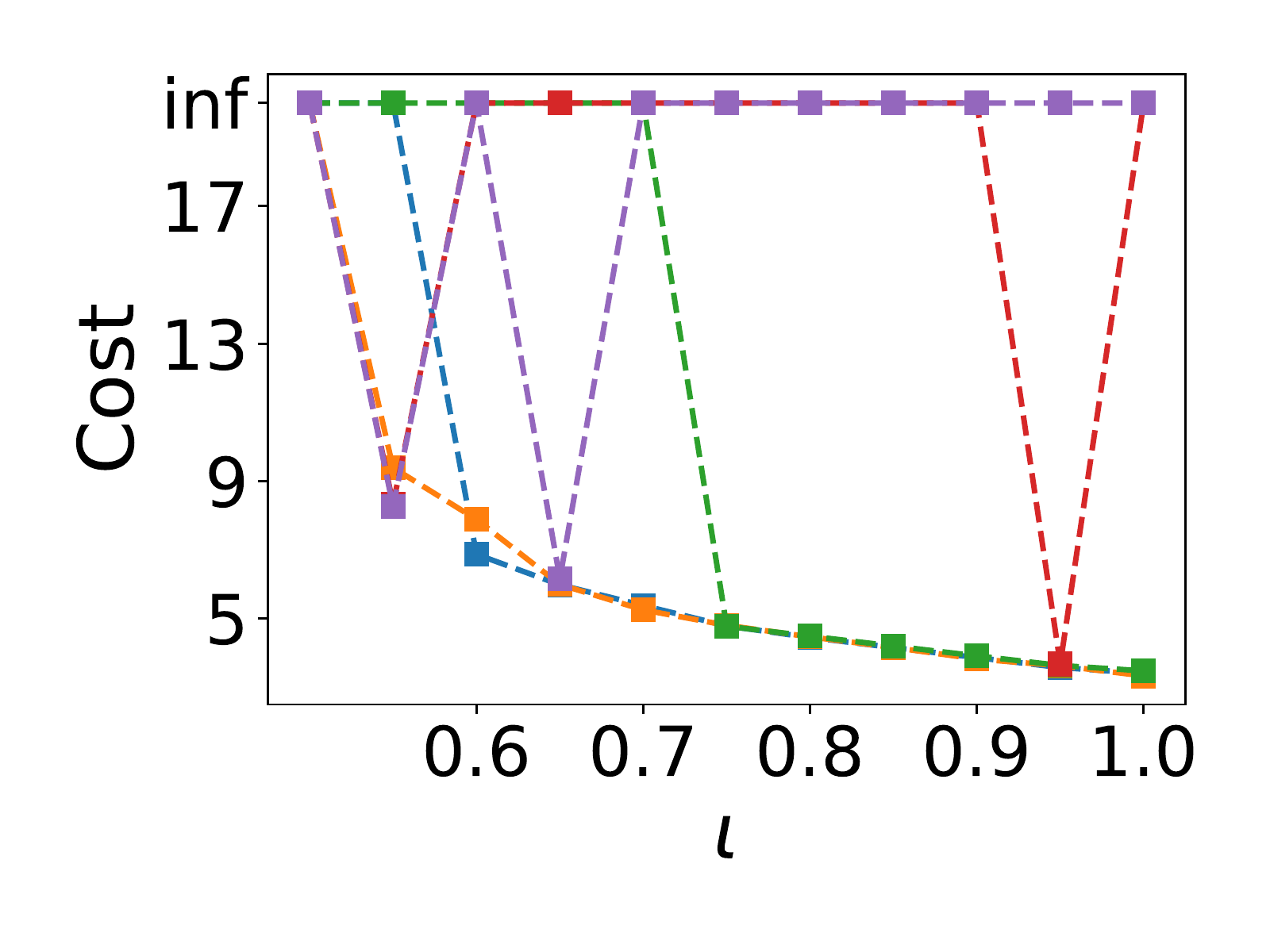}}
		\caption{Inventory, Influence}\label{fig.cost_vs_inf_inventory_b}
	\end{subfigure}
    \caption{ The effect of hyperparameters $\lambda$ and $\delta$ on the performance of conservative policy search (CPS). The plots correspond to those from Figure \ref{fig.performance}. 
    The results suggest that $\lambda$ and $\delta$ affect the success rate of CPS in finding a feasible solution.  
    To find a suitable hyperparameters, one can 
    run a meta-search over hyperparameters.
    }
	\label{fig.ablation}
\end{figure*}

We consider two environments based on or inspired by prior work \cite{Puterman1994,rakhsha2021policy}, but modified to fit the two-agent setting of this paper. 

\textbf{Navigation Environment.} This environment is based on the navigation environment from \cite{rakhsha2021policy}, developed for testing environment poisoning attacks on a single RL agent in a tabular setting. We refer the reader to \cite{rakhsha2021policy} for the description of the original environment and to the appendix for the full description of the two-agent variant that we introduce. The original environment is ergodic, contains $9$ states and the action space of an agent specifies in which direction (``left'' or ``right'') the agent should move. The two-agent variant has an extended action space to include the actions of the attacker, who has the same action space as the victim agent. Rewards and transitions  primarily depend on whether the actions of the two agents match, e.g., if the agents' actions match, the victim agent moves in the desired direction with high probability and obtains a positive reward. The default policy of the attacker is to always take ``left'', while the target policy is that the victim takes action ``right'' in each state.

\textbf{Inventory Management.} 
We consider a modified version of the inventory management environment from \cite{Puterman1994}, with two agents. As in the original version, we have a manager of a warehouse that decides on the current inventory of a warehouse (the number of stocks/items in the warehouse). In our two agent version of the environment, the victim agent is controlling the amount of stock in the inventory and the attacker is controlling the demand. The victim's actions are ``buy'' actions that select between $0$ and $M-1$ items. The attacker's actions are ``create demand'' of $0$ to $M-1$ items. In our experiments, we set $M = 10$ and $\gamma = 0.9$. 
The default policy of the attacker is to select the demand uniform at random over all possible values. 
The target policy is defined by the following rule: if there are more than $k = 7$ items, do not buy anything, otherwise buy $k - s$ items.
Other details of this environment are explained in the appendix. Note that this is a non-ergodic environment.

\begin{figure*}[ht]
    \begin{subfigure}{1.0\textwidth}
		\resizebox{0.5\linewidth}{!}{\includegraphics{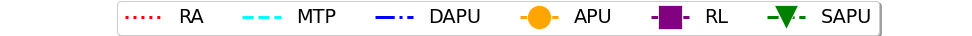}}
		\label{fig.Push1D_legend}
		\resizebox{0.5\linewidth}{!}{\includegraphics{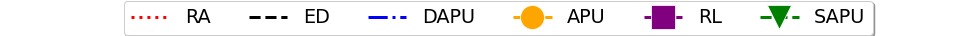}}
		\label{fig.Push2D_legend}
	\end{subfigure}
	\centering
	\begin{subfigure}{0.24\textwidth}
    	\centering
		\resizebox{0.98\linewidth}{!}
		{\includegraphics{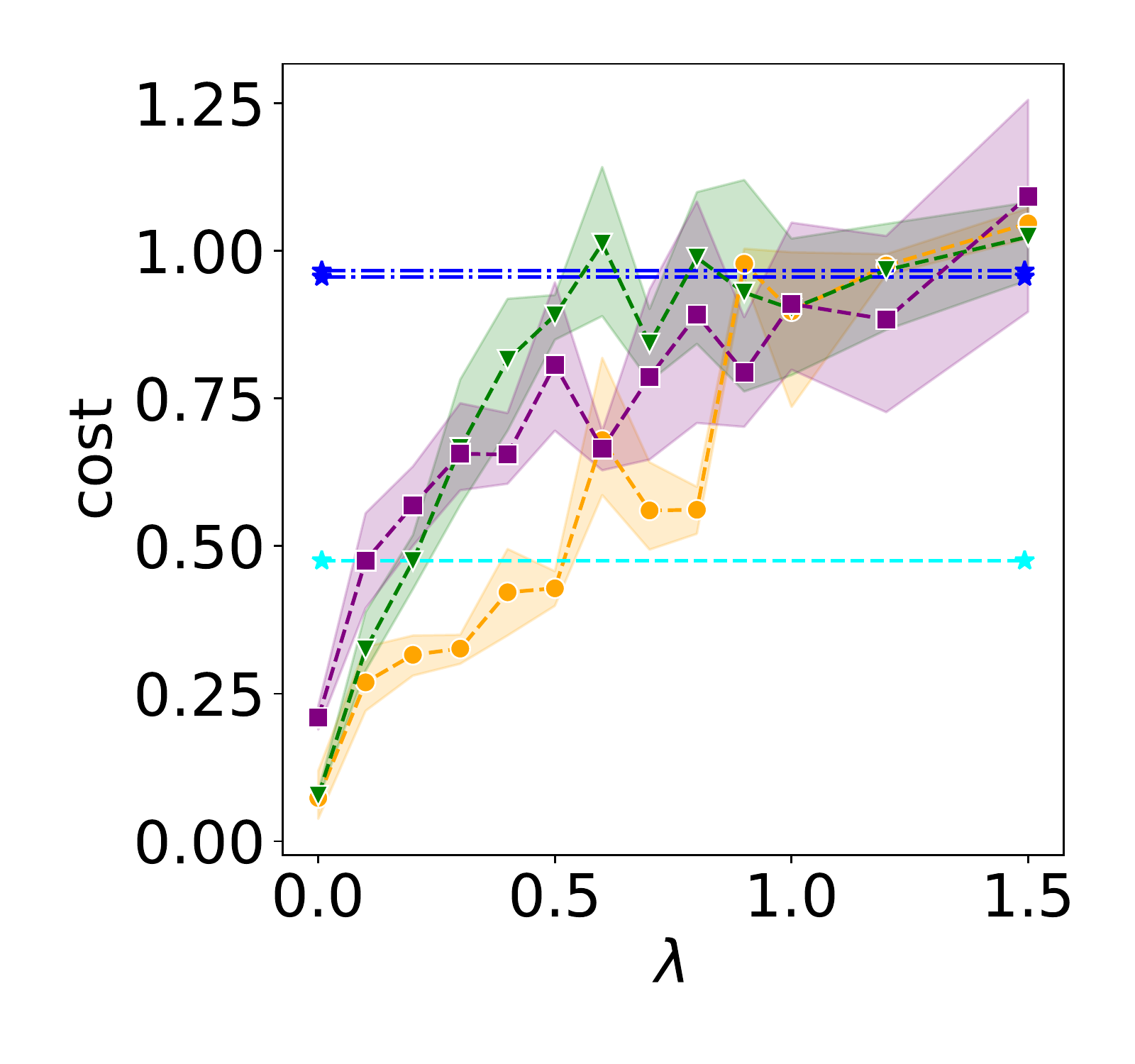}}
		\caption{1D Push: Cost, $\lambda$}\label{fig.dist_vs_lambda_g}
	\end{subfigure}
	\begin{subfigure}{.24\textwidth}
    	\centering
		\resizebox{0.98\linewidth}{!}
		{\includegraphics{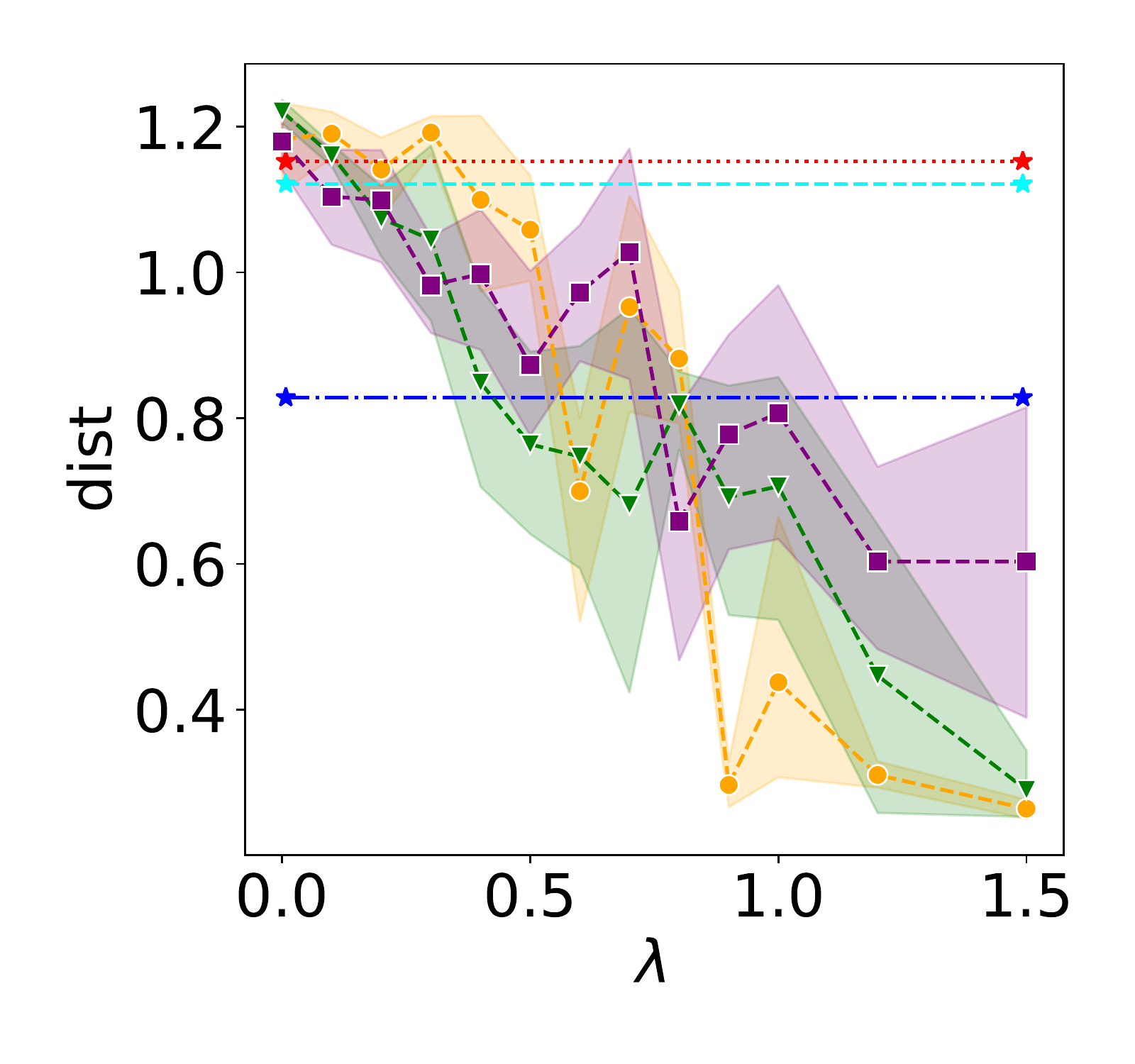}}
		\caption{1D Push: Distance, $\lambda$}\label{fig.success_vs_inf_g}
	\end{subfigure}
    \begin{subfigure}{0.24\textwidth}
    	\centering
		\resizebox{0.98\linewidth}{!}
		{\includegraphics{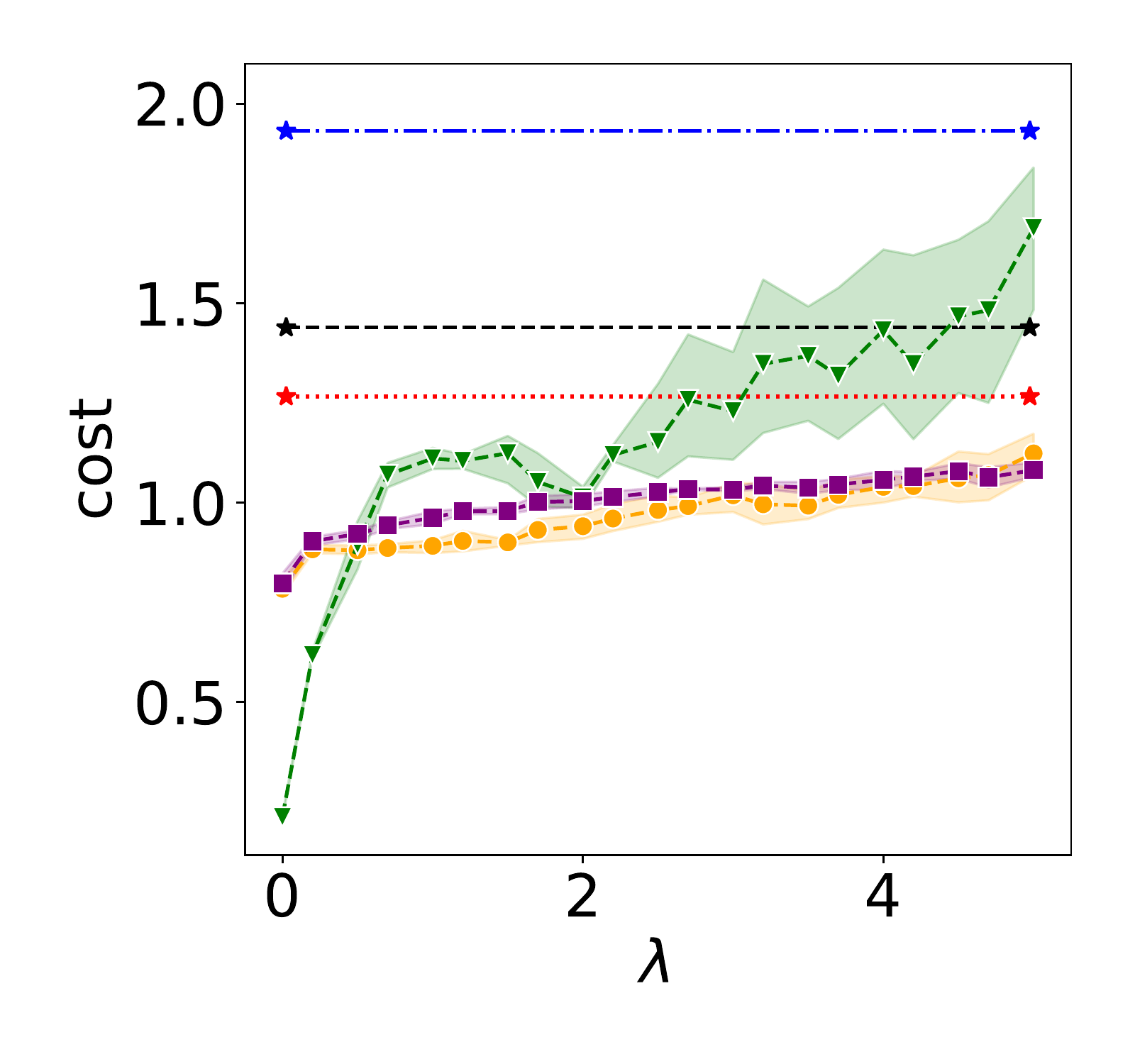}}
		\caption{2D Push: Cost, $\lambda$}\label{fig.cost_vs_lambda_g}
	\end{subfigure}
	\begin{subfigure}{.24\textwidth}
    	\centering
		\resizebox{0.98\linewidth}{!}
		{\includegraphics{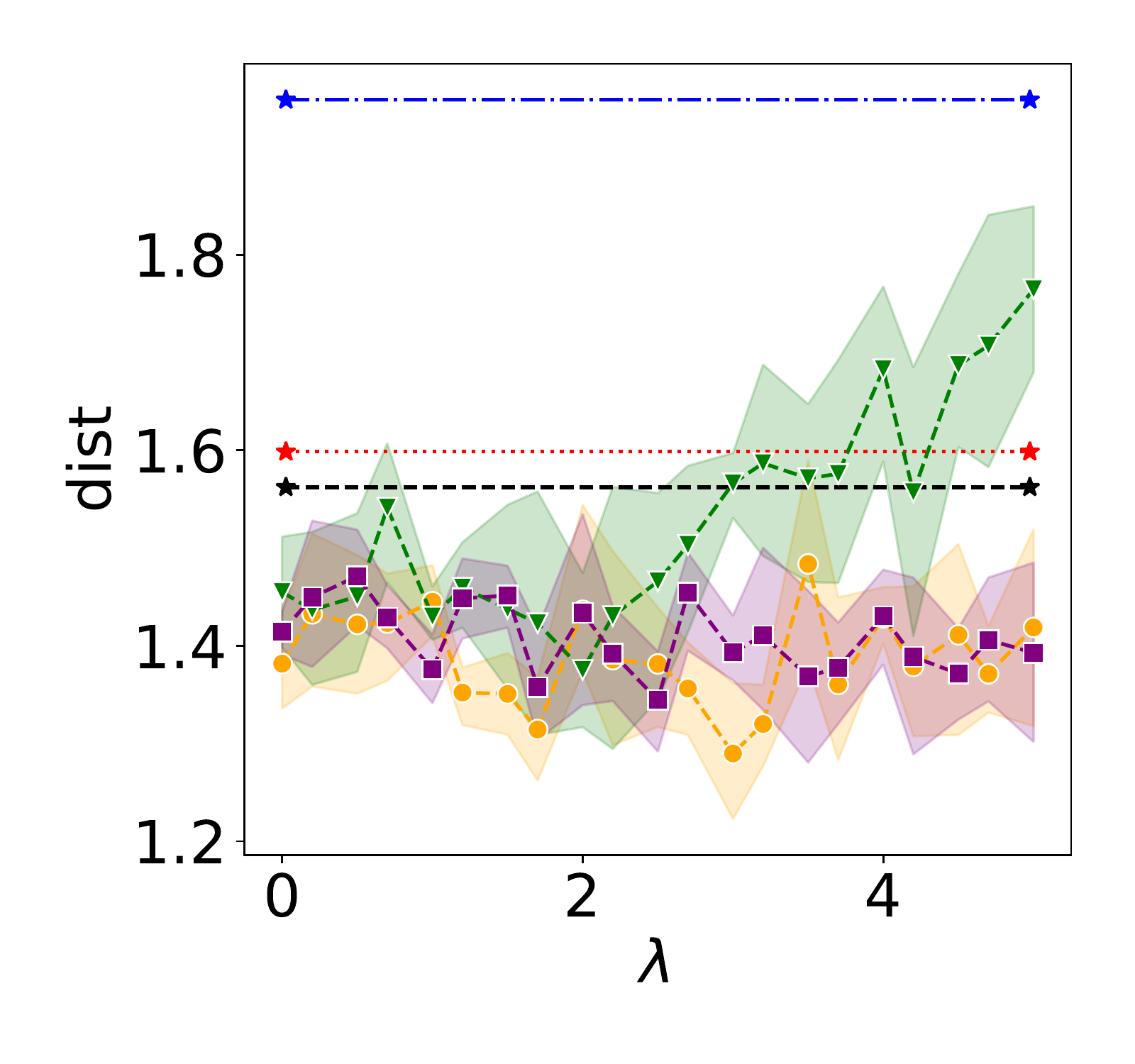}}
		\caption{2D Push: Distance, $\lambda$}\label{fig.lambda_vs_cost_g}
	\end{subfigure}
    \caption{
    Figures show the test-time performance of each adversary in 1D and 2D Push, for different values of $\lambda$. Two empirical performance metrics are plotted, $\cost$ and $\dist$, which reflect the cost of the attack and the distance to the attack goal, respectively. More concretely, for a given adversary-victim pair $(\theta, \phi)$, we sample $K$ trajectories $\tau_k$ of length $T$ and calculate $\cost(\{\tau_k\}) = \frac{1}{T\cdot K} \sum_{s \in \tau_k}\norm{\polap(s, \cdot ) - \initpi(s, \cdot )}_{1}$ and $\dist(\{\tau_k\}) = \frac{1}{T\cdot K} \sum_{s \in \tau_k}\norm{\pollp(s, \cdot ) - \targetpi(s, \cdot )}_{1}$. 
    For each $\lambda$, we train $5$ adversarial policies (using $5$ different random seeds). For each adversarial policy, we train $5$ victim policies (using $5$ different random seeds) against this adversarial policy. The results show the mean and 95\% confidence intervals of the  obtained data points. 
    In the appendix, we provide the confidence intervals 
    for baselines whose behavior does not change with $\lambda$. 
    }
	\label{fig.alternating_updates}
\end{figure*}

\textbf{Results. } In order to show the efficacy of our conservative policy search algorithm, 
we consider $4$ different algorithms: {\em Naive} baseline--in the Navigation environment it sets $\pi_1$ to always take ``right'', and in the Inventory Management, $\pi_1$ buys $7$ items; 
b) {\em Conservative PS} ({\bf CPS})--the policy search algorithm from  
the previous section
that sets $\lambda = 20$, $\delta = 0.01$, and $\delta_{\eps} = 0.1$; c) {\em Constraints Only PS} ({\bf COPS})--a modification of CPS that ignores $\Cost$; d) {\em Unconservative PS} ({\bf UPS})--a modification of CPS that sets $\delta = 1$.\footnote{To solve \eqref{prob.instance_1.rel}, we use CVXPY solver in our experiments (see \cite{diamond2016cvxpy,agrawal2018rewriting}). We provide additional details in the appendix.} 

Fig. \ref{fig.performance} compares these algorithmic approaches along two dimensions. We test the effect of the victim's sub-optimality on the cost and the effect the attacker's influence over the victim's peer on the cost. The results show that our algorithmic approach can lead to a significant cost reduction compared to the baselines. These results demonstrate the importance of having: a) a cost-guided search that does not only aim to satisfy the constraint of the optimization problem, but also minimizes the attack cost (CPS outperforms COPS), b) conservative updates that account for the change in occupancy measures when adopting a new solution (CPS outperforms UPS). 
Fig. \ref{fig.ablation} shows the effect that the  hyperparameters have on the performance of CPS. These results confirm that conservative updates are important, especially in non-ergodic environments (Fig. \ref{fig.cost_vs_eps_inventory_b} and Fig. \ref{fig.cost_vs_inf_inventory_b} for $\delta = 0.1$ and $\delta = 1.0$), where the performance critically depends on $\eps$ and $\iota$. We observe similar instabilities for UPS in Fig. \ref{fig.cost_vs_eps_inventory} and Fig. \ref{fig.cost_vs_inf_inventory}.

\subsection{Experiments for Alternating Policy Updates}\label{sec.experiments.apu}

\textbf{Push Environments.} We consider two multi-agent RL environments inspired by prior work \cite{mordatch2018emergence,terry2021pettingzoo}. We refer to them as Push environments. Both of them have a continuous state space, and are modifications of 
environments from \cite{terry2021pettingzoo}.
 In  Push environments, the victim is rewarded based on the distance to a given goal location. 
 The target policy stands still if the distance to the goal is within a certain interval, and otherwise moves towards this area. The default policy of the adversary moves towards the goal and stays there. 
 We consider two variants. In 1D Push, the agents can move move left, right, or stand still, on a line segment. In 2D Push, the agents have two additional actions, up and down, and are located 
in a plane. Compared to the 1D version, the reward of the victim has an additional penalty term since the adversary cannot easily ``block'' the learner from reaching the goal. Note that in 2D Push the target policy is stochastic and encodes the direction to the goal (while minimizing its support). I.e., outside of the annulus where the victim should stay still, the target policy is identified by the vector that connects the victim's position and the closest point of the annulus. We specify other details in the appendix. 
 
\textbf{Results.} To test the efficacy of our alternating policy updates approach, we consider $4$ different algorithms trained with Proximal Policy Optimization (PPO)~\cite{schulman2017proximal}: a) {\em Random Adversary} ({\bf RA}) baseline--the adversary takes actions uniformly at random; b) {\em Move to Target Position} ({\bf MTP}) baseline for 1D Push--the adversary follows a hard-coded policy that moves to the target position;
 c) {\em Equal Distance} ({\bf ED}) baseline for 2D Push--the adversary follows a hard-coded policy that keeps the same distance to the victim and goal; 
d) Alternating Policy Updates ({\bf APU})--our approach from the previous section, where PPO is used for policy updates and the victim is trained for 5 times as many episodes per epoch as the adversary; e) Random Learner ({\bf RL})--a modification of APU which fixes the victim's parameters $\phi$ to random values; 
f) Symmetric APU ({\bf SAPU})--a modification of APU in which $\theta$ and $\phi$ are updated in a symmetric manner, i.e., using the same number of episodes; g) Distance-only APU ({\bf DAPU})--a modification of APU that does not use the imitation learning loss.\footnote{We use the implementation from stable-baselines3~\cite{raffin2019stable}. We provide additional training details in the appendix.}
Fig. \ref{fig.alternating_updates} compares the test-time performance of these approaches for different values of $\lambda$. For larger values of $\lambda$, our approach outperforms naive baselines (RA, MTP, ED) both in terms of the attack cost and success; 
only MTP has a comparable attack costs for large $\lambda$ in 1D Push. In terms of the attack cost, APU achieves similar performance as its modifications in most cases, while outperforming SAPU in 2D Push. However, in terms of the success rate, it outperforms most of them for large enough $\lambda$. One exception is RL, which achieves similar performance in 2D Push. These results suggest that: a) it is important to train (the model of) the victim alongside the attacker (APU vs. RL in 1D Push), b) it is important to have asymmetric update rules that more conservatively update the adversary's policy (APU vs. SAPU), c) it is important to have a cost guided optimization that does not only aim to optimize the attack success (APU vs. DAPU). 

\begin{remark} 
Alternating Policy Updates can also be applied to Navigation and Inventory Management, and we report the experimental results for these two environments in the appendix. 
\iftoggle{longversion}{For Inventory Management, the results are qualitatively similar to the ones we obtain for Push, with Alternating Policy Updates achieving significantly smaller distance $\dist$ than the baselines. For Navigation, we do not observe significant difference between the tested methods. }{} 
\end{remark}

\section{Conclusion}

In this paper, we studied a novel form of poisoning attacks in reinforcement learning based on adversarial policies. In this attack model, the attacker utilizes the presence of another agent to influence the behavior of a learning agent. We showed that such an implicit form of poisoning differs from the standard environment poisoning attack models in RL. In particular, the implicit attack model appears to be more restrictive in that it is not always feasible, while determining its feasibility is a computationally challenging problem. In contrast, and as argued by prior work, this type of attack may be more practical as the aspects that are controlled by an attacker are expressed through an agency, i.e., the learner's peer.
Hence, we believe that our results contribute valuable insights important for understanding trade-offs between different attack models. One of the most interesting future research directions is to consider settings with more than two agents. In such settings, an attacker has to reason about the agents' equilibrium behavior, which brings additional computational challenges. 
On the other hand, the attacker could potentially use the conflicting goals of the agents in its own favor, which may decrease the cost of the attack.

\section{Acknowledgements}

This research was, in part, funded by the Deutsche Forschungsgemeinschaft (DFG, German Research Foundation) – project number 467367360.

\bibliographystyle{ACM-Reference-Format} 
\bibliography{main}

\iftoggle{longversion}{
\clearpage
\onecolumn
\appendix 
{\allowdisplaybreaks
\section{Appendix Overview}

The content of the appendix of this paper is split in the following way: 
\begin{itemize}
    \item Appendix \ref{sec.app.experiments} -- \nameref{sec.app.experiments} contains additional information about the experiments, including additional results that were not presented in the main text.

    \item Appendix \ref{sec.app.background} -- \nameref{sec.app.background} provides additional details about the setting, including supporting lemmas for proving our formal results from the main text.

    \item Appendix \ref{sec.app.comp_complexity} -- \nameref{sec.app.comp_complexity} provides the proof of our NP-hardness results in Section \ref{sec.characterization}
    .
    We also argue in this section that the optimization problem \eqref{prob.instance_1} can be efficiently solved under the assumptions of Theorem \ref{thm.upper_bound_2}.  

    \item Appendix \ref{sec.app.lower_bound} -- \nameref{sec.app.lower_bound} provides the proof of the lower bound in Section \ref{sec.characterization}. 
    
    \item Appendix \ref{sec.app.upper_bounds} -- \nameref{sec.app.upper_bounds} provides the proofs of the upper bound in Section \ref{sec.characterization}.
    This appendix provides additional result, stated in Proposition \ref{prop.upper_bound_1}, which was referenced in in Section \ref{sec.characterization}.
    
    \item Appendix \ref{sec.app.algorihtms} -- \nameref{sec.app.algorihtms} provides additional details about our algorithmic approaches from Section \ref{sec.algorithm}, including a more general version of the conservative policy search algorithm, applicable to non-ergodic environments.
\end{itemize}

\section{Experiments: Additional Details and Results}\label{sec.app.experiments}

In this section of the appendix, we provide additional details about our experiments. We start by providing a detailed description of the environments.

\subsection{Detailed Description of the Environments}

In this subsection, we provide additional details about the experimental test-beds that we used in our experiments.

\textbf{Navigation Environment.} This environment is based on the navigation environment from \cite{rakhsha2021policy}, developed  
~\\
\begin{minipage}{0.75\textwidth} 
 for testing environment poisoning attacks on a single RL agent in a tabular setting--we refer the reader to \cite{rakhsha2021policy} for the description of the original environment.
To make this environment two agent, we modify the action space to include the actions of the attacker. 
The figure on the right depicts the environment.
The attacker has the same action space as the victim (take ``left'' (blue) or ``right'' (red)), and it affects both the reward of the victim and the transition dynamics in the following way. Compared to the original environment, the new environment has a loop between states $s_1$, $s_2$, and $s_3$. In all the states, if the adversary chooses the same action as the victim, the victim gets  $r_{\text{match}} = 5.0$.  Otherwise, if their actions disagree, the obtained reward is equal to $r_{\text{mismatch}} = -5.0$. 
Additionally, states $s_0$, $s_4$, and $s_8$ have a base (action-independent) reward equal to $r_{\text{base}_1} = 5$, while state $s_2$ has base reward $r_{\text{base}_2} = 50$. In all the states, except $s_1$, $s_2$, and $s_3$, the agents move in the victim's desired direction (transitions to the intended state as determined by its action) with probability $\bar p=0.9$ if the victim's actions match that of the adversary. Otherwise, the next state is chosen uniformly at random.
\end{minipage}
\begin{minipage}{0.25\textwidth}
\captionsetup{type=figure}
\captionsetup{aboveskip=2pt,belowskip=2pt}
\resizebox{\textwidth}{!}{%
\begin{tikzpicture}[main/.style = {draw, circle}, node distance={15mm}, thick] 
\node[main] (0) {$s_0$};
\node[main] (1) [right of=0] {$s_1$};
\node[main] (2) [below right of=1] {$s_2$}; 
\node[main] (3) [above right of=2] {$s_3$};
\node[main] (4) [above of=3] {$s_4$};
\node[main] (5) [above left of=4] {$s_5$};
\node[main] (6) [above of=5] {$s_6$};
\node[main] (7) [above right of=4] {$s_7$};
\node[main] (8) [above of=7] {$s_8$};

\draw (0) edge[->, bend left, sloped, red] (1);
\draw (1) edge[->, bend left, sloped, red] (2);
\draw (2) edge[->, bend left, sloped, red] (3);
\draw (3) edge[->, bend left, sloped, red] (4);
\draw (4) edge[->, bend left, sloped, red] (7);
\draw (7) edge[->, bend left, sloped, red] (8);
\draw (8) edge[->, loop left, sloped, red] (8);
\draw (6) edge[->, bend left, sloped, red] (5);
\draw (5) edge[->, bend left, sloped, red] (4);

\draw (0) edge[->, loop left, sloped, blue] (0);
\draw (1) edge[->, bend left, sloped, blue] (0);
\draw (2) edge[->, bend right, sloped, blue] (3);
\draw (3) edge[->, bend right, sloped, blue] (1);
\draw (4) edge[->, bend left, sloped, blue] (5);
\draw (5) edge[->, bend left, sloped, blue] (6);
\draw (7) edge[->, bend left, sloped, blue] (4);
\draw (8) edge[->, bend left, sloped, blue] (7);
\draw (6) edge[->, loop right, sloped, blue] (6);
\end{tikzpicture} 
}%

\caption{Navigation}
\label{fig.navigation}
\end{minipage}
  The initial state is $s_0$. In state $s_1$ (resp. $s_3$), the adversary (resp. victim) controls the transitions and the agents move in the intended direction, $s_0$ or $s_2$ (resp. $s_1$ or $s_4$), with probability $\bar p=0.9$, otherwise, they transition to a random state (chosen u.a.r.). From state $s_2$, agents transition to $s_3$ with probability $\bar p=0.9$ regardless of their actions, and with probability $1-\bar p=0.1$, the next state is chosen uniformly at random. The default policy of the attacker is to always take ``left'', while the target policy is that the victim takes action ``right'' in each state. We use this environment for testing the efficacy of our conservative policy search algorithm. Note that the environment is ergodic.

\textbf{Inventory Management.} 
We consider a modified version of the inventory management environment from \cite{Puterman1994}, with two agents. As in the original version we have a manager of a warehouse that decides on the current inventory of a warehouse (the number of stocks/items in the warehouse). In our two agent version of the environment, the victim is controlling the amount of stock on the inventor and the attacker is controlling the demand. In our formalism, state $s \in \{0, 1, …, M-1\}$ indicates the number of items in the warehouse. In our experiment, $M=10$. The victim's actions are ``buy'' actions that select between $0$ and $M-1$ items. Since the capacity of the warehouse is $M-1$, some of the actions are not valid in all the states, i.e., $s + a_2 < M$. The attacker's actions are ``create demand'' of $0$--$M-1$ items. Note that if the demand exceeds the supply, it will be rejected. Rewards for this environment is defined as $R_\cL(s, a_\cA, a_\cL) = \ind{a_1 \le a_2 + s} \cdot \text{sell}(a_1) - \text{hold}(s + a_2) - \text{buy}(a_2)$, where $\text{sell}(x) = 10 \cdot x$, $\text{hold}(x) = x$, and $\text{buy}(x) = 4+ 2\cdot x$ (for $x>0$). Transitions are defined by $P(s, a_1, a_2, s) = 1.0$ if $s + a_2 < a_1$,  and  otherwise $P(s, a_1, a_2, s+a_2-a_1) = 1.0$. The target policy is defined by the following rule: if there are more than $k = 7$ items, don’t buy anything, otherwise buy $k - s$ items. The discount factor is set to $\gamma = 0.9$ and the starting state is $s = 0$. 

\textbf{Push Environments.} We consider two multi-agent RL environments inspired by prior work \cite{mordatch2018emergence,terry2021pettingzoo}. We refer to them as 1D and 2D Push; both of them have a continuous state space, and are modifications of the environments from \cite{terry2021pettingzoo}.\footnote{In particular, ``Simple Adversary'' environment. }
 In Push environments, the victim is rewarded based on the distance to the goal with a potential penalty if it is close to the adversary. The target policy stands still if the distance to the goal is within a certain interval, and otherwise moves towards this area. The default policy of the adversary moves towards the goal and stays there. We consider two variants, 1D Push and 2D Push, specified as follows and shown in Fig. \ref{fig.1D_env} and Fig. \ref{fig.2D_env}, respectively. In the 1D version, the agents can move left, right, or stand still, on a line segment ($L = 5$ units long).
 Relative to the left end of the line segment, the goal is located $x_{g} =  3$ units away, the victim is initially located at a random position left of the target, the adversary at a random position right of the target.
 Transitions are defined as in \cite{terry2021pettingzoo} (``Simple Adversary'' environment).
Both agents observe their positions, velocities, and the position of the goal. 
The victim's reward depends on its distance to the goal and the adversary and is equal to $r^t = -(x_{l}^t-x_{g}^t)^2$.
Note that the victim cannot go through the adversary, i.e., the adversary can block the victim from reaching the goal. The target policy is deterministic and takes left if $x_{l} < 2.25$, right if $x_{l} > 2.75 $, and no-op (stand still) otherwise. The 2D version is an extension of 1D. In this version, the agents have two additional actions, up and down, and are located 
in a plane.
The agents' initial locations, $\mathbf x_{l}^0$ and $\mathbf x_{a}^0$,  are selected randomly, relative to the goal's position $\mathbf x_{g}$.
The transitions are defined as in 1D, but extended to vertical direction (for up and down actions). The reward of the victim at time $t$ is $r^t = -(\mathbf x_{l}^t-\mathbf x_{g}^t)^2 - 20 \cdot \ind{(\mathbf x_{l}^t-\mathbf x_{a}^t)^2 \le 0.5}$. Compared to the 1D version, we have an additional penalty term since the adversary cannot easily ``block'' the victim from reaching the goal. Outside of the annulus where the victim should stay still, the target policy is identified by the vector that connects the victim's position and the closest point of the annulus: the target policy is stochastic and encodes the direction of this vector (while minimizing its support). 

\begin{figure*}[ht]
    	\centering
		\resizebox{0.8\linewidth}{!}
		{\frame{\includegraphics{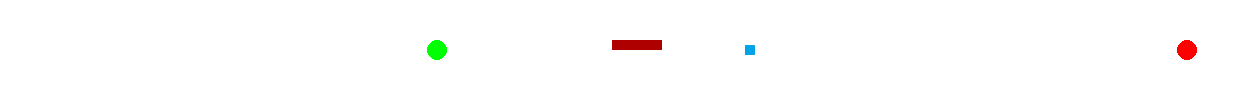}}}
		
		\caption{1D Push environment. The goal is near the center of the line segment and is colored in blue, the victim is colored in green, the adversary is colored in red, and the target area is colored in dark red.
		}\label{fig.1D_env}
\end{figure*}

\begin{figure*}[ht]
    \centering
		\resizebox{0.5\linewidth}{!}
		{\frame{\includegraphics{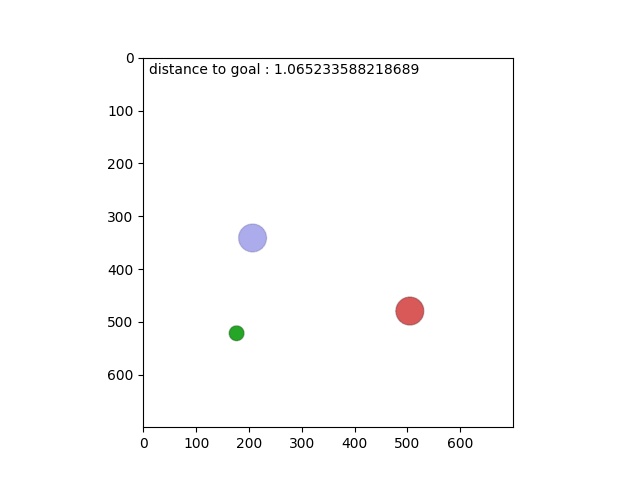}}}
		
		\caption{2D Push environment. The goal is colored in green, the victim is colored in red, the adversary is colored in blue. }\label{fig.2D_env}
\end{figure*}

\subsection{Additional Results for Alternating Policy Updates}

In this subsection, we provide additional experimental results for Alternating Policy Updates (APU), on all the environments we considered in this work. 

\textbf{Navigation and Inventory Management.} As mentioned in the main part of the paper, APU can also be applied to Navigation and Inventory Management. In Fig. \ref{fig.additional_APU_results}, we show the same set of resutls as in Fig. \ref{fig.alternating_updates} but for Navigation and Inventory Management. As can be seen from Fig. \ref{fig.Navigation_cost_APU} and \ref{fig.Navigation_dist_APU}, the tested methods have similar performance in Navigation. In Inventory Management (Fig. \ref{fig.IM_cost_APU} and Fig. \ref{fig.IM_dist_APU}), Alternating Policy Updates outperforms the baseline in terms of $dist$, which indicates that it is more successful in forcing the target policy. On the other hand, most of the method perform similarly in terms of $dist$, with Random Adversary (RA) having significant fluctuations. 

\begin{figure*}[ht]
    \centering
		\begin{subfigure}{0.96\textwidth}
		\resizebox{0.5\linewidth}{!}{\includegraphics{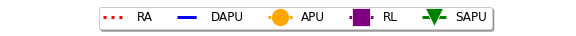}}
		\label{fig.Navigation_legend}
		\resizebox{0.5\linewidth}{!}{\includegraphics{figures/alternating_policy_updates/appendix/Inventorylegend.png}}
		\label{fig.Inventory_legend}
	\end{subfigure}
	\centering
	\begin{subfigure}{0.24\textwidth}
    	\centering
		\resizebox{0.98\linewidth}{!}
		{\includegraphics{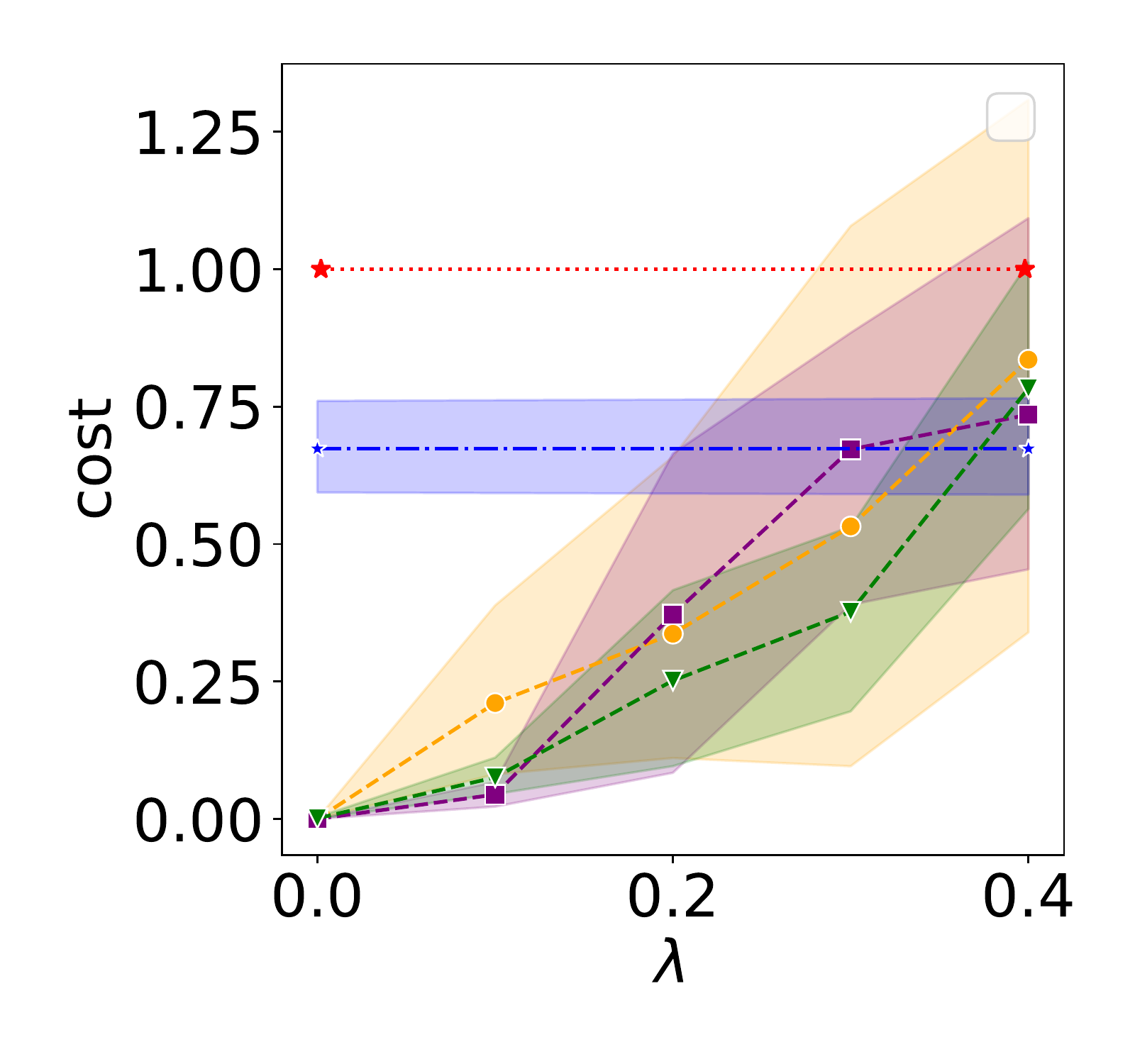}}
		
		\caption{Navigation: Cost, $\lambda$}\label{fig.Navigation_cost_APU}
	\end{subfigure}
	\begin{subfigure}{.24\textwidth}
    	\centering
		\resizebox{0.98\linewidth}{!}
		{\includegraphics{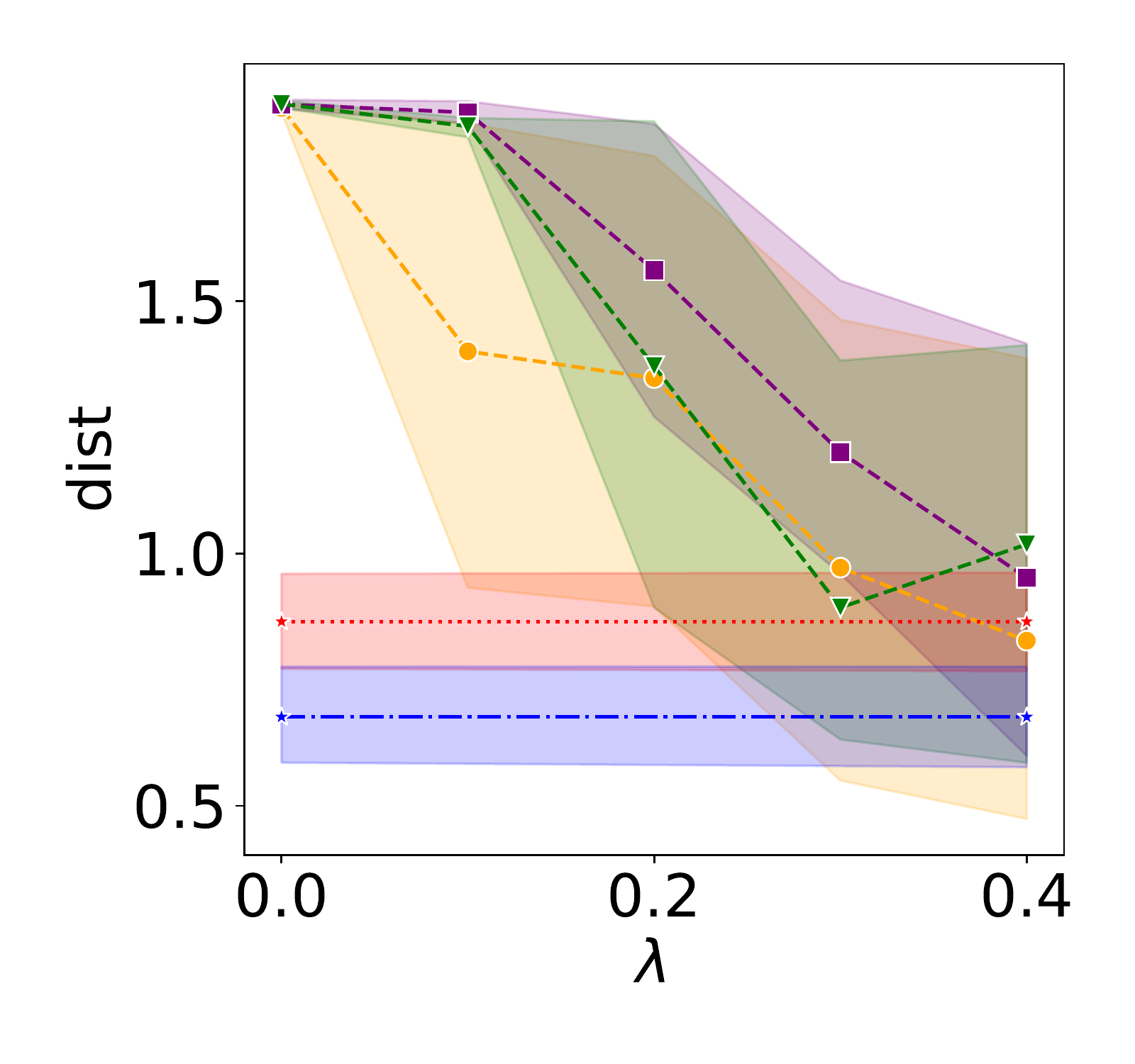}}
		
		\caption{Navigation: Distance, $\lambda$}\label{fig.Navigation_dist_APU}
	\end{subfigure}
    \begin{subfigure}{0.24\textwidth}
    	\centering
		\resizebox{0.98\linewidth}{!}
		{\includegraphics{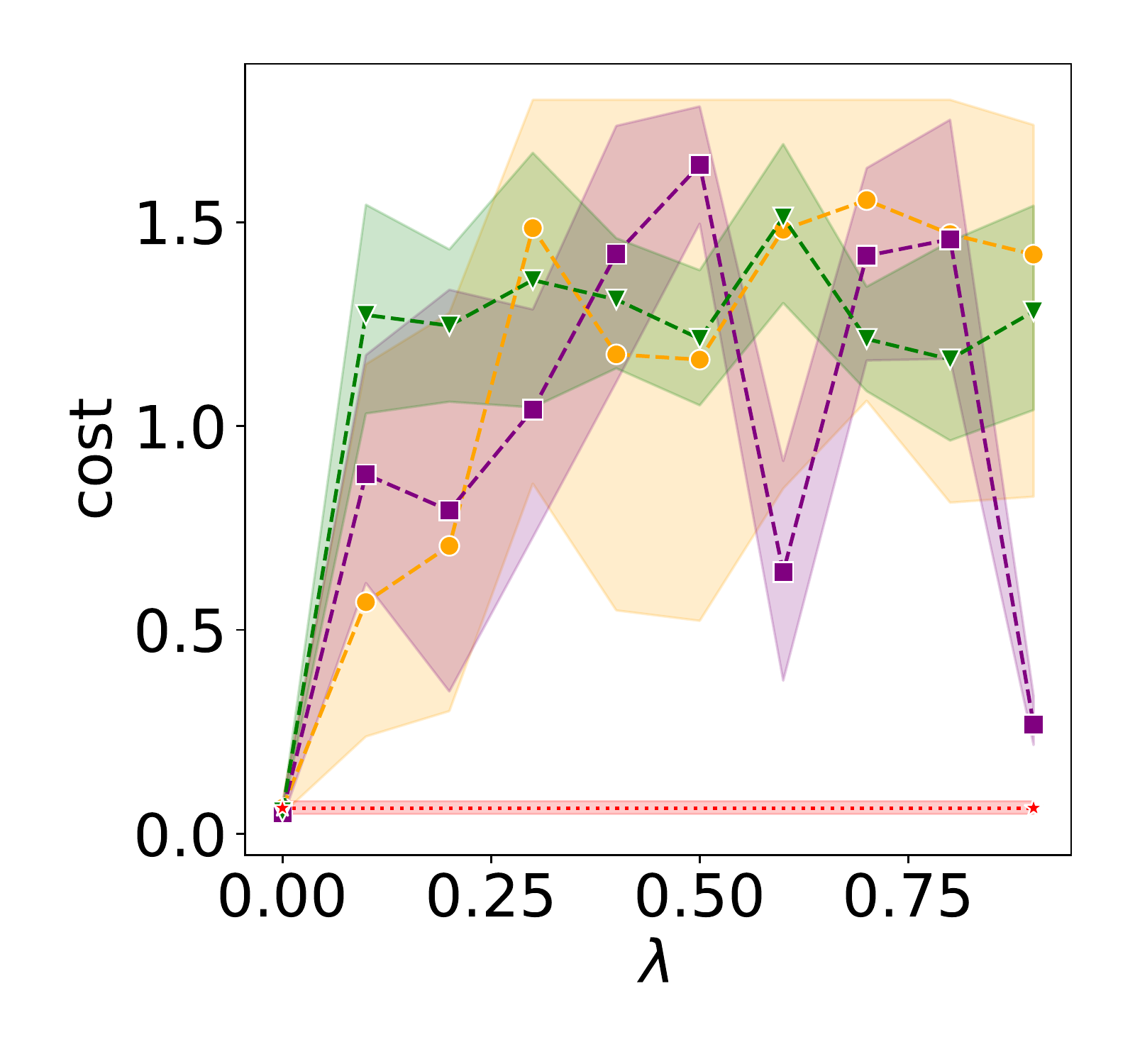}}
		
		\caption{Inventory: Cost, $\lambda$}\label{fig.IM_cost_APU}
	\end{subfigure}
	\begin{subfigure}{.24\textwidth}
    	\centering
		\resizebox{0.98\linewidth}{!}
		{\includegraphics{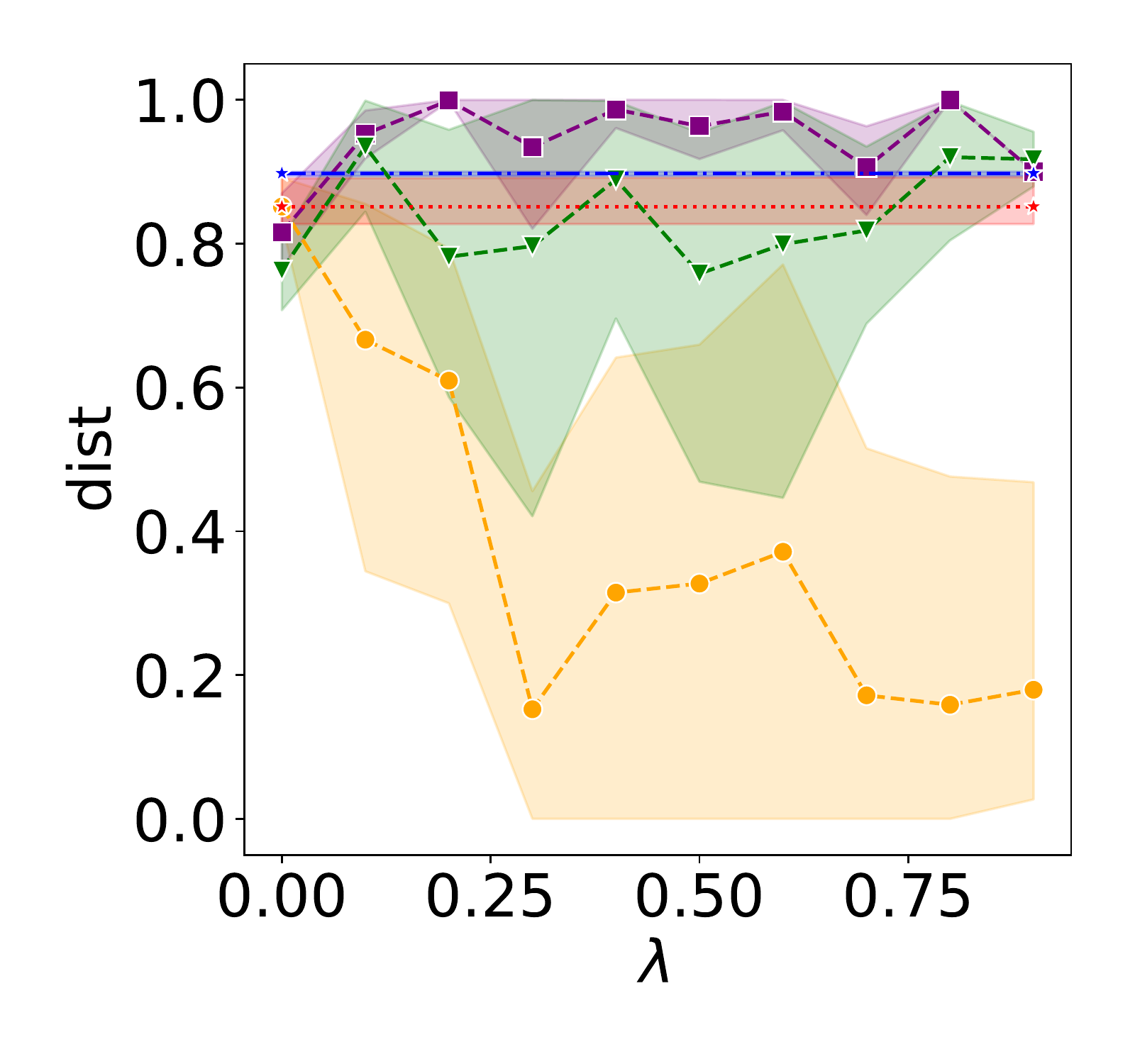}}
		
		\caption{Inventory: Distance, $\lambda$}\label{fig.IM_dist_APU}
	\end{subfigure}
		\caption{The figure show the same set of results as Fig. \ref{fig.alternating_updates} but for Navigation and Inventory Management.   }\label{fig.additional_APU_results}
\end{figure*}

\textbf{Push Environments.} Next, we provide additional results for the  Push environments. In Fig. \ref{fig.alternating_updates.additinoal_results.naive_confidence} we show a more complete version of Fig. \ref{fig.alternating_updates} from the main text, that includes 95\% confidence intervals for baselines whose behavior does not change with $\lambda$. Fig. \ref{fig.push.objective} compares APU, RL and SAPU w.r.t. the objective in \eqref{prob.instance_2}: APU generally finds better solutions.

\begin{figure*}[ht]
    \centering
		\begin{subfigure}{0.96\textwidth}
		\resizebox{0.5\linewidth}{!}{\includegraphics{figures/alternating_policy_updates/1Dlegend.png}}
		\label{fig.Push1D_legend.additional.naive}
		\resizebox{0.5\linewidth}{!}{\includegraphics{figures/alternating_policy_updates/2Dlegend.png}}
		\label{fig.Push2D_legend.additional.naive}
	\end{subfigure}
	\centering
	\begin{subfigure}{0.24\textwidth}
    	\centering
		\resizebox{0.98\linewidth}{!}
		{\includegraphics{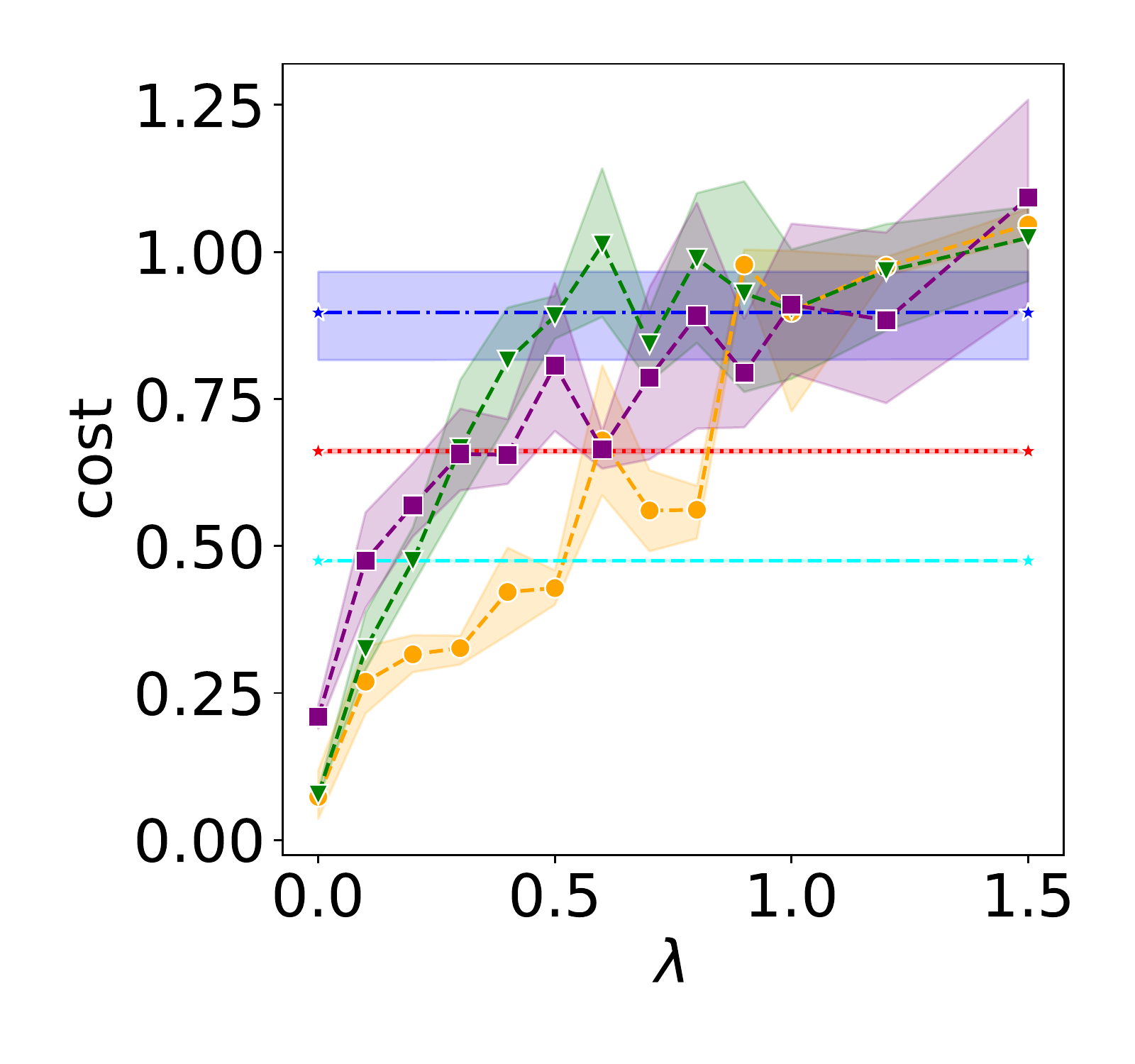}}
		
		\caption{1D Push: Cost, $\lambda$}\label{fig.dist_vs_lambda_g.additional.naive}
	\end{subfigure}
	\begin{subfigure}{.24\textwidth}
    	\centering
		\resizebox{0.98\linewidth}{!}
		{\includegraphics{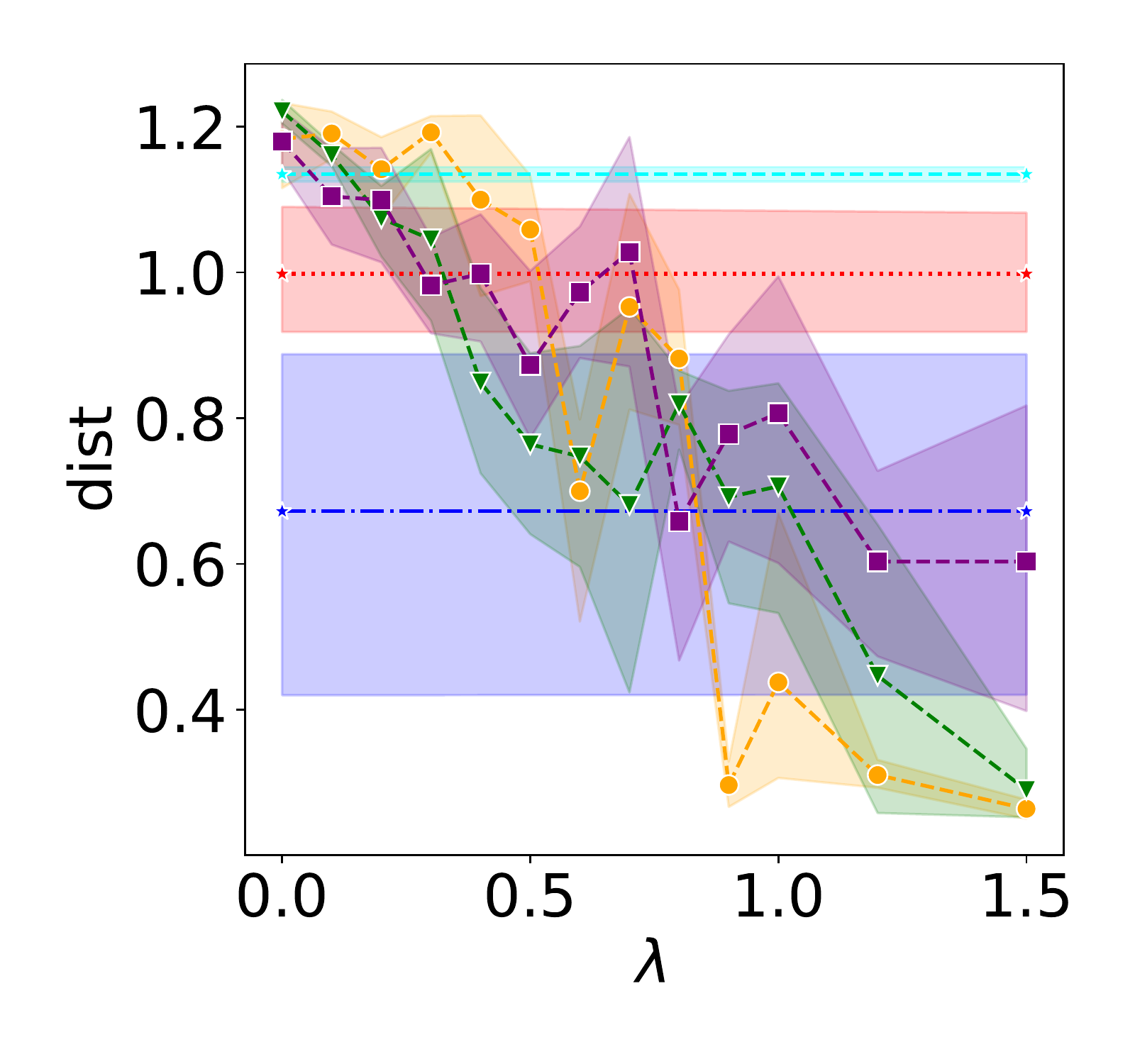}}
		
		\caption{1D Push: Distance, $\lambda$}\label{fig.success_vs_inf_g.additional.naive}
	\end{subfigure}
    \begin{subfigure}{0.24\textwidth}
    	\centering
		\resizebox{0.98\linewidth}{!}
		{\includegraphics{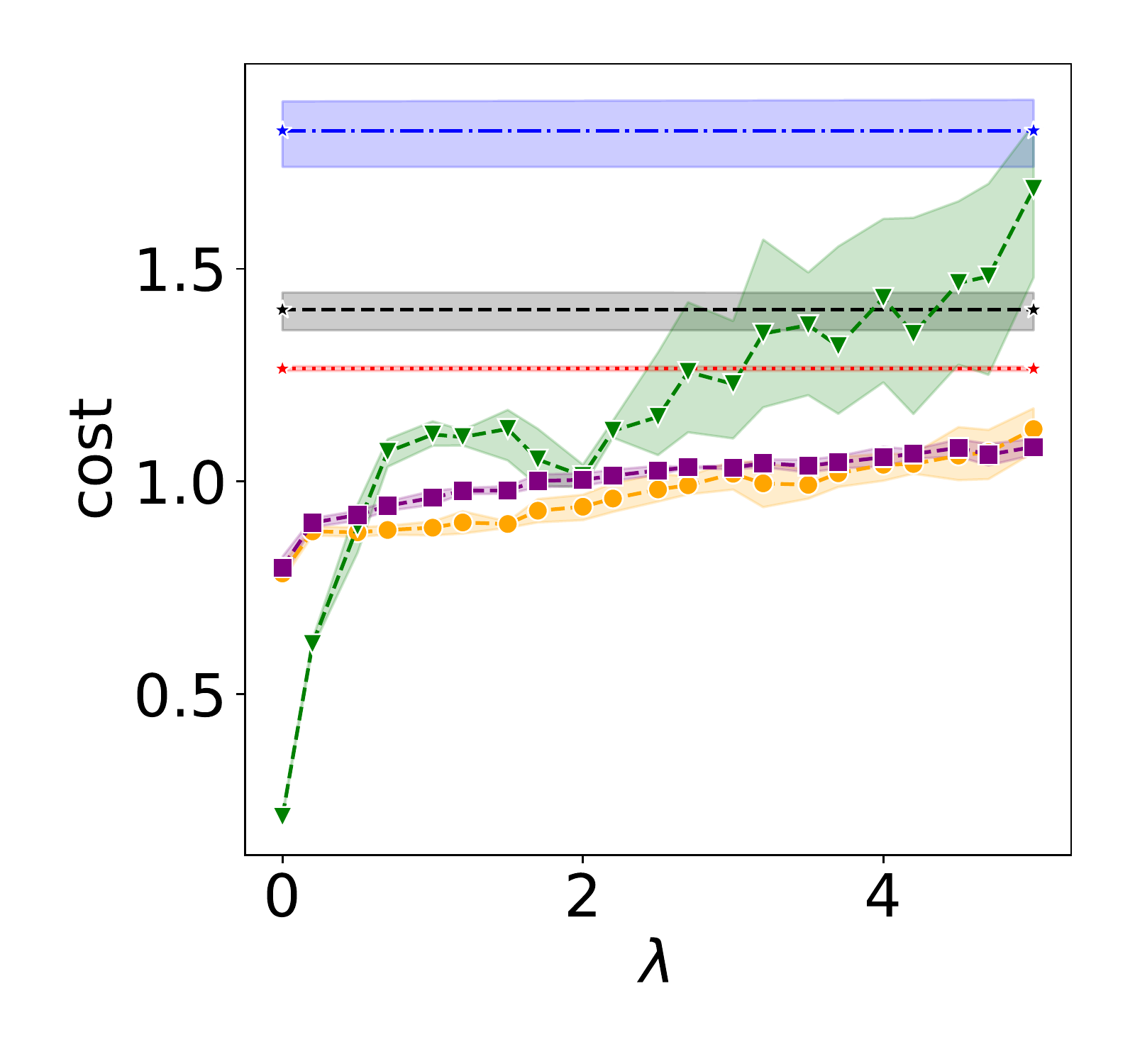}}
		
		\caption{2D Push: Cost, $\lambda$}\label{fig.cost_vs_lambda_g.additional.naive}
	\end{subfigure}
	\begin{subfigure}{.24\textwidth}
    	\centering
		\resizebox{0.98\linewidth}{!}
		{\includegraphics{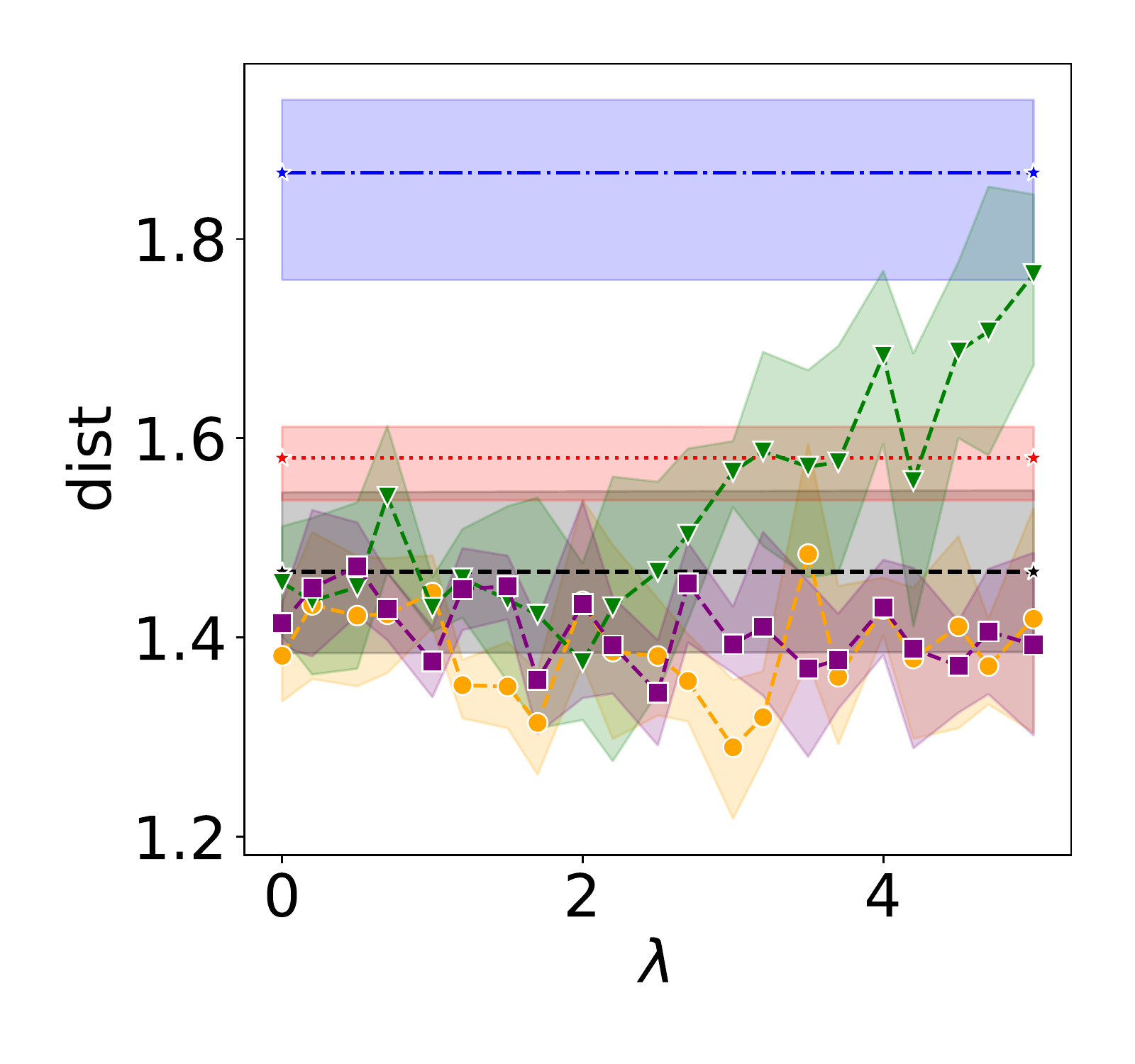}}
		
		\caption{2D Push: Distance, $\lambda$}\label{fig.lambda_vs_cost_g.additional.naive}
	\end{subfigure}
		\caption{The figure is the same as Fig. \ref{fig.alternating_updates}, but additionally contains 95\% confidence intervals for baselines whose behavior does not change with $\lambda$. }\label{fig.alternating_updates.additinoal_results.naive_confidence}
\end{figure*}

\begin{figure*}[ht]
    \centering
    \begin{subfigure}{0.96\textwidth}
        \centering
		\resizebox{0.5\linewidth}{!}{\includegraphics{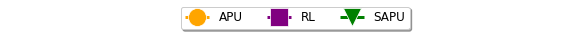}}
		\label{fig.push_objective_legend2}
	\end{subfigure}
    \centering
        \begin{subfigure}{0.24\textwidth}
    	\centering
		\resizebox{0.98\linewidth}{!}
		{\includegraphics{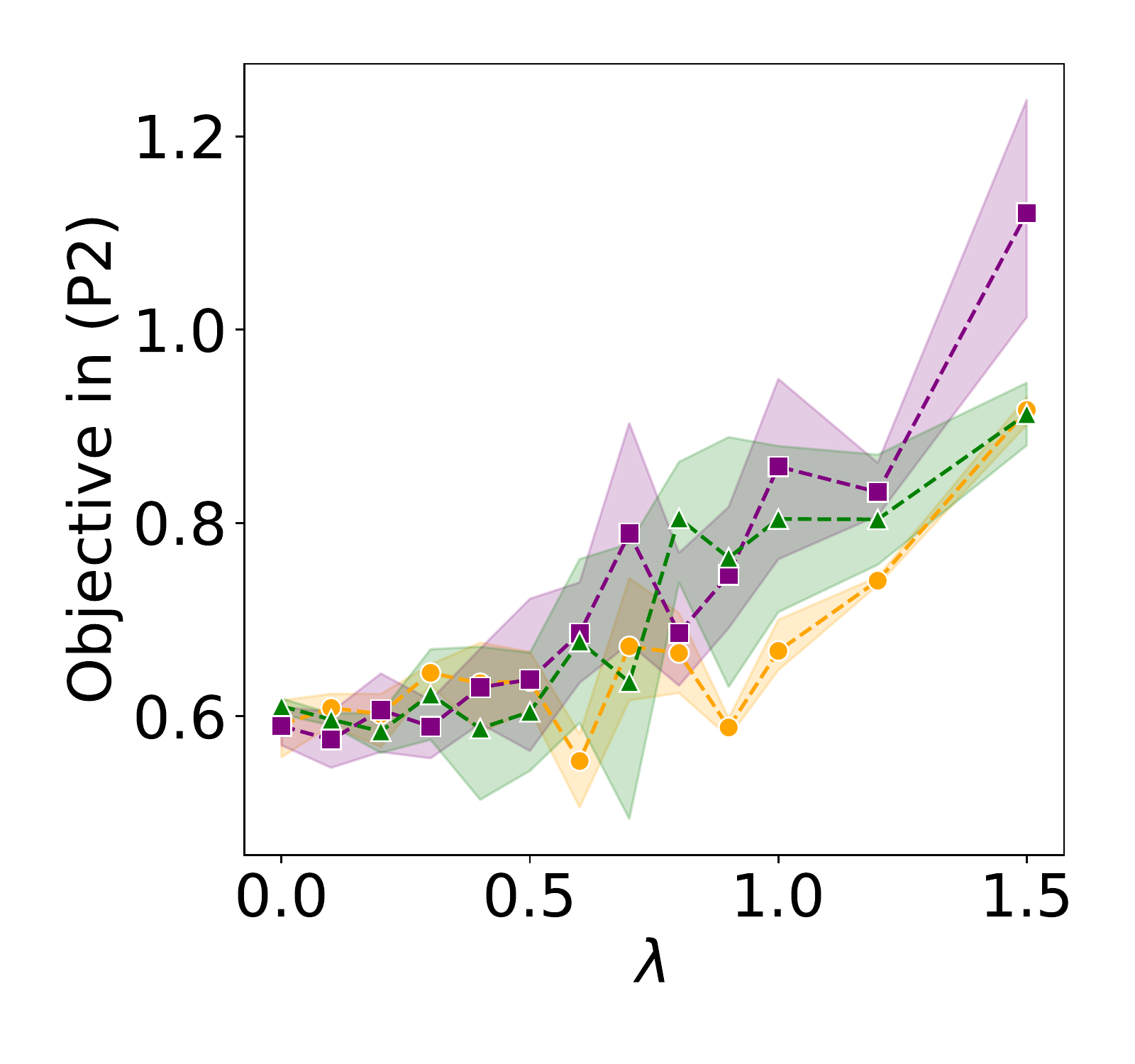}}
		\caption{1D Push}\label{fig.lambda_vs_objective.oned}
	\end{subfigure}
	\begin{subfigure}{.24\textwidth}
    	\centering
		\resizebox{0.98\linewidth}{!}
		{\includegraphics{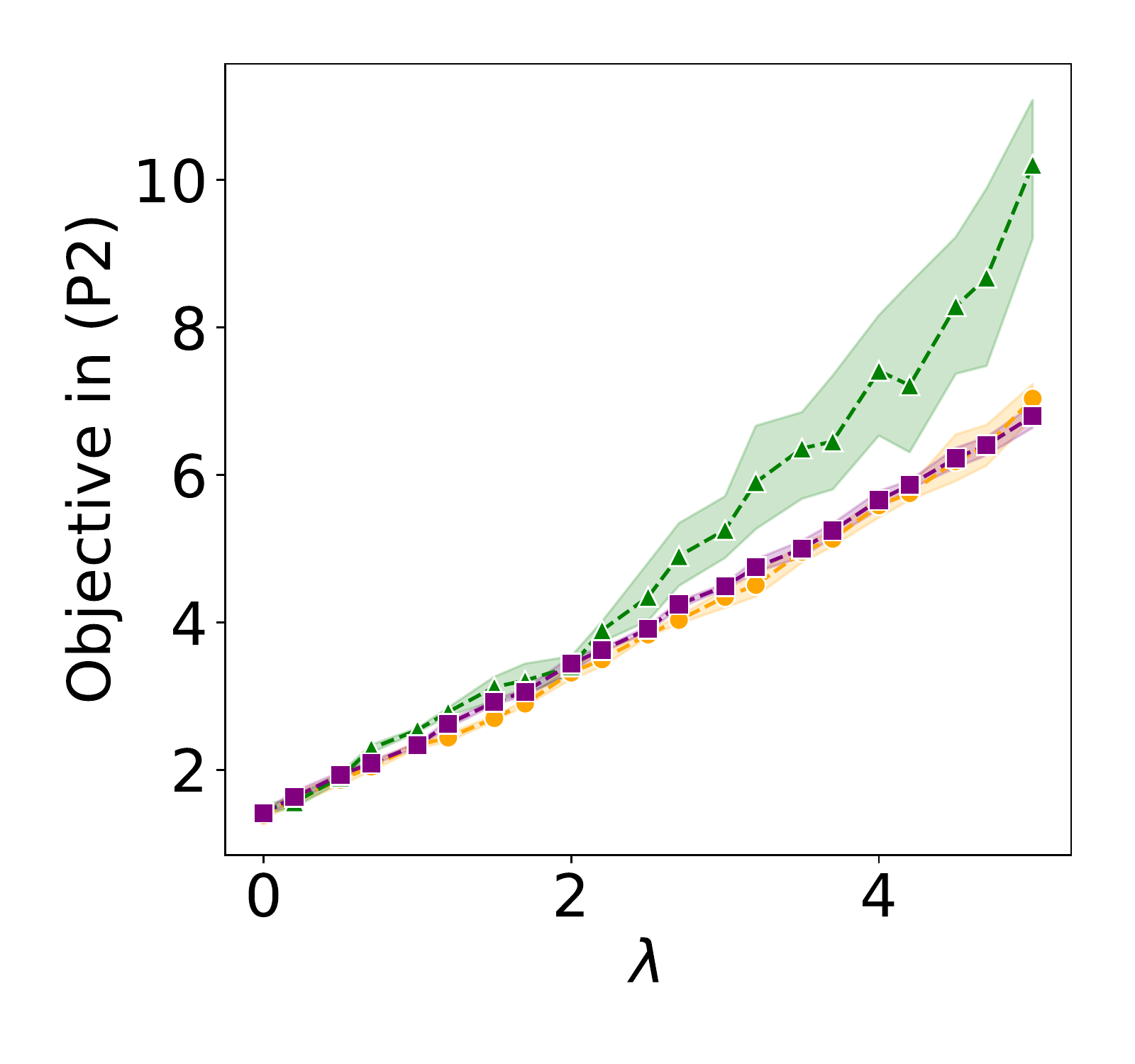}}
		
		\caption{2D Push}\label{fig.lambda_vs_objective_g.twod}
	\end{subfigure}
		\caption{The figures show the test-time performance of APU, RL, and SAPU adversary from Section \ref{sec.experiments.apu} in 1D and 2D Push, for different hyperparameters $\lambda$ in terms of the objective in \eqref{prob.instance_2}. The confidence intervals are obtained in the same way as in Fig. \ref{fig.alternating_updates}. In 1D Push, APU outperforms RL for larger values of $\lambda$. In 2D Push, APU outperforms SAPU for larger value of $\lambda$. In other cases, it APU leads to similar performance as RL and SPU. Hence, APU generally finds better solutions to \eqref{prob.instance_2} than its modifications. 
	}
		\label{fig.push.objective}
\end{figure*}

Fig. \ref{fig.2D_push.vary_penalty} shows how the attack performance changes as we vary the weight of the penalty term in 2D Push. As we can see from the figure, they are similar to the influence results from Section \ref{sec.experiments.CPS}. Namely, the cost of the attack decreases with the increase of penalty weight, since the attacker has more influence and hence the attack is more successfull.

\begin{figure*}[ht]
    \centering
    \begin{subfigure}{0.96\textwidth}
        \centering
		\resizebox{0.5\linewidth}{!}{\includegraphics{figures/alternating_policy_updates/2Dlegend.png}}
		\label{fig.Push2D_legend2.additional.naive}
	\end{subfigure}
    \centering
    \begin{subfigure}{0.24\textwidth}
    	\centering
		\resizebox{0.98\linewidth}{!}
		{\includegraphics{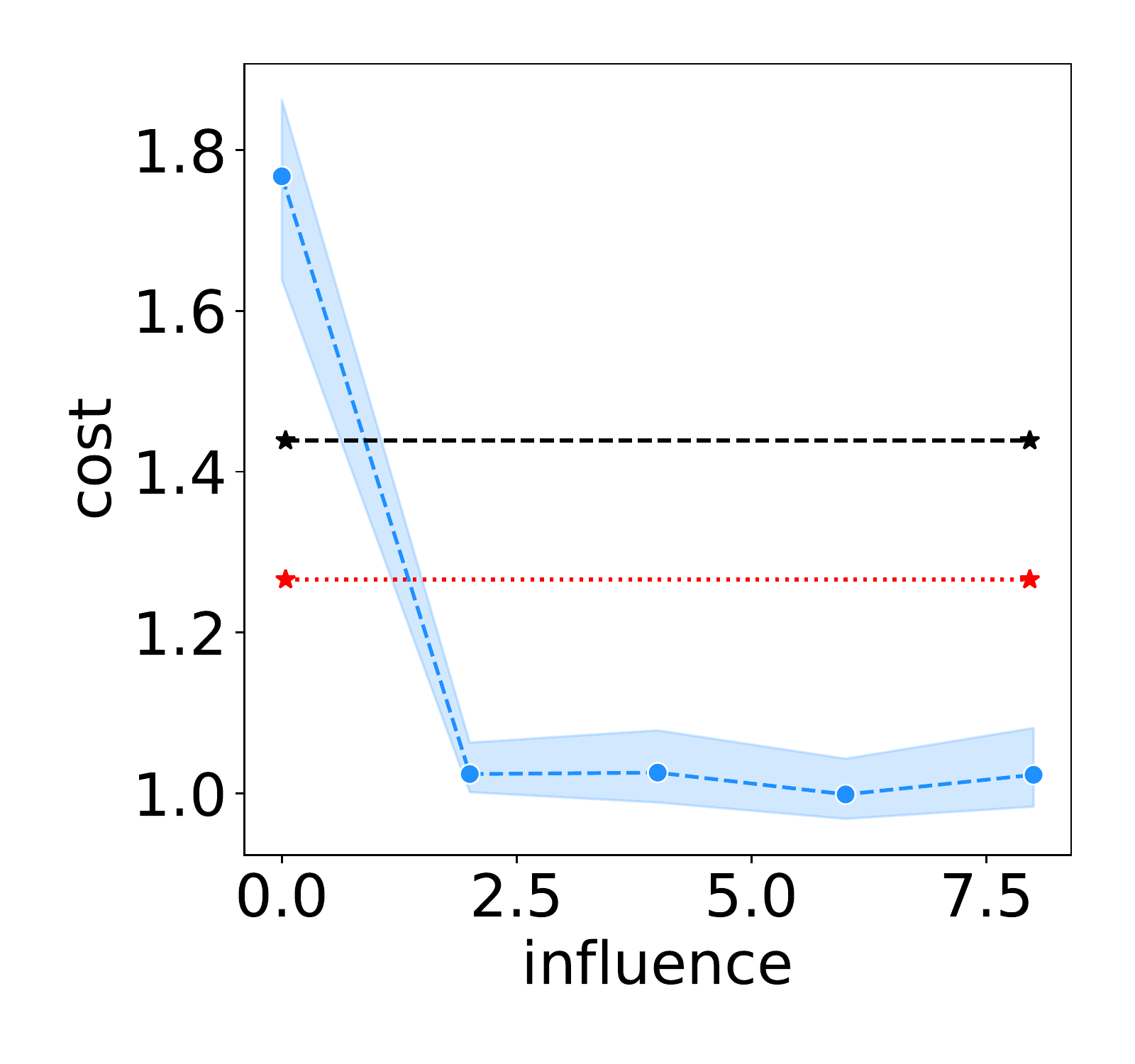}}
		\caption{2D Push: Cost, $w_{p}$}\label{fig.cost_vs_lambda_g.additional.naive}
	\end{subfigure}
	\begin{subfigure}{.24\textwidth}
    	\centering
		\resizebox{0.98\linewidth}{!}
		{\includegraphics{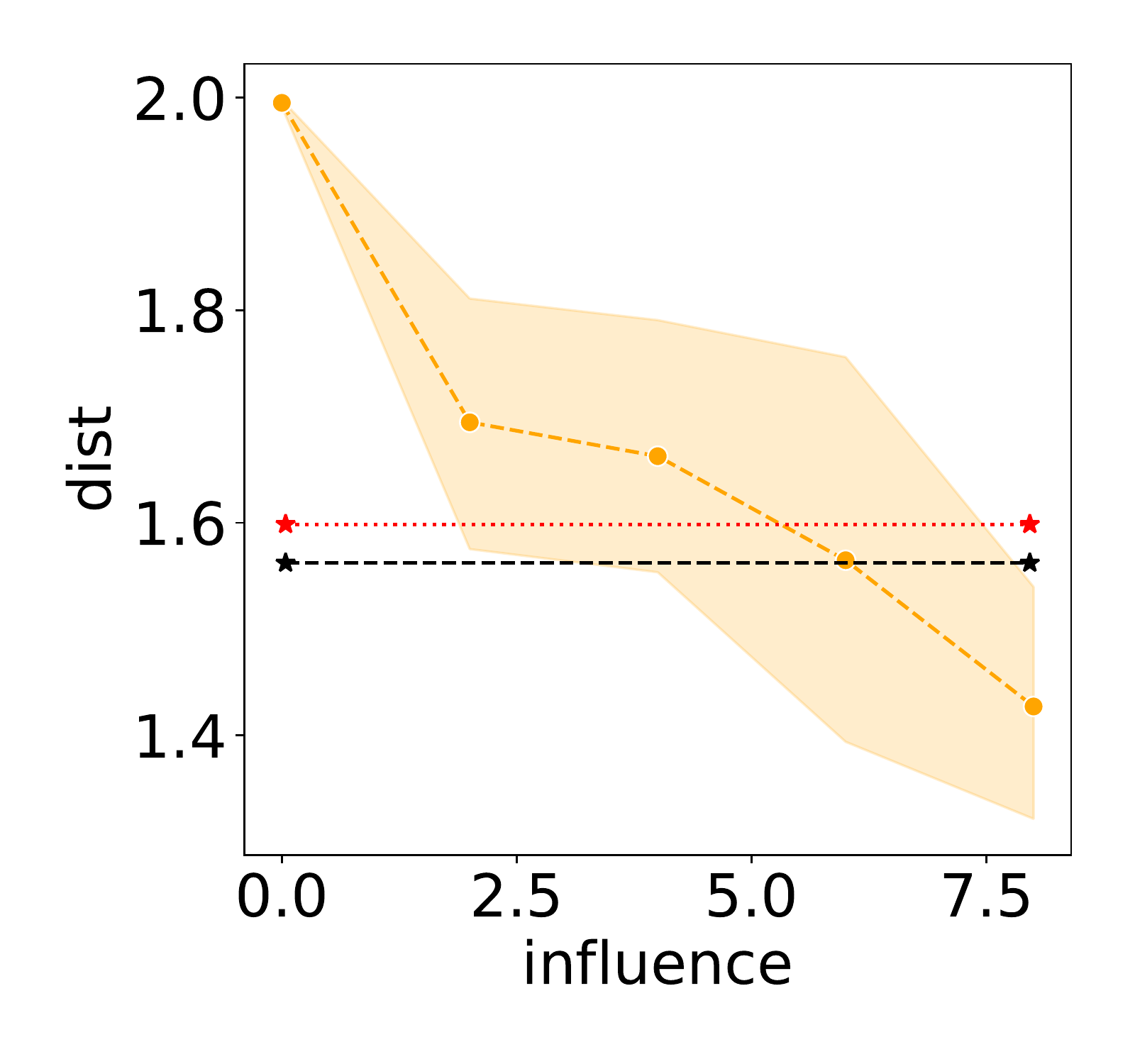}}
		
		\caption{2D Push: Distance, $w_{p}$}\label{fig.lambda_vs_cost_g.additional.naive}
	\end{subfigure}
		\caption{The figure shows how the increase in the weight on the penalty term $w_{p}$ in the victim's reward function $r^t = -(\mathbf x_{l}^t-\mathbf x_{g}^t)^2 - w_{p} \cdot \ind{(\mathbf x_{l}^t-\mathbf x_{a}^t)^2 \le 0.5}$ affects the attack performance of Alternating Policy Updates (APU) in 2D Push. Performance metrics, $\cost$ and $\dist$, are defined in Fig. \ref{fig.alternating_updates}, and we apply the same statistical analysis. We consider a subset of baselines from Fig. \ref{fig.alternating_updates} and we set parameter $\lambda$ from \eqref{prob.instance_2} to $1$. As we can see, APU outperforms baselines for most of the values of $w_p$, with increasing performance (decreasing $\cost$ and $\dist$) as $w_p$ increases.    
		}\label{fig.2D_push.vary_penalty}
\end{figure*}

Finally, we test whether solving \eqref{prob.instance_2} under non-deterministic target policies indeed forces the victim to adopt the target behavior (i.e., behavior under the target policy). Namely, when the target policy is non-deterministic, even if the victim's policy is ``close'' to the target policy in each state, its long-term behavior can be quite different from the target behavior since the errors accumulate over time. We consider Push 2D since this is the only environment with a non-deterministic target policy. The target policy specifies that the victim should go to locations that are approximately $3$ units away from the goal. Hence, we can measure how well an adversary forces the victim to follow this behavior by measuring its average distance to the goal. Fig. \ref{fig.2D_push.distance_to_goal} show the results for different adversaries from Section \ref{sec.experiments.apu}. As we can see, APU outperforms baselines for larger values of $\lambda$, obtaining almost the same average distance to the goal as the target policy.

\begin{figure*}[ht]
    \centering
    \begin{subfigure}{0.96\textwidth}
        \centering
		\resizebox{0.5\linewidth}{!}{\includegraphics{figures/alternating_policy_updates/appendix/DistanceGoalLegend.png}}
		\label{fig.Push2D_legend2.additional.naive}
	\end{subfigure}
    \centering
    \begin{subfigure}{0.24\textwidth}
    	\centering
		\resizebox{0.98\linewidth}{!}
		{\includegraphics{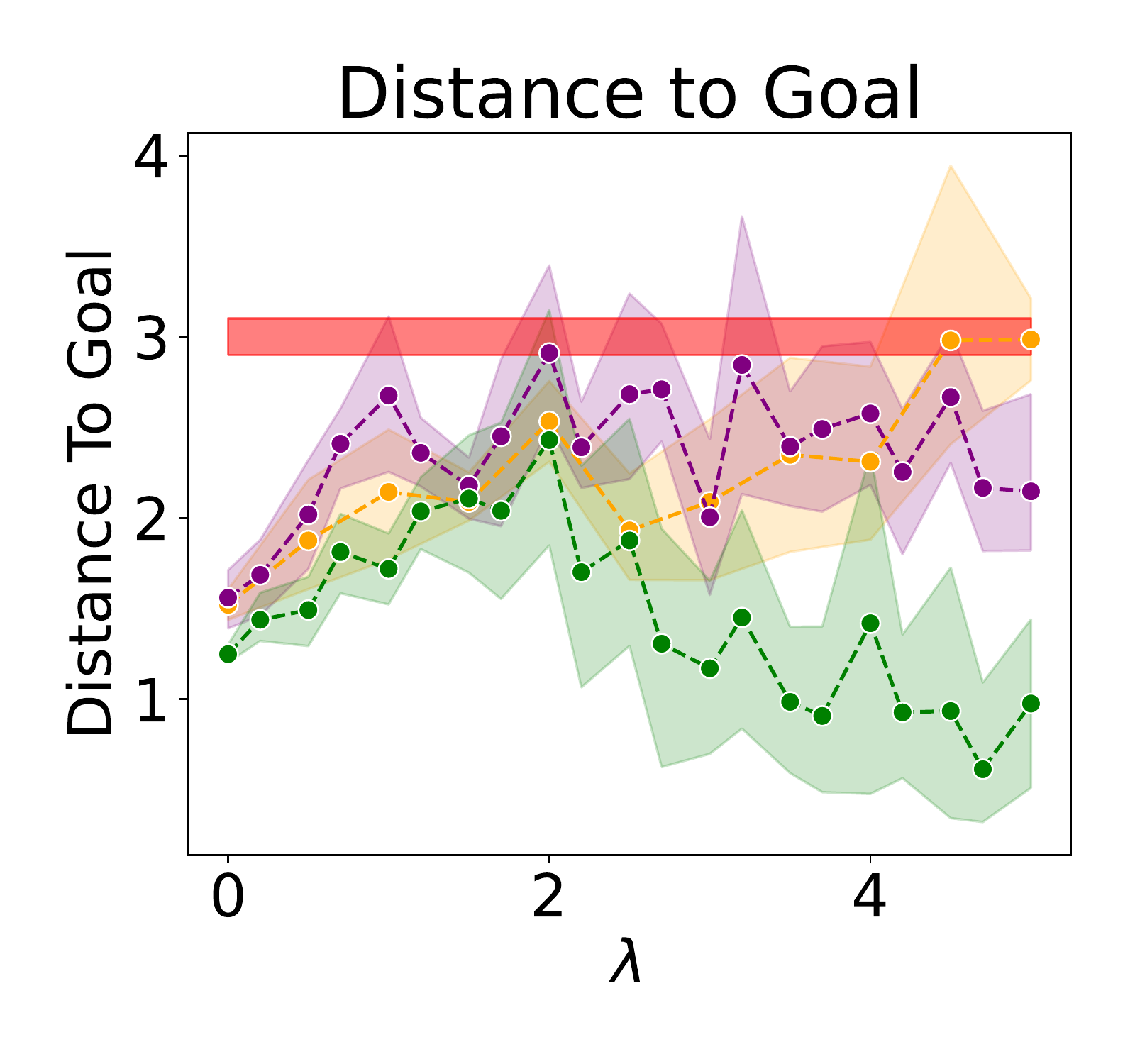}}
		
	\end{subfigure}
		\caption{The figure shows the test-time performance of APU, RA, and SAPU adversary from Section \ref{sec.experiments.apu} in 2D Push, for different hyperparameters $\lambda$ in terms of the average $L_2$ distance of the victim to goal. The red area shows the average distance to the goal of the target policy. To obtain confidence intervals, we follow the same approach as in Fig. \ref{fig.alternating_updates}. For each $\lambda$, we train $5$ adversarial policies (using $5$ different random seeds). For each adversarial policy, we train $5$ victim policies (using $5$ different random seeds) against this adversarial policy. The results show the mean and 95\% confidence intervals of the  obtained data points.
	}
		\label{fig.2D_push.distance_to_goal}
\end{figure*}

\subsection{Additional Implementation Details}

In this section, we provide additional implementation details, including those important for reproducibility of our results. We report this information separately for each of our algorithmic approaches.  

{\bf Conservative Policy Search (CPS).} As mentioned in the main text, to solve \eqref{prob.instance_1.rel}, we use CVXPY solver \cite{diamond2016cvxpy,agrawal2018rewriting}. The experiments that test CPS do not have a source of randomness, so we do not report confidence intervals for these results. We run the experiments on a Dell XPS-13 personal computer with 16 Gigabytes of memory and a 1.3 GHz Intel Core i7 processor. A single iteration of CPS takes on average about $0.04s \pm 0.13ms$ 
on Navigation (which uses Algorithm \ref{alg.conserv_pol_iter_main_text} for ergodic environments) and about $0.44s \pm 1.79ms$ on Inventory. 
Reported numbers are average of 10 runs (each having 200 iterations); standard error is shown with $\pm$. Similar running times are obtained for CPS-based baselines (i.e., Constraints Only PS (COPS) and Unconservative
PS (UPS)). A single run of CPS consists of multiple iterations, whose number we control. For Navigation and Inventory, we use 200 iterations. In each iteration we run a convex program, i.e., \eqref{prob.instance_1.rel} (or \eqref{prob.instance_1.rel_2}, see Section \nameref{sec.app.algorihtms}), as explained in the main text.

~\\
{\bf Alternating Policy Updates (APU).} As mentioned in the main text, APU applies Proximal Policy Optimization (PPO)~\cite{schulman2017proximal} for updating policies; Algorithm \ref{alg.alter_updates_main_text} specifies the details. We run the experiments on a Dell PowerEdge R730 with a M40 Nvidia Tesla GPU, a Intel Xeon E5-2667 v4 CPU and 512GB of memory. To train a single adversary, it takes about  
$38 \pm 0.67$ 
minutes for 1D Push and  
$35 \pm 0.52$ minutes 
for 2D Push. We obtain similar results for Navigation ($36.72\pm0.76$) and Inventory Management ($31.69\pm0.6$). To train a single victim agent for a fixed adversary, it takes about 
$8.42 \pm 0.13$ 
minutes for all environments. Similar running times are obtained for APU-based baselines (Random Learner (RL), Symmetric APU (SAPU), and Distance-only APU (DAPU)). All reported runtimes are the average over 5 runs; standard error is shown with $\pm$. We train APU and APU-based baselines for $20$ epochs in 2D Push and $50$ epochs in 1D Push. The number of training step per epoch is specified in Table \ref{tab.training_parameters_apu}. The experiments that test APU do have a source of randomness, so for each algorithm we estimate their mean performance and the corresponding 95\% confidence intervals.


\section{Additional Background Details}\label{sec.app.background}

Apart from the quantities introduced in the main text, we also consider standard value function, $V$, and state-action value function, $Q$. In our setting these are defined as:
\begin{align*}
    V^{\pola, \poll}(s) &= \expct{\sum_{t = 1}^{\infty} \gamma^{t-1} \cdot R_{\cL}(s_t, a_{\cA, t}, a_{\cL, t}) | s_1 = s, \pola, \poll}\\
    Q^{\pola, \poll}(s, a_\cA, a_\cL) &= \expct{\sum_{t = 1}^{\infty} \gamma^{t-1} \cdot R_{\cL}(s_t, a_{\cA, t}, a_{\cL, t})|s_1 = s, a_{\cA, 1} = a_\cA, a_{\cL, 1} = a_\cL, \pola, \poll},
\end{align*}
where the expectations are taken over trajectories  $\tau = (s_1 := s, a_{\cA, 1}, a_{\cL, 1}, s_2, a_{\cA, 2}, a_{\cL, 2}, ...)$ for $V^{\pola, \poll}$ and $\tau' = (s_1 := s, a_{\cA, 1} := a_{\cA}, a_{\cL, 1} := a_{\cL}, s_2, a_{\cA, 2}, a_{\cL, 2}, ...)$ for $Q^{\pola, \poll}$. Trajectory $\tau$ is obtained by executing policy policies $\pola$ and $\poll$ starting in state $s$. Trajectory $\tau'$ is obtained by executing policies $\pola$ and $\poll$ starting in state $s$, in which we take actions $a_\cA$ and $a_\cL$. 
$V^{\pola, \poll}$ and $Q^{\pola, \poll}$ satisfy the following Bellman's equations:
\begin{align*}
Q^{\pola, \poll}(s, a_{\cA}, a_\cL) &= R_2(s, a_{\cA}, a_\cL) + \gamma \cdot \sum_{s'} P(s, a_{\cA}, a_\cL, s') \cdot V^{\pola, \poll}(s')\\
V^{\pola, \poll}(s) &= \sum_{ a_{\cA}, a_\cL} \pola(s, a_\cA) \cdot \poll(s, a_\cL) \cdot Q^{\pola, \poll}(s, a_{\cA}, a_\cL).
\end{align*}
We also introduce the state-action value function of $\poll$ in two-agent MDP $\cM$ with a fixed policy $\pola$:
\begin{align*}
    Q^{\poll}_{\pola}(s, a_\cL) &= \expct{\sum_{t = 1}^{\infty} \gamma^{t-1} \cdot R_{\cL}(s_t, a_{\cA, t}, a_{\cL, t}) | s_1 = s, a_{\cL, 1} = a_\cL, \pola, \poll},
\end{align*}
where the expectation is taken over trajectory $\tau'' = (s_1 := s, a_{\cA, 1}, a_{\cL, 1}:= a_{\cL}, s_2, a_{\cA, 2}, a_{\cL, 2}, ...)$. I.e., trajectory $\tau''$ is obtained by executing policies $\pola$ and $\poll$ starting in state $s$, in which we take action $a_\cL$, but follow policy $\pola$ to obtain $a_{\cA, 1}$. Note that for a fixed policy $\pola$, two-agent MDP $\cM = (\{ \cA, \cL \}, S, A, P, R_{\cL}, \gamma, \sigma)$ degenerates to a single agent MDP $\cM_{\cL} = (S, A_{\cL}, P^{\pola}, R_{\cL}^{\pola}, \gamma, \sigma)$, where $P^{\pola}(s, a_{\cL}, s') = P(s, \pola, a_{\cL}, s') = \sum_{a_{\cA}} \pola(s, a_{\cA}) \cdot P(s, a_{\cA}, a_{\cL}, s')$ and $R_{\cL}^{\pola}(s, a_{\cL}) = R_{\cL}(s, \pola, a_{\cL}) = \sum_{a_{\cA}} \pola(s, a_{\cA}) \cdot R_{\cL}(s, a_{\cA}, a_{\cL})$. 

As mentioned in Remark \ref{rm.notation}, we, in part, utilize vector notation when convenient, in particular, for the tabular setting. In this case, $R_{\cL}$ can be thought of as a vector with $|S|\cdot |A|$ components, $R_{\cL}^{\pola}$ as a vector with $|S|\cdot |A_{\cL}|$ components,  $V^{\pola, \poll}$ as a vector with $|S|$ components, $Q^{\pola, \poll}$ as a vector with $|S| \cdot |A|$ components, $Q^{\poll}_{\pola}$ as a vector with $|S| \cdot |A_{\cL}|$ components. We can think of policy $\pola$ as as a matrix with $|A_{\cA}| \times |S|$ entries, transition model $P$ as a matrix with $|S| \times |S|\cdot|A|$ entries, and transition model $P^{\pola}$ as a matrix with $|S| \times |S|\cdot|A_\cL|$ entries. We will treat $\overline \chi_{\eps}$ and $\chi_2^*$ as vectors with $|S|\cdot |A_{\cL}|$ components, except in Proposition \ref{prop.upper_bound_1}, where we treat them as matrices with $|A_{\cL}| \times |S|$ entries. Note that $\norm{x}_p$ (resp. $\norm{x}_{p, q}$) is the usual $\ell_p$ (resp. $\ell_{p, q}$) norm when treating $x$ as a vector (resp. as a matrix). I.e., $\norm{x}_{p} = \left (\sum_{i = 1} x_i^{p} \right )^{1/p}$ for vector $x$ and $\norm{x}_{p, q} = \left (\sum_{j} \norm{x_j}_p^{q} \right )^{1/q}$ for matrix $x$, where in the latter case $x_j$ are the columns of $x$. Moreover, $\norm{x/y}_{p,q}$ and $\norm{x/y}_{p}$ denote element-wise division.

Note that in vector notation, the Bellman equation for $ Q^{\poll}_{\pola}$ can be expressed as 
\begin{align*}
 Q^{\poll}_{\pola} &= R_{\cL}^{\pola} + \gamma \cdot (P^{\pola})^T \cdot V^{\pola, \poll}.     
\end{align*}

Furthermore, we can bound the influence of policy $\pola$ on the effective rewards and transitions of agent $\cL$ relative to some other policy $\pola'$ as follows. 

\begin{lemma}\label{lm.policy_reward_transitions}
Consider two-agent MDP $\cM = (\{ \cA, \cL \}, S, A, P, R_{\cL}, \gamma, \sigma)$ and two degenerate single agent MDPs, $\cM_{\cL} = (S, A_{\cL}, P^{\pola}, R_{\cL}^{\pola}, \gamma, \sigma)$ and $\cM_{\cL}' = (S, A_{\cL}, P^{\pola'}, R_{\cL}^{\pola'}, \gamma, \sigma)$. The following inequalities hold
\begin{align*}
    \norm{R_{\cL}^{\pola} - R_{\cL}^{\pola'}}_{\infty} &\le \norm{\pola - \pola'}_{1, \infty} \cdot \norm{R_{\cL}}_{\infty} \\
    \norm{P^{\pola} - P^{\pola'}}_{1, \infty} &\le \norm{\pola - \pola'}_{1, \infty}.
\end{align*}
\end{lemma}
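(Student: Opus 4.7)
The plan is to prove both inequalities by direct expansion using the definitions $R_{\cL}^{\pola}(s, a_\cL) = \sum_{a_\cA} \pola(s, a_\cA) R_\cL(s, a_\cA, a_\cL)$ and $P^{\pola}(s, a_\cL, s') = \sum_{a_\cA} \pola(s, a_\cA) P(s, a_\cA, a_\cL, s')$, combined with the triangle inequality and the relevant pointwise bounds. Both are essentially one line of linearity plus one application of the triangle inequality, so I will treat each separately.

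For the reward inequality, I would fix an arbitrary pair $(s, a_\cL)$ and write
\[
R_{\cL}^{\pola}(s, a_\cL) - R_{\cL}^{\pola'}(s, a_\cL) = \sum_{a_\cA} \bigl( \pola(s, a_\cA) - \pola'(s, a_\cA) \bigr) \cdot R_\cL(s, a_\cA, a_\cL).
\]
Then I apply the triangle inequality and the bound $|R_\cL(s, a_\cA, a_\cL)| \le \norm{R_\cL}_\infty$, which factors out and leaves $\sum_{a_\cA} |\pola(s, a_\cA) - \pola'(s, a_\cA)|$. By definition of the mixed $\ell_{1,\infty}$ norm (max over $s$ of the $\ell_1$ sum across actions), this sum is upper bounded by $\norm{\pola - \pola'}_{1,\infty}$. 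Taking the supremum over $(s, a_\cL)$ on the left yields the first claim.

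For the transition inequality, I interpret $\norm{P^{\pola} - P^{\pola'}}_{1,\infty}$ according to the paper's convention as $\max_{(s, a_\cL)} \sum_{s'} |P^{\pola}(s, a_\cL, s') - P^{\pola'}(s, a_\cL, s')|$, viewing $P^{\pola}$ as a matrix with columns indexed by $(s, a_\cL)$ and rows by $s'$. Fixing $(s, a_\cL)$ and substituting the definition, I move the triangle inequality inside the sum over $s'$, swap the order of summation between $s'$ and $a_\cA$, and then use the fact that $\sum_{s'} P(s, a_\cA, a_\cL, s') = 1$ to eliminate the inner sum. What remains is exactly $\sum_{a_\cA} |\pola(s, a_\cA) - \pola'(s, a_\cA)| \le \norm{\pola - \pola'}_{1,\infty}$, and taking the maximum over $(s, a_\cL)$ completes the proof.

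I do not anticipate a genuine obstacle; the only subtlety is pinning down which indices the $\ell_{1,\infty}$ norm aggregates over in the transition bound, since $P^{\pola}$ is viewed as a matrix with columns indexed by next-state-action pairs and rows by next states. Once that convention is fixed, both bounds fall out of triangle inequality together with one tight estimate each (the sup-norm bound on $R_\cL$ for the first, and stochasticity of $P(s, a_\cA, a_\cL, \cdot)$ for the second).
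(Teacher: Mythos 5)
Your proof is correct and follows essentially the same route as the paper's: expand the definitions of $R_{\cL}^{\pola}$ and $P^{\pola}$, apply the triangle inequality, bound the reward terms by $\norm{R_{\cL}}_{\infty}$ in the first case and use $\sum_{s'} P(s, a_{\cA}, a_{\cL}, s') = 1$ in the second, then take the maximum over $(s, a_{\cL})$. Your reading of the $\ell_{1,\infty}$ norm convention also matches the paper's.
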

\begin{proof}
The first inequality follows from: 
\begin{align*}
	\norm{R_{\cL}^{\pola} - R_{\cL}^{\pola'}}_{\infty} &= \max_{s, a_2} |R_{\cL}^{\pola}(s, a_{\cL}) - R_{\cL}^{\pola'}(s, a_{\cL})|\\
	&= \max_{s, a_\cL}|\sum_{a_\cA} \pola(s, a_\cA) \cdot R_{\cL}(s, a_\cA, a_\cL) - \sum_{a_\cA} \pola'(s, a_\cA) \cdot R_{\cL}(s, a_\cA, a_\cL)| \\
	&= \max_{s, a_\cL}|\sum_{a_\cA} ( \pola(s, a_\cA) - \pola'(s, a_\cA) ) \cdot R_{\cL}(s, a_\cA, a_\cL)|  \\
	&\underbrace{\le}_{(i)} \max_{s, a_\cL}\sum_{a_\cA} | (\pola(s, a_\cA) - \pola'(s, a_\cA) ) \cdot R_{\cL}(s, a_\cA, a_\cL)| \\
	&= \max_{s, a_\cL}\sum_{a_\cA} |\pola(s, a_\cA) - \pola'(s, a_\cA) | \cdot | R_{\cL}(s, a_\cA, a_\cL)| \\
	&\le \max_{s}\sum_{a_\cA} | \pola(s, a_\cA) - \pola'(s, a_\cA) | \cdot \max_{a_\cL, a_\cA} |R_{\cL}(s, a_\cL, a_\cA)|\\
	&\le \max_{s}\sum_{a_\cA} | \pola(s, a_\cA) - \pola'(s, a_\cA) | \cdot \norm{R_{\cL}}_{\infty}\\
	& = \norm{\pola - \pola'}_{1, \infty} \cdot \norm{R_{\cL}}_{\infty}
\end{align*}
where $(i)$ is due to the triangle inequality. Similarly, we obtain
\begin{align*}
    \norm{P^{\pola} - P^{\pola'}}_{1, \infty} &= \max_{s, a_\cL}\sum_{s'}|P^{\pola}(s, a_\cL, s') - P^{\pola'}(s, a_\cL, s')| \\
	&= \max_{s, a_\cL}\sum_{s'}|\sum_{a_\cA} \pola(s, a_\cA) \cdot P(s, a_\cA, a_\cL, s') - \sum_{a_\cA}\pola'(s, a_\cA) \cdot P(s, a_\cA ,a_\cL, s')| \\
	&\underbrace{\le}_{(i)} \max_{s, a_\cL}\sum_{s'}\sum_{a_\cA} | \pola(s, a_\cA)  - \pola'(s, a_\cA) | \cdot | P(s, a_\cA ,a_\cL, s')| \\
    &= \max_{s, a_\cL}\sum_{a_\cA} | \pola(s, a_\cA)  - \pola'(s, a_\cA) | \cdot \sum_{s'} | P(s, a_\cA ,a_\cL, s')|\\
	&\underbrace{=}_{(ii)} \max_{s}\sum_{a_\cA} | \pola(s, a_\cA)  - \pola'(s, a_\cA) |\\
	&= \norm{\pola - \pola'}_{1, \infty},
\end{align*}
where $(i)$ is follows from the triangle inequality, while $(ii)$ follows from the fact that $\sum_{s'} P(s, a_1, a_2, s') = 1$.
\end{proof}

We further make use of the following well known result (see \cite{even2005experts} and \cite{schulman2015trust}):
\begin{lemma}\label{lm.score_advantage}
Any policies $\pola$, $\poll$, $\pola'$, and $\poll'$ satisfy
\begin{align*}
    \score_2^{\pola', \poll'} -  \score_2^{\pola, \poll} = \sum_{s} \occstate^{\pola', \poll'}(s) \cdot \left ( Q^{\pola, \poll}(s, \pola', \poll') - V^{\pola, \poll}(s) \right ).
\end{align*}
\end{lemma}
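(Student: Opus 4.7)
This is the classical performance difference lemma of Kakade and Langford, adapted to the two-agent setting; the plan is to carry out the standard telescoping argument. I would begin by using the link between the (normalized) return and the value function under the initial distribution, namely $\score_\cL^{\pola,\poll} = (1-\gamma)\cdot \expctu{s_1 \sim \sigma}{V^{\pola,\poll}(s_1)}$, which is immediate from the two definitions. Applying this to the $(\pola,\poll)$ side of the identity lets me rewrite the left-hand side as
\begin{align*}
\score_\cL^{\pola',\poll'} - \score_\cL^{\pola,\poll}
= (1-\gamma)\cdot \expctu{\tau \sim (\pola',\poll')}{\sum_{t=1}^{\infty} \gamma^{t-1} R_\cL(s_t,a_{\cA,t},a_{\cL,t}) - V^{\pola,\poll}(s_1)},
\end{align*}
where $\tau$ is a trajectory generated by $(\pola',\poll')$ starting from $s_1 \sim \sigma$.

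Next, I would insert the telescoping identity $-V^{\pola,\poll}(s_1) = \sum_{t=1}^{\infty} \bigl(\gamma^{t} V^{\pola,\poll}(s_{t+1}) - \gamma^{t-1} V^{\pola,\poll}(s_t)\bigr)$ (valid inside the expectation because $V^{\pola,\poll}$ is bounded and $\gamma<1$, so the tail vanishes). This groups the expression into
\begin{align*}
(1-\gamma)\cdot \expctu{\tau \sim (\pola',\poll')}{\sum_{t=1}^{\infty} \gamma^{t-1} \left( R_\cL(s_t,a_{\cA,t},a_{\cL,t}) + \gamma V^{\pola,\poll}(s_{t+1}) - V^{\pola,\poll}(s_t) \right)}.
\end{align*}
Conditioning on $(s_t,a_{\cA,t},a_{\cL,t})$ and using the Bellman identity $Q^{\pola,\poll}(s,a_\cA,a_\cL) = R_\cL(s,a_\cA,a_\cL) + \gamma\sum_{s'} P(s,a_\cA,a_\cL,s')V^{\pola,\poll}(s')$ recalled in Appendix \ref{sec.app.background}, each summand becomes $Q^{\pola,\poll}(s_t,a_{\cA,t},a_{\cL,t}) - V^{\pola,\poll}(s_t)$.

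Finally, I would convert the trajectory expectation into a sum over the state-action occupancy measure under $(\pola',\poll')$: since $(1-\gamma)\cdot \expctu{\tau \sim (\pola',\poll')}{\sum_{t\ge 1}\gamma^{t-1}\ind{s_t=s,a_{\cA,t}=a_\cA,a_{\cL,t}=a_\cL}}$ is exactly $\occupancy^{\pola',\poll'}(s,a_\cA,a_\cL) = \occstate^{\pola',\poll'}(s)\,\pola'(s,a_\cA)\,\poll'(s,a_\cL)$, we obtain
\begin{align*}
\score_\cL^{\pola',\poll'} - \score_\cL^{\pola,\poll} = \sum_{s,a_\cA,a_\cL} \occstate^{\pola',\poll'}(s)\,\pola'(s,a_\cA)\,\poll'(s,a_\cL)\bigl(Q^{\pola,\poll}(s,a_\cA,a_\cL) - V^{\pola,\poll}(s)\bigr),
\end{align*}
and the abbreviation convention of Remark \ref{rm.notation} then collapses the inner sum into $Q^{\pola,\poll}(s,\pola',\poll') - V^{\pola,\poll}(s)$, yielding the claim. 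There is no real obstacle in the argument; the only mild care-point is the telescoping/Fubini exchange, which is justified by absolute summability since rewards are bounded on the finite state-action space and $\gamma<1$.
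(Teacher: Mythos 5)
Your proof is correct. The paper does not actually prove this lemma---it states it as a well-known result and defers to \cite{even2005experts} and \cite{schulman2015trust}---and your telescoping argument is precisely the standard performance-difference proof those references give, correctly adapted here (in particular, the $(1-\gamma)$ normalization of $\score_\cL$ is consistently absorbed into the normalized occupancy measure $\occstate^{\pola',\poll'}$, which is the only place the adaptation could have gone wrong).
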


\subsection{Neighbor Policies}

As stated in the main text, it is useful to consider the notion of neighbor policies for reducing the number of constraints in the optimization problem \eqref{prob.instance_1}. A neighbor policy $\pi\{s,a\}$ of policy $\pi$ is equal to $\pi$ in all states except in $s$, where it is defined as $\pi\{s,a\}(s, a) = 1.0$. We now prove a couple of results akin to Lemma 1 from \cite{rakhsha2021policy}, but for the setting of interest. These results are important for our algorithmic approach based on policy search.
We start with a lemma that introduces a quantity which is used in Lemma \ref{lm.neighbor_policies}, and whose proof technique is instructive for that lemma.  

\begin{lemma}\label{lm.neighbor_policies.basic}
Consider a policy $\pola$, target policy $\targetpi$,
and $\tilde Q_{\pi_1}^{*}$ defined by the Bellman equations:
\begin{align}\label{eq.belman_equations}
    &\tilde Q_{\pola}^{*}(s, a_{\cA}, a_\cL) = R_2(s, a_{\cA}, a_\cL) + \gamma \cdot \sum_{s'} P(s, a_{\cA}, a_\cL, s') \cdot \tilde V_{\pola}^*(s')\\
    &\tilde V_{\pola}^{*}(s) =\begin{cases}
        \max_{a_{\cL}'} \sum_{a_{\cA}'} \pola(s, a_{\cA}') \cdot  \tilde Q_{\pola}^{*}(s, a_{\cA}', a_\cL') \quad &\text{\em  if  } \quad \occstate^{\pola, \targetpi}(s) = 0, \\
        \tilde Q_{\pola}^{*}(s, \pola, \targetpi) \quad &\text{\em  if  } \quad \occstate^{\pola, \targetpi}(s) > 0.\notag
    \end{cases}
\end{align}
Then, $\pola$ satisfies the constraints of the optimization problem \eqref{prob.instance_1} if 
\begin{align}\label{eq.neighbor_constrain}
     \score_2^{\pola, \targetpi} \ge \score_2^{\pola, \targetpiproxy \{s,a_\cL \}} +  \frac{\eps}{q_{\mu}^{\pola}} \quad \forall s \text{ \em  s.t. } \occstate^{\pola, \targetpi}(s) > 0 \text{ \em and } \forall a_\cL \text{ \em s.t. } \targetpi(s, a_\cL) = 0,
\end{align}
where $\targetpiproxy \in \Pi_2^{\dagger}(\pola)$ that for $\occstate^{\pola, \targetpi}(s) = 0$ takes action $a_\cL$, i.e., $\targetpiproxy(s, a_\cL) > 0$, only if $a_\cL \in \argmax_{a_\cL'} \sum_{a_{\cA}} \pola(s, a_{\cA}') \cdot \tilde Q_{\pi_1}^{*}(s, a_{\cA}, a_\cL')$, and $q_{\mu}^{\pi_1} = \min_{\pi_2 \in \Polld} \sum_{s, a_2|\occstate^{\pola, \targetpi}(s) > 0 \land \poll(s, .) \ne \targetpiproxy(s, .) \land \poll(s, a_2) = 1.0} \frac{ \occstate^{\pola, \poll}(s)}{\occstate^{\pola, \targetpiproxy \{s, a_2\}}(s)} $. 
\end{lemma}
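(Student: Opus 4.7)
The plan is to combine the performance difference identity of Lemma~\ref{lm.score_advantage} with the structure imposed on $\targetpiproxy$ by \eqref{eq.belman_equations}. First I will verify that $\targetpiproxy$ exactly realizes the value function $\tilde V^{*}_{\pola}$, i.e., $V^{\pola, \targetpiproxy} = \tilde V^{*}_{\pola}$ and $Q^{\pola, \targetpiproxy} = \tilde Q^{*}_{\pola}$, by matching the Bellman recursion of $V^{\pola, \targetpiproxy}$ against \eqref{eq.belman_equations}: on visited states $\targetpiproxy$ coincides with $\targetpi$, and on non-visited states $\targetpiproxy$ selects an argmax action. Two consequences follow: (i) $\occstate^{\pola, \targetpi} = \occstate^{\pola, \targetpiproxy}$ and hence $\score^{\pola, \targetpi} = \score^{\pola, \targetpiproxy}$, because the two policies agree on every state reachable from $\sigma$ under $(\pola, \targetpi)$; and (ii) for every non-visited state $s'$ and every action $a_\cL$, the advantage $V^{\pola, \targetpiproxy}(s') - Q^{\pola, \targetpiproxy}(s', \pola, a_\cL)$ is non-negative by the $\max$ in \eqref{eq.belman_equations}.

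Next I will convert the hypothesized neighbor-policy gaps~\eqref{eq.neighbor_constrain} into local advantage bounds. Applying Lemma~\ref{lm.score_advantage} to $\score^{\pola, \targetpiproxy\{s, a_\cL\}} - \score^{\pola, \targetpiproxy}$ and using that $\targetpiproxy\{s, a_\cL\}$ differs from $\targetpiproxy$ only at state $s$, the sum collapses to a single term, and rearranging together with $\score^{\pola,\targetpi}=\score^{\pola,\targetpiproxy}$ yields
\begin{align*}
V^{\pola, \targetpiproxy}(s) - Q^{\pola, \targetpiproxy}(s, \pola, a_\cL) \ge \frac{\eps / q_\mu^{\pola}}{\occstate^{\pola, \targetpiproxy\{s, a_\cL\}}(s)}
\end{align*}
for every visited $s$ and every $a_\cL$ with $\targetpi(s, a_\cL) = 0$.

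Finally, I will pick an arbitrary deterministic $\poll \in \Polld \setminus \Pi_2^{\dagger}(\pola)$ and apply Lemma~\ref{lm.score_advantage} to $\score^{\pola, \targetpiproxy} - \score^{\pola, \poll}$, partitioning the resulting state sum into three groups: states where $\poll$ agrees with $\targetpiproxy$ (contributing zero); non-visited states where $\poll$ disagrees (contributing non-negatively by step (ii) above); and visited states where $\poll$ disagrees, to which the bound of the previous paragraph applies. Dropping the first two groups and substituting gives
\begin{align*}
\score^{\pola, \targetpiproxy} - \score^{\pola, \poll} \ge \frac{\eps}{q_\mu^{\pola}} \sum_{\substack{s \,:\, \occstate^{\pola, \targetpi}(s) > 0 \\ \poll(s, \cdot) \ne \targetpiproxy(s, \cdot)}} \frac{\occstate^{\pola, \poll}(s)}{\occstate^{\pola, \targetpiproxy\{s, \poll(s)\}}(s)} \ge \eps,
\end{align*}
where the last inequality is the definition of $q_\mu^{\pola}$ as the infimum of exactly this sum over deterministic $\poll$ that disagree with $\targetpiproxy$ on some visited state. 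Combined with $\score^{\pola, \targetpi} = \score^{\pola, \targetpiproxy}$ and the trivial bound $\score^{\pola, \optpil} \ge \score^{\pola, \targetpi}$, this forces $\poll \notin \Optl(\pola)$, establishing $\Optl(\pola) \subseteq \Pi_2^{\dagger}(\pola)$.

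The main obstacle I anticipate is the first step: carefully unpacking the definition of $\targetpiproxy$ to confirm that $V^{\pola, \targetpiproxy}$ solves the same Bellman system \eqref{eq.belman_equations} that defines $\tilde V^{*}_{\pola}$ and, in parallel, that the occupancies of $\targetpi$ and $\targetpiproxy$ under $\pola$ coincide so that $\score^{\pola, \targetpi}$ can be substituted for $\score^{\pola, \targetpiproxy}$ throughout. A minor but important subtlety is that the minimum defining $q_\mu^{\pola}$ should implicitly be restricted to $\poll$ that disagree with $\targetpiproxy$ on at least one visited state (otherwise the sum is trivially zero), and that $\occstate^{\pola, \targetpiproxy\{s, a_\cL\}}(s)$ is strictly positive whenever $\occstate^{\pola, \targetpi}(s) > 0$, which follows from the fact that a single-state deviation at $s$ does not alter the trajectory distribution up to the first visit to $s$.
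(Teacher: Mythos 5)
Your plan is correct and follows essentially the same route as the paper's proof: both identify $V^{\pola,\targetpiproxy}$ with the solution of the Bellman system \eqref{eq.belman_equations} so that unvisited states contribute non-positive advantage, both apply Lemma~\ref{lm.score_advantage} to collapse the neighbor-policy gap at a visited state $s$ to a single advantage term divided by $\occstate^{\pola,\targetpiproxy\{s,a_\cL\}}(s)$, and both finish by summing these terms over the disagreement states of an arbitrary deterministic $\poll$ and invoking the definition of $q_{\mu}^{\pola}$. Your additional remarks (the explicit identification $\score^{\pola,\targetpi}=\score^{\pola,\targetpiproxy}$, the positivity of $\occstate^{\pola,\targetpiproxy\{s,a_\cL\}}(s)$, the implicit restriction of the minimum defining $q_{\mu}^{\pola}$ to policies disagreeing on a visited state, and the final step via $\score^{\pola,\optpil}\ge\score^{\pola,\targetpi}$) are all correct details that the paper leaves implicit.
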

\begin{proof}
Consider a deterministic policy $\poll^*$ s.t. $\poll^*(s, .) \ne \targetpi(s, .)$ for at least one state $s$ that satisfies $\occstate^{\pi_1, \targetpi}(s) > 0$, and suppose that the conditions of the lemma hold. Denote $a_s$ action for which $\poll^*(s, a_s) = 1.0$. We have that:
\begin{align*}
     \score_2^{\pola, \poll^*} - \score_2^{\pola, \targetpi} &= \score_2^{\pola, \poll^*} - \score_2^{\pola, \targetpiproxy}\\ &\underbrace{=}_{(i)} \sum_{s} \occstate^{\pola, \poll^*}(s) \cdot \left ( Q^{\pola, \targetpiproxy}(s, \pola, \poll^*) - V^{\pola, \targetpiproxy}(s) \right ) \\
     &\underbrace{\le}_{(ii)} \sum_{s|\occstate^{\pola, \targetpi}(s) > 0}  \occstate^{\pola, \poll^*}(s) \cdot \left ( \tilde Q_{\pola}^{*}(s, \pola, \poll^*) - \tilde V_{\pola}^{*}(s) \right)\\
     &\underbrace{=}_{(i)}\sum_{s|\occstate^{\pola, \targetpi}(s) > 0 \land \poll^*(s, .) \ne \targetpiproxy(s, . )} \occstate^{\pola, \poll^*}(s) \cdot \frac{\score_2^{\pola, \targetpiproxy \{s, a_s\}} -  \score_2^{\pola, \targetpiproxy}}{\occstate^{\pola, \targetpiproxy \{s, a_s\}}(s)}\\
     &\underbrace{\le}_{(iii)} - \frac{\eps}{q_{\mu}^{\pola}} \cdot \sum_{s|\occstate^{\pola, \targetpi}(s) > 0 \land \poll^*(s, .) \ne \targetpiproxy(s, .)} \frac{ \occstate^{\pola, \poll^*}(s)}{\occstate^{\pola, \targetpiproxy \{s, a_s\}}(s)}  \underbrace{\le}_{(iv)} -\eps
\end{align*}
where we applied Lemma \ref{lm.score_advantage} over the neighboring policies (for $(i)$),
used the fact that $\targetpiproxy$ satisfies \eqref{eq.belman_equations} (for $(ii)$), used the assumption that the constraints in Lemma \ref{lm.neighbor_policies.basic} are satisfied (for $(iii)$), and the fact that the last summation is bounded below by $q_{\mu}^{\pola}$. 
\end{proof}

For establishing our characterization results, we rely on a different version of this result, stated in the following lemma.

\begin{lemma}\label{lm.neighbor_policies}
Consider a policy $\pola$,
target policy $\targetpi$, its neighbor $\targetpi\{s,a_{\cL}\}$,
and $\bar Q_{\pi_1}^{*\{s,a_{\cL}\}}$ defined by the Bellman equations:
\begin{align}\label{eq.belman_equations.ntargetproxy}
    &\bar Q_{\pola}^{*\{s,a_{\cL}\}}(s', a_{\cA}, a_\cL') = R_2(s', a_{\cA}, a_\cL') + \gamma \cdot \sum_{s''} P(s', a_{\cA}, a_\cL', s'') \cdot \bar V_{\pi_1}^*(s'')\\
    &\bar V_{\pola}^{*\{s,a_{\cL}\}}(s') =\begin{cases}
        \max_{a_{\cL}'} \sum_{a_{\cA}'} \pola(s', a_{\cA}') \cdot \bar Q_{\pi_1}^{*\{s,a_{\cL}\}}(s', a_{\cA}', a_\cL') \quad &\text{\em  if  } \quad \occstate^{\pola, \targetpi}(s') = 0, \\
        \bar Q_{\pola}^{*\{s,a_{\cL}\}}(s', \pola, \targetpi\{s,a_{\cL}\}) \quad &\text{\em  if  } \quad \occstate^{\pola, \targetpi}(s') > 0.\notag
    \end{cases}
\end{align}
Then, $\pola$ satisfies the constraints of the optimization problem \eqref{prob.instance_1} if and only if
\begin{align*}
     \score_2^{\pola, \targetpi} \ge \score_2^{\pola, \ntargetpiproxy \{s,a_\cL \}} + \eps \quad \forall s \text{ \em  s.t. } \occstate^{\pola, \targetpi}(s) > 0 \text{ \em and } \forall a_\cL \text{ \em s.t. } \targetpi(s, a_\cL) = 0,
\end{align*}
where $\ntargetpiproxy\{s,a_\cL\} \in \Pi_2^{\dagger}(\pola)$ is equal to $\targetpi\{s,a_\cL\}$ in states $s$ s.t. $\occstate^{\pola, \targetpi}(s') > 0$, while in states $s$ s.t.
$\occstate^{\pola, \targetpi}(s') = 0$, 
$\ntargetpiproxy\{s,a_\cL\}$ takes action $a_\cL'$, i.e., $\ntargetpiproxy\{s,a_\cL\}(s', a_\cL') > 0$ only if  $a_\cL' \in \argmax_{a_\cL''} \sum_{a_{\cA}} \bar Q_{\pi_1}^{*\{s,a_\cL\}}(s, a_{\cA}, a_\cL'')$.
\end{lemma}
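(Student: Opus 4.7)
The plan is to prove the two directions of the iff separately, leaning on the performance difference lemma (Lemma \ref{lm.score_advantage}) and the optimality encoded in the Bellman equations \eqref{eq.belman_equations.ntargetproxy}.

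For the forward direction, I would first observe that for any $\eps>0$, the unconstrained best response $\optpil$ itself lies in $\Optl(\pola)$, so the hypothesis $\Optl(\pola)\subseteq\Pi_2^{\dagger}(\pola)$ forces $\optpil\in\Pi_2^{\dagger}(\pola)$ and hence $\score_2^{\pola,\optpil}=\score_2^{\pola,\targetpi}$ (since behavior on unvisited states does not affect $\score$). Now, for any $(s,a_\cL)$ with $\occstate^{\pola,\targetpi}(s)>0$ and $\targetpi(s,a_\cL)=0$, the policy $\ntargetpiproxy\{s,a_\cL\}$ differs from $\targetpi$ at the visited state $s$, so $\ntargetpiproxy\{s,a_\cL\}\notin\Pi_2^{\dagger}(\pola)\supseteq\Optl(\pola)$, giving $\score_2^{\pola,\ntargetpiproxy\{s,a_\cL\}}\le\score_2^{\pola,\optpil}-\eps=\score_2^{\pola,\targetpi}-\eps$.

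For the reverse direction, I would reduce to showing that every deterministic $\poll$ that disagrees with $\targetpi$ on some visited state satisfies $\score_2^{\pola,\poll}\le\score_2^{\pola,\targetpi}-\eps$: combined with $\score_2^{\pola,\optpil}\ge\score_2^{\pola,\targetpi}$ (by optimality), this forces $\optpil\in\Pi_2^{\dagger}(\pola)$ and the inclusion $\Optl(\pola)\subseteq\Pi_2^{\dagger}(\pola)$. The key tool is the performance difference identity applied to $\poll$ and a neighbor $\ntargetpiproxy\{s,a_\cL\}$ with $s$ a visited disagreement state and $a_\cL=\poll(s)$:
\begin{equation*}
\score_2^{\pola,\poll}-\score_2^{\pola,\ntargetpiproxy\{s,a_\cL\}}=\sum_{s'}\occstate^{\pola,\poll}(s')\bigl(Q^{\pola,\ntargetpiproxy\{s,a_\cL\}}(s',\pola,\poll(s'))-V^{\pola,\ntargetpiproxy\{s,a_\cL\}}(s')\bigr).
\end{equation*}
By \eqref{eq.belman_equations.ntargetproxy}, at unvisited $s'$ the quantity $V^{\pola,\ntargetpiproxy\{s,a_\cL\}}(s')=\bar V_{\pola}^{*\{s,a_\cL\}}(s')$ is the max over actions, so its advantage term is nonpositive; and at $s'=s$ the advantage vanishes since both policies take $a_\cL$. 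Combining with the neighbor hypothesis gives $\score_2^{\pola,\poll}\le\score_2^{\pola,\targetpi}-\eps$ whenever $\poll$ has only the single visited-state disagreement at $s$.

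The main obstacle is handling $\poll$ with disagreements at several visited states simultaneously, because then the above sum retains nonvanishing advantage terms at the other disagreement states. My plan to overcome this is to argue by contradiction on the best such $\poll$: if there existed a deterministic $\poll\notin\Pi_2^{\dagger}(\pola)$ with $\score_2^{\pola,\poll}>\score_2^{\pola,\targetpi}-\eps$ and multiple disagreements, I would select one disagreement $(s,a_\cL)$ and compare $\poll$ against $\ntargetpiproxy\{s,a_\cL\}$, which is itself optimal on unvisited states. Using that $\ntargetpiproxy\{s,a_\cL\}$ is also a best response to $\pola$ among all policies that agree with $\targetpi\{s,a_\cL\}$ on visited states, iterating the argument across disagreement states and composing the resulting inequalities with the one-step performance difference identity $\score_2^{\pola,\targetpi\{s,a_\cL\}}-\score_2^{\pola,\targetpi}=\occstate^{\pola,\targetpi\{s,a_\cL\}}(s)\cdot A(s,\pola,a_\cL)$ (as used in Lemma \ref{lm.neighbor_policies.basic}) should yield the desired bound without the $q_{\mu}^{\pola}$ penalty factor, precisely because $\ntargetpiproxy$ internalizes the optimal extension to unvisited states.
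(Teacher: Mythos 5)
Your necessity argument and your treatment of the single-disagreement case are sound and match the paper's: the identity from Lemma \ref{lm.score_advantage} applied against $\ntargetpiproxy\{s,a_\cL\}$, whose value function coincides with $\bar V^{*\{s,a_\cL\}}_{\pola}$ so that the advantage terms vanish on visited states and are nonpositive on unvisited ones, does give $\score_2^{\pola,\poll}\le\score_2^{\pola,\ntargetpiproxy\{s,a_\cL\}}\le\score_2^{\pola,\targetpi}-\eps$ when $\poll$ deviates at a single visited state. The genuine gap is in the multi-disagreement case, which you correctly flag as the obstacle but do not resolve. Your proposed iteration ``across disagreement states'' cannot be composed as stated: comparing $\poll$ against $\ntargetpiproxy\{s,a_\cL\}$ for one chosen disagreement leaves advantage terms at the other visited disagreement states whose sign is uncontrolled, and summing the per-state bounds weighted by $\occstate^{\pola,\poll}(s)/\occstate^{\pola,\targetpiproxy\{s,a_s\}}(s)$ only yields $\score_2^{\pola,\poll}-\score_2^{\pola,\targetpi}\le-\eps\cdot\sum_s \occstate^{\pola,\poll}(s)/\occstate^{\pola,\targetpiproxy\{s,a_s\}}(s)$, where the sum of ratios need not be at least $1$; this is precisely why Lemma \ref{lm.neighbor_policies.basic} only achieves the inflated margin $\eps/q_{\mu}^{\pola}$. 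Your claim that the penalty factor disappears because ``$\ntargetpiproxy$ internalizes the optimal extension to unvisited states'' is therefore not the right explanation.

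What is missing is the paper's two-stage argument. First, the weighted sum above, being nonpositive term by term, already gives the weaker conclusion $\score_2^{\pola,\poll}\le\score_2^{\pola,\targetpi}$ for every deviating deterministic $\poll$. Second, armed with this weak optimality, one expands the score difference the other way, over $\occstate^{\pola,\targetpi}$: since the total is nonnegative, some visited disagreement state $s$ must satisfy $Q^{\pola,\poll}(s,\pola,a_\cL^*)-V^{\pola,\poll}(s)\ge0$ for the target action $a_\cL^*$, so switching $\poll$ to $a_\cL^*$ at $s$ does not decrease its score. Induction on the number of visited disagreements then reduces to a policy with a single disagreement, to which your single-state bound applies, and chaining the inequalities gives the $\eps$-gap with no $q_{\mu}$ factor. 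Without the first stage there is no guarantee that any single-state switch toward the target improves the score, so your contradiction on ``the best such $\poll$'' cannot get started.
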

\begin{proof}
We use similar arguments as in the proof of Lemma \ref{lm.neighbor_policies.basic} and in the proof of Lemma 1 in \cite{rakhsha2021policy}. 
The necessity trivially follows as otherwise the constraints of the optimization problem \eqref{prob.instance_1} would be violated given that $\ntargetpiproxy$ can be deterministic. 

Now, consider a deterministic policy $\poll^*$ s.t. $\poll^*(s, .) \ne \targetpi(s, .)$ for at least one state $s$ that satisfies $\occstate^{\pi_1, \targetpi}(s) > 0$, and suppose that the conditions of the lemma hold. Denote $a_s$ action for which $\poll^*(s, a_s) = 1.0$. We have that:
\begin{align*}
     \score_2^{\pola, \poll^*} - \score_2^{\pola, \targetpi} &= \score_2^{\pola, \poll^*} - \score_2^{\pola, \targetpiproxy}\\ &\underbrace{=}_{(i)} \sum_{s} \occstate^{\pola, \poll^*}(s) \cdot \left ( Q^{\pola, \targetpiproxy}(s, \pola, \poll^*) - V^{\pola, \targetpiproxy}(s) \right ) \\
     &\underbrace{\le}_{(ii)} \sum_{s|\occstate^{\pola, \targetpi}(s) > 0}  \occstate^{\pola, \poll^*}(s) \cdot \left ( \tilde Q_{\pola}^{*}(s, \pola, \poll^*) - \tilde V_{\pola}^{*}(s) \right)\\
     &\underbrace{=}_{(i)}\sum_{s|\occstate^{\pola, \targetpi}(s) > 0 \land \poll^*(s, .) \ne \targetpiproxy(s, . )} \occstate^{\pola, \poll^*}(s) \cdot \frac{\score_2^{\pola, \targetpiproxy \{s, a_s\}} -  \score_2^{\pola, \targetpiproxy}}{\occstate^{\pola, \targetpiproxy \{s, a_s\}}(s)}\\
     &\underbrace{\le}_{(iii)} \sum_{s|\occstate^{\pola, \targetpi}(s) > 0 \land \poll^*(s, .) \ne \targetpiproxy(s, . )} \occstate^{\pola, \poll^*}(s) \cdot \frac{\score_2^{\pola, \ntargetpiproxy \{s, a_s\}} -  \score_2^{\pola, \targetpiproxy}}{\occstate^{\pola, \targetpiproxy \{s, a_s\}}(s)}\\
     &\underbrace{\le}_{(iv)} - \eps \cdot \sum_{s|\occstate^{\pola, \targetpi}(s) > 0 \land \poll^*(s, .) \ne \targetpiproxy(s, .)} \frac{ \occstate^{\pola, \poll^*}(s)}{\occstate^{\pola, \targetpiproxy \{s, a_s\}}(s)} \le 0,
\end{align*}
where $\targetpiproxy$ is defined in Lemma \ref{lm.neighbor_policies.basic}. To obtain the inequalities, we applied Lemma \ref{lm.score_advantage} over the neighboring policies (for $(i)$),
used the fact that $\targetpiproxy$ satisfies \eqref{eq.belman_equations} (for $(ii)$), and used the assumption that the constraints in Lemma \ref{lm.neighbor_policies} are satisfied (for $(iv)$). 
For $(iii)$, it suffices that  $\score_2^{\pi, \ntargetpiproxy\{s,a_2\}} \ge \score_2^{\pi, \targetpiproxy\{s,a_2\}}$. We can show this by using Lemma \ref{lm.score_advantage} and the definition of $\ntargetpiproxy\{s, a_2\}$:
\begin{align}\label{eq.ntargetpiproxy_opt}
    \score_2^{\pola, \targetpiproxy\{s,a_2\}} - \score_2^{\pola, \ntargetpiproxy\{s,a_2\}} &= \sum_{s'} \occstate^{\pola, \targetpiproxy\{s, a_{\cL}\}}(s') \cdot \left ( Q^{\pola, \ntargetpiproxy}(s', \pola, \targetpiproxy\{s,a_2\}) - V^{\pola, \ntargetpiproxy}(s') \right ) \\
    &= \sum_{s'|\occstate^{\pola, \targetpi}(s') = 0} \occstate^{\pola, \targetpiproxy\{s, a_{\cL}\}}(s') \cdot \left ( Q^{\pola, \ntargetpiproxy}(s', \pola, \targetpiproxy\{s,a_2\}) - V^{\pola, \ntargetpiproxy}(s') \right ) \nonumber
    \\&\le 0, \nonumber
\end{align}
where the inequality is due to the fact that $\ntargetpiproxy$ satisfies Eq. \eqref{eq.belman_equations.ntargetproxy}. Altogether, we have that $\score_2^{\pola, \poll^*} \le  \score_2^{\pola, \targetpi}$.

Now, notice that
\begin{align*}
    \score_2^{\pola, \poll^*} - \score_2^{\pola, \targetpi} &= \sum_{s} \occstate^{\pola, \targetpi}(s) \left ( Q^{\pola, \poll^*}(s, \pola, \targetpi) - V^{\pola, \poll^*}(s) \right ) \\ 
    &=\sum_{s|\occstate^{\pola, \targetpi}(s) > 0} \occstate^{\pola, \targetpi}(s) \left ( Q^{\pola \poll^*}(s, \pola, \targetpi) - V^{\pola, \poll^*}(s) \right )\\
     &=\sum_{s|\occstate^{\pola, \targetpi}(s) > 0 \land \poll^*(s, .) \ne \targetpi(s, .)} \occstate^{\pola, \targetpi}(s) \left ( Q^{\pola, \poll^*}(s, \pola, \targetpi) - V^{\pola, \poll^*}(s) \right ),
\end{align*}
which due to $\score_2^{\pola, \poll^*} \le  \score_2^{\pola, \targetpi}$ and considered $\poll^*$ implies that there exists $s$ and $a^*_\cL$ that satisfy
\begin{align*}
    \occstate^{\pola, \targetpi}(s) > 0 \land \poll^*(s, .) \ne \targetpi(s, .) \land \targetpi(s, a^*_\cL) = 1.0,
\end{align*}
as well as $Q^{\pola, \poll^*}(s, \pola, a_\cL^*) - V^{\pola, \poll^*}(s) \ge 0$.
In turn, this means that the neighboring policy $\poll^*\{s, a^*_\cL\}$ satisfies $\score_2^{\pola, \poll^*} \le \score_2^{\pola, \poll^*\{s, a^*_\cL\}}$ because
\begin{align*}
    \score_2^{\pola, \poll^*\{s, a_\cL^*\}} - \score_2^{\pola, \poll^*} = \occstate^{\pola, \poll^*\{s, a_\cL^*\}}(s) \cdot \left ( 
    Q^{\pola, \poll^*}(s, \pola, a_\cL^*) - V^{\pola, \poll^*}(s)
    \right ) \ge 0.
\end{align*}
Denote this policy by $\poll^{*, k-1}$ where $k$ is the number of states $s$ that satisfy $\occstate^{\pola, \targetpi}(s) > 0 \land \poll^*(s, .) \ne \targetpi(s, .)$. By induction and by the definition of $\targetpiproxy$, we further have that
\begin{align*}
    \score_2^{\pola, \poll^*} - \score_2^{\pola, \targetpi} &\le \score_2^{\pola, \poll^{*, 1}} - \score_2^{\pola, \targetpi}\\ 
    &\underbrace{\le}_{(i)} \score_2^{\pola, \ntargetpiproxy\{s, a_2\}} - \score_2^{\pola, \targetpi} \underbrace{\le}_{(ii)} \eps,
\end{align*}
where $s$ satisfies $\occstate^{\pola, \targetpi}(s) > 0 \land \poll^{*, 1}(s, .) \ne \targetpi(s, .)$ and $a_\cL$ satisfies $\poll^{*, 1}(s, a_\cL) = 1.0$. The fist inequality ($(i)$) holds because $\score_2^{\pola, \poll} \le \score_2^{\pola, \ntargetpiproxy\{s,a_2\}}$ for any policy $\poll$ that satisfies $\poll(s', \cdot) = \ntargetpiproxy\{s,a_2\}(s', \cdot)$ in states $s'$ for which $\occstate^{\pola, \targetpi}(s') > 0$---one can show this using the same analysis as in \eqref{eq.ntargetpiproxy_opt}:
\begin{align*}
    \score_2^{\pola, \poll} - \score_2^{\pola, \ntargetpiproxy\{s,a_2\}} &= \sum_{s'} \occstate^{\pola, \poll}(s') \cdot \left ( Q^{\pola, \ntargetpiproxy}(s', \pola, \poll) - V^{\pola, \ntargetpiproxy}(s') \right ) \\
    &= \sum_{s'|\occstate^{\pola, \poll}(s') = 0} \occstate^{\pola, \poll}(s') \cdot \left ( Q^{\pola, \ntargetpiproxy}(s', \pola, \poll) - V^{\pola, \ntargetpiproxy}(s') \right ) 
    \\&\le 0.
\end{align*}
The second inequality ($(ii)$) is due to the constraints. This concludes the proof. 
\end{proof}

Finally, we also present a version of the result for a special case of the studied setting, which is used in deriving an upper bound to the cost of optimal attack. 
\begin{lemma}\label{lm.neigbhor_policies_ergodic}
Consider a policy $\pola$, target policy $\targetpi$, and assume that the Markov chain induced by $\pola$ and $\targetpi$ is ergodic, i.e., $\occstate^{\pola, \targetpi}(s) > 0$, for all states $s$. Then, $\pola$ satisfies the constraints of the optimization problem \eqref{prob.instance_1} if and only if
\begin{align*}
     \score_2^{\pola, \targetpi} \ge \score_2^{\pola, \targetpi \{s,a_\cL \}} + \eps \quad \forall s, a_\cL \text{ \em s.t. } \targetpi(s, a_\cL) = 0.
\end{align*}
\end{lemma}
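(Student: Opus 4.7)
The plan is to derive the lemma as a direct specialization of Lemma \ref{lm.neighbor_policies} to the ergodic setting. The key observation is that when the Markov chain induced by $\pola$ and $\targetpi$ is ergodic, we have $\occstate^{\pola, \targetpi}(s') > 0$ for every state $s'$, which simultaneously simplifies two aspects of the general statement: (i) the quantifier ``$\forall s \text{ s.t. } \occstate^{\pola, \targetpi}(s) > 0$'' in Lemma \ref{lm.neighbor_policies} reduces to ``$\forall s$'', and (ii) the proxy policy $\ntargetpiproxy\{s, a_\cL\}$ coincides with the ordinary neighbor $\targetpi\{s, a_\cL\}$ on its entire state space, since the ``free'' case of its definition (states of zero occupancy under $\targetpi$) never occurs.

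First I would state the necessity direction directly without appealing to Lemma \ref{lm.neighbor_policies}: since $\targetpi$ is deterministic, each neighbor $\targetpi\{s, a_\cL\}$ with $\targetpi(s, a_\cL) = 0$ lies in $\Polld$, and it differs from $\targetpi$ in state $s$ which has $\occstate^{\pola, \targetpi}(s) > 0$, so $\targetpi\{s, a_\cL\} \notin \Pi_2^{\dagger}(\pola)$; therefore the constraint of \eqref{prob.instance_1} enforces exactly the claimed neighbor inequality. For sufficiency, I would invoke Lemma \ref{lm.neighbor_policies} and verify, under the ergodicity assumption, that the condition stated there reduces to the one given here. Specifically, the definition of $\ntargetpiproxy\{s, a_\cL\}$ sets it equal to $\targetpi\{s, a_\cL\}$ on all $s'$ with $\occstate^{\pola, \targetpi}(s') > 0$; under ergodicity this is the entirety of $S$, so $\ntargetpiproxy\{s, a_\cL\} = \targetpi\{s, a_\cL\}$ as policies. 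Substituting back, the hypothesis of Lemma \ref{lm.neighbor_policies} becomes $\score_2^{\pola, \targetpi} \ge \score_2^{\pola, \targetpi\{s, a_\cL\}} + \eps$ for every $s$ and every $a_\cL$ with $\targetpi(s, a_\cL) = 0$, which is exactly the condition assumed here.

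I expect no serious obstacle: this is a routine specialization, and essentially all of the nontrivial work---handling the non-visited states via the auxiliary Bellman system in \eqref{eq.belman_equations.ntargetproxy} and the induction that reduces an arbitrary deterministic deviation to a single neighbor deviation---has already been carried out in the proof of Lemma \ref{lm.neighbor_policies}. The only subtlety worth highlighting is the verification that $\ntargetpiproxy\{s, a_\cL\}$ truly reduces to $\targetpi\{s, a_\cL\}$ in the ergodic case; this follows immediately from inspecting the case split in Lemma \ref{lm.neighbor_policies}'s definition, since the branch selecting an action from $\argmax_{a_\cL''} \sum_{a_\cA} \bar Q_{\pola}^{*\{s, a_\cL\}}(s', a_\cA, a_\cL'')$ is never triggered.
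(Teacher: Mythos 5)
Your proposal is correct. The only difference from the paper is one of packaging: the paper does not invoke Lemma \ref{lm.neighbor_policies} as a black box but instead re-runs its decomposition argument directly in the ergodic setting, where the advantage expansion over visited states involves only the plain neighbors $\targetpi\{s,a_s\}$ and the proxy machinery of \eqref{eq.belman_equations.ntargetproxy} never has to be introduced. You instead take the statement of Lemma \ref{lm.neighbor_policies} as given and check that under ergodicity the quantifier ``$\forall s$ s.t. $\occstate^{\pola,\targetpi}(s)>0$'' covers all of $S$ and that $\ntargetpiproxy\{s,a_\cL\}$ degenerates to $\targetpi\{s,a_\cL\}$ because the zero-occupancy branch of its definition is vacuous; your explicit necessity argument (each neighbor is a deterministic policy outside $\Pi_2^{\dagger}(\pola)$ since it deviates at a visited state) is also sound and is in fact more detailed than the paper's, which treats necessity as immediate. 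Both routes are valid; yours is shorter and avoids duplicating the induction, while the paper's direct re-derivation is self-contained and makes visible exactly where ergodicity simplifies the argument.
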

\begin{proof}
We can prove it by following the proof of Lemma \ref{lm.neighbor_policies}. Namely, we have that:
\begin{align*}
    \score_2^{\pola, \poll^*} - \score_2^{\pola, \targetpi} &= \sum_{s}  \occstate^{\pola, \poll^*}(s) \cdot \left ( Q^{\pola, \targetpi}(s, \pola, \targetpi) - V^{\pola, \targetpi}(s) \right)\\
     &=\sum_{s|\poll^*(s, .) \ne \targetpi(s, . )} \occstate^{\pola, \poll^*}(s) \cdot \frac{\score_2^{\pola, \targetpi \{s, a_s\}} -  \score_2^{\pola, \targetpi}}{\occstate^{\pola, \targetpi \{s, a_s\}}(s)}\\
     &\le - \eps \cdot \sum_{s| \poll^*(s, .) \ne \targetpi(s, .)} \frac{ \occstate^{\pola, \poll^*}(s)}{\occstate^{\pola, \targetpi \{s, a_s\}}(s)} \le 0. 
\end{align*}
where we used the same arguments as in the proof of Lemma \ref{lm.neighbor_policies}. Therefore, $\score_2^{\pola, \poll^*} \le \score_2^{\pola, \targetpi}$. We proceed as in the proof of Lemma \ref{lm.neighbor_policies}, i.e.,
\begin{align*}
    \score_2^{\pola, \poll^*} - \score_2^{\pola, \targetpi} &= \sum_{s} \occstate^{\pola, \targetpi}(s) \left ( Q^{\pola, \poll^*}(s, \pola, \targetpi) - V^{\pola, \poll^*}(s) \right ) \\ 
     &=\sum_{s|\poll^*(s, .) \ne \targetpi(s, .)} \occstate^{\pola, \targetpi}(s) \left ( Q^{\pola, \poll^*}(s, \pola, \targetpi) - V^{\pola, \poll^*}(s) \right ),
\end{align*}
which due to $\score_2^{\pola, \poll^*} \le  \score_2^{\pola, \targetpi}$ and considered $\poll^*$ implies that there exists $s$ and $a^*_\cL$ that satisfy $\poll^*(s, .) \ne \targetpi(s, .)$, $\targetpi(s, a^*_\cL) = 1.0$,
and $Q^{\pola, \poll^*}(s, \pola, a_\cL^*) - V^{\pola, \poll^*}(s) \ge 0$.
In turn, this means that the neighboring policy $\poll^*\{s, a^*_\cL\}$ satisfies $\score_2^{\pola, \poll^*} \le \score_2^{\pola, \poll^*\{s, a^*_\cL\}}$ because
\begin{align*}
    \score_2^{\pola, \poll^*\{s, a_\cL^*\}} - \score_2^{\pola, \poll^*} = \occstate^{\pola, \poll^*\{s, a_\cL^*\}}(s) \cdot \left ( 
    Q^{\pola, \poll^*}(s, \pola, a_\cL^*) - V^{\pola, \poll^*}(s)
    \right ) \ge 0.
\end{align*}
Denote this policy by $\poll^{*, k-1}$ where $k$ is the number of states $s$ that satisfy $\poll^*(s, .) \ne \targetpi(s, .)$. By induction we obtain
\begin{align*}
    \score_2^{\pola, \poll^*} - \score_2^{\pola, \targetpi} &\le \score_2^{\pola, \poll^{*, 1}} - \score_2^{\pola, \targetpi} 
    =\score_2^{\pola, \targetpi\{s, a_2\}} - \score_2^{\pola, \targetpi} \le -\eps,
\end{align*}
where $s$ and $a_\cL$ satisfy $\poll^*(s, .) \ne \targetpi(s, .)$ and $\poll^{*, 1}(s, a_\cL) = 1.0$. 
The last inequality is due to the conditions of the lemma.
\end{proof}

\section{Proof of Theorem \ref{thm.computational_complexity}}\label{sec.app.comp_complexity}

In this section, we provide the proof of Theorem \ref{thm.computational_complexity} from Section \nameref{sec.characterization}. First, we analyze two special cases. 

Let us consider a special case when transitions are independent of policy $\pola$, i.e., $P(s, a_{\cA}, a_{\cL}) = P(s, a_{\cA}', a_{\cL})$, while the Markov chain induced by $\targetpi$ and any policy $\pola$ is ergodic. In this case, Lemma \ref{lm.neigbhor_policies_ergodic} implies that the constraints of the optimization problem are
\begin{align*}
     \score_2^{\initpi, \targetpi} \ge \score_2^{\initpi, \targetpi \{s,a_\cL \}} + \eps \quad \forall s, a_\cL \text{ \em s.t. } \targetpi(s, a_\cL) = 0.
\end{align*}
From Eq. \eqref{eq.return_occupance}, it follows that these constraints are linear. Therefore,  \eqref{prob.instance_1} is in this case a convex optimization problem, and can be efficiently solved.

Let us now consider another special case when transitions are independent of policy $\pola$ and $\poll$, i.e., $P(s, a_{\cA}, a_{\cL}) = P(s, a_{\cA}', a_{\cL}')$. In this case, the state occupancy measure is independent of the agents' policies. Hence, it follows that $\targetpi(s, .) = \ntargetpiproxy(s, .)$ for states $s$ s.t. $\occstate^{\initpi, \targetpi}(s) > 0$, which by Lemma \ref{lm.neighbor_policies} implies that the constraints of the optimization problem are
\begin{align*}
     \score_2^{\initpi, \targetpi} \ge \score_2^{\initpi, \targetpi \{s,a_\cL \}} + \eps \quad \forall s, a_\cL \text{ \em s.t. } \targetpi(s, a_\cL) = 0.
\end{align*}
We can again use Eq. \eqref{eq.return_occupance} to conclude that \eqref{prob.instance_1} is in this case a convex optimization problem, and can be efficiently solved.

We now turn to the proof of Theorem \ref{thm.computational_complexity}.  

\textbf{Statement of Theorem \ref{thm.computational_complexity}}: {\em %
It is NP-hard to decide whether the optimization problem \eqref{prob.instance_1} is feasible, i.e., whether there exists solution $\pola$ s.t. the constraints of the optimization problem are satisfied.
}
\begin{proof}
We prove this statement by reducing a generic instance of the Boolean 3-SAT problem to the setting of interest, such that agent $\cA$ can force $\targetpi$ if and only if the 3-SAT instance is satisfiable.
As commonly known, in 3-SAT, we have $n$ binary variables $x_1$, ... , $x_n$, and $m$ clauses $C_1$, ..., $C_m$, each clause containing three literals (variables $x_j$ or their negations $\bar x_j$). We will denote $j$-th literal of $C_i$ by $C_{i, j}$. The decision problem is to determine whether there exists an assignment of variables $x_j$ s.t. all the clauses have at least one literal that evaluates to {\em true}. 

We consider the following reduction for a given 3-SAT instance. Let us encode this instance using the two-agent MDP model of the setting considered in this paper, i.e., $\cM = (\{ \cA, \cL \}, S, A, P, R, \gamma, \sigma)$. We start by describing the state space $S$, which contains:
\begin{itemize}
    \item initial state $s_I$, 
    \item states $s_{C_i}$, one associated to each clause $C_i$,
    \item states $s_{x_j}$, one associated to each positive literal $x_j$,
    \item states $s_{\bar x_j}$, one associated to each negative literal $\bar x_j$,
    \item states $s_{val(x_j)}$, one associated to variable $x_j$ (i.e., its value),
    \item and final state $s_{F}$.
\end{itemize}
Note that if $x$ (resp. $\bar x$) is $j$-th literal in $C_{i}$, we will also denote $s_{x}$ (resp. $s_{\bar x}$) by $s_{C_{i, j}}$. 
Given that $s_I$ is the initial state, the initial state distribution is $\sigma(s_I) = 1.0$, i.e., the initial state is $s_I$ with probability $1.0$. 

The action space of agents consists of three and two actions respectively, $A_\cA = \{ a_{\cA, 1}, a_{\cA, 2}, a_{\cA, 3}\}$ and $A_{\cL} = \{ a_{\cL, 1}, a_{\cL, 2}\}$. The reward function is defined as
\begin{align*}
     R_{\cL}(s, a_\cA, a_\cL) = \begin{cases}
        -\infty &\mbox{ if }  s \ne s_{x_j} \wedge s \ne s_{\bar x_j}\text{ and } a_\cL = a_{\cL, 2}\\
        \gamma &\mbox{ if }  s = s_{x_j} \text{ and } a_\cL = a_{\cL, 2}\\
        1+ \frac{\epsilon \cdot m}{(1-\gamma) \cdot \gamma^3} &\mbox{ if } s = s_{val(x_j)} \text{ and } a_\cA = a_{\cA, 1} \text{ and } a_\cL = a_{\cL, 1} \\
        - \frac{\epsilon \cdot m}{(1-\gamma) \cdot \gamma^3} &\mbox{ if } s = s_{val(x_j)} \text{ and } a_\cA \ne a_{\cA, 1} \text{ and } a_\cL = a_{\cL, 1} \\
        0 &\mbox{ otherwise }.
   \end{cases}
\end{align*}
The transition matrix is defined as:
\begin{align*}
   P(s, a_1, a_2, s') = \begin{cases}
        1/m &\mbox{ if } s = s_I \text{ and } s' = s_{C_i}\\
        1 &\mbox{ if } s = s_{C_i} \text{ and } a_\cA = a_{\cA, j} \text{ and } s' = s_{C_{i, j}} \\
        1 &\mbox{ if } s = s_{x_j} \text{ and } a_\cL = a_{\cL, 1} \text{ and } s' = s_{val(x_j)} \\
        1 &\mbox{ if } s = s_{x_j} \text{ and } a_\cL = a_{\cL, 2} \text{ and } s' = s_F \\
        1 &\mbox{ if } s = s_{\bar x_j} \text{ and } a_\cL = a_{\cL, 1} \text{ and } s' = s_{F} \\
        1 &\mbox{ if } s = s_{\bar x_j} \text{ and } a_\cL = a_{\cL, 2} \text{ and } s' = s_{val(x_j)} \\
        1 &\mbox{ if } s = s_{F} \text{ and }  s' = s_{F} \\
        0 &\mbox{ otherwise }
   \end{cases}
\end{align*} 
Finally, we define target policy $\targetpi$ as:
\begin{align*}
    \targetpi(s, a_\cL) = 
	\begin{cases} 
		1 &\mbox{ if } a = a_{\cL, 1},\\
		0 &\mbox{ otherwise}.
	\end{cases}
\end{align*}
Note that this construction has polynomial complexity in the number of variables of the 3-SAT problem, hence it is efficient. Figure \ref{fig.mdp3sat} provides intuition behind this construction, as well as some intuition behind the reduction used in the proof. To show NP-hardness, we need to prove that the 3-SAT problem is satisfiable if and only if the optimization problem \eqref{prob.instance_1} is feasible for the corresponding MDP.

\textbf{Direction 1}: We first show that if the optimization problem \eqref{prob.instance_1} is feasible for the above MDP, then the corresponding 3-SAT problem is satisfiable. Let $\pola$ be the policy that forces $\targetpi$. 
Now, suppose there exists $x_j$ for which $\occstate^{\pola, \targetpi}(s_{x_j}) > 0$. Note that $\targetpi\{s_{x_j}, a_{\cL, 2} \} = \targetpiproxy\{s_{x_j}, a_{\cL, 2} \} = \ntargetpiproxy\{s_{x_j}, a_{\cL, 2} \}$ in the MDP considered in the proof. Namely, agent 2 has a uniquely optimal action, $a_{\cL, 1}$, in all states other than $s_{x_j'}$ since the reward for taking action $a_{\cL, 1}$ in $s\ne s_{x_j'}$ is  $-\infty$. Due to condition \eqref{eq.neighbor_constrain} in Lemma \ref{lm.neighbor_policies} 
and the choice of the target policy, we have that 
\begin{align*}
    &\score_2^{\pola, \targetpi} - \score_2^{\pola, \targetpi\{s_{x_j}, a_{\cL, 2} \}}\ge \eps\\
    &\gamma \cdot \occstate^{\pola, \targetpi}(s_{x_j}) \cdot R_{\cL}(s_{val(x_j)}, \pola, a_{\cL, 1}) \ge \occstate^{\pola, \targetpi}(s_{x_j}) \cdot \gamma + \eps,
\end{align*}
where we used that $\occstate^{\pola, \targetpi}(s_{x_j}) = \occstate^{\pola, \targetpi\{s_{x_j}, a_{\cL} \}}(s_{x_j})$ (agent $\cL$ acts only after $x_j$ is reached), $\occstate^{\pola, \targetpi}(s_{x_j}) = 1/\gamma \cdot \occstate^{\pola, \targetpi}(s_{val(x_j)})$ (since $\targetpi$ takes $a_{\cL, 1}$ in $s_{x_j}$ transitioning to $s_{val(x_j)}$), and $\targetpi = \targetpiproxy$ (since agent $\cL$ has uniquely optimal policy for states other than $s_{x_j}$ and $s_{\bar x_j}$). This implies that $R_{\cL}(s_{val(x_j)}, \pola, a_{\cL, 1}) \ge 1 + \frac{\eps}{\gamma \cdot \occstate^{\pola, \targetpi}(s_{x_j})} > 1$. Similarly, if we consider $\bar x_j$ for which $\occstate^{\pola, \targetpi}(s_{\bar x_j}) > 0$, we obtain
\begin{align*}
    &\score_2^{\pola, \targetpi}- \score_2^{\pola,\targetpi\{s_{\bar x_j}, a_{\cL, 2} \}}\ge \eps\\
    &\gamma \cdot \occstate^{\pola, \targetpi}(s_{\bar x_j}) \cdot 0 \ge \gamma \cdot \occstate^{\pola, \targetpi}(s_{\bar x_j}) \cdot R_{\cL}(s_{val(x_j)}, \pola, a_{\cL, 1}) + \eps,
\end{align*}
which implies that $R_{\cL}(s_{val(x_j)}, \pola, a_{\cL, 1}) \le -\frac{\eps}{\gamma \cdot \occstate^{\pola, \targetpi}(\bar s_{x_j})} < 0$. To conclude, the probability of visiting $s_{x_i}$ is strictly positive only if $R_{\cL}(s_{val(x_j)}, \pola, a_{\cL, 1}) > 1$, and the probability of visiting $s_{\bar{x}_i}$ is strictly positive only if $R_{\cL}(s_{val(x_j)}, \pola, a_{\cL, 1}) < 0$. 

This means that $s_{x_j}$ and $s_{\bar{x}_j}$ cannot simultaneously have strictly positive occupancy measure for $\pola$ that forces $\targetpi$. Given that $\pola$ ``selects'' literals when taking actions in states $s_{C_i}$, and every $s_{C_i}$ has a strictly positive occupancy measure, this further implies that $\pola$: a) ``selects'' at least one literal for each $s_{C_i}$, b) either ``selects'' $x_j$ or $\bar{x}_j$ with $0$ probability across all $s_{C_i}$. Since $s_{C_i}$ corresponds to clause $C_i$, $\pola$'s selection corresponds to the solution of the corresponding 3-SAT problem since $\pola$ identifies for each $C_i$ which literals should evaluate to true (those for which $\pola(s_{C_i}, a_{\cA, j}) > 0$) and this assignment is not inconsistent across clauses $C_i$ ($\pola$ never selects  $x_{j}$ and $\bar x_{j}$ across $s_{C_i}$). Therefore, to obtain an assignment of variables that satisfies the corresponding 3-SAT problem, it is enough to set $x_j$ to {\em true} if $R_{\cL}(s_{val(x_j)}, \pola, a_{2, 1}) > 1$, and otherwise set $x_j$ to  {\em false}.

\textbf{Direction 2}: Next, we show that if a given instance of the 3-SAT problem is satisfiable, then the optimization problem \eqref{prob.instance_1} is feasible for the corresponding 2-agent MDP problem. Consider the following policy $\pi_1$
\begin{itemize}
\item If the literal $x_i$ is {\em true} in the 3-SAT solution, take action $a_{\cA, 1}$ in state $s_{val(x_i)}$. Otherwise, take another action (either $a_{\cA, 2}$ or $a_{\cA, 3}$).
\item In state $s_{C_i}$ choose one literal that satisfies the clause in the 3-SAT solution.
\end{itemize}
We can now utilize Lemma \ref{lm.neighbor_policies} again to prove the claim. First note that in all states other than $s_{x_j}$ and $s_{\bar x_j}$ any deviation from the target policy results in $-\infty$ reward. Hence, we focus on states $s_{x_j}$ and $s_{\bar x_j}$.

Consider $x_j$ for which $\occstate^{\pola, \targetpi}(s_{x_j}) > 0$. We have that 
\begin{align*}
    \score_2^{\pola, \targetpi} - \score_2^{\pola, \targetpi\{s_{x_j}, a_{\cL, 2} \}} &= 
     \mu^{\pola, \targetpi}(s_{x_j}) \cdot \left (1 + \frac{\epsilon \cdot m}{(1-\gamma) \cdot \gamma^3} \right )- \mu^{\pola, \targetpi}(s_{x_j}) \cdot 1\\
     &\ge \frac{(1-\gamma)\cdot \gamma^3}{m} \cdot \frac{\epsilon \cdot m}{(1-\gamma) \cdot \gamma^3} = \epsilon.
\end{align*}

Now, consider $\bar x_j$ for which $\occstate^{\pola, \targetpi}(s_{\bar x_j}) > 0$. We have that
\begin{align*}
    \score_2^{\pola, \targetpi} - \score_2^{\pola, \targetpi\{s_{\bar x_j}, a_{\cL, 2} \}} &= \mu^{\pola, \targetpi}(s_{\bar x_j}) \cdot 0 - \mu^{\pola, \targetpi}(s_{\bar x_j}) \cdot (- \frac{\epsilon \cdot m}{(1-\gamma) \cdot \gamma^3}) \\
    &= \frac{(1-\gamma) \cdot \gamma^3}{m} \cdot \frac{\epsilon \cdot m}{(1-\gamma)} = \eps
\end{align*}
As for the previous direction, $\targetpi\{s_{x_j}, a_{\cL, 2} \} = \targetpiproxy\{s_{x_j}, a_{\cL, 2} \} = \ntargetpiproxy\{s_{x_j}, a_{\cL, 2} \}$ since agent $\cL$ has uniquely optimal policy for states other than $s_{x_j}$ and $s_{\bar x_j}$.
Hence, by Lemma \ref{lm.neighbor_policies}, policy $\pola$ forces target policy $\targetpi$.

\begin{figure*}[!ht]
\centering
 \begin{tikzpicture}[main/.style = {draw, circle}, node distance={25mm}, thick] 
\node[main] (0) {$s_I$};
\node[main] (1) [above right of=0] {$s_{C_1}$};
\node[main] (2) [below right of=0] {$s_{C_2}$}; 
\node[main] (3) [right of=1] {$s_{x_2}$}; 
\node[main] (4) [above of=3] {$s_{\bar{x}_1}$}; 
\node[main] (5) [above of=4] {$s_{x_1}$}; 
\node[main] (6) [right of=2] {$s_{\bar{x}_2}$}; 
\node[main] (7) [below of=6] {$s_{x_3}$}; 
\node[main] (8) [below of=7] {$s_{\bar{x}_3}$}; 
\node[main] (9) [below right of=5] {$s_{val(x_1)}$}; 
\node[main] (10) [below right of=3] {$s_{val(x_2)}$}; 
\node[main] (11) [above right of=8] {$s_{val(x_3)}$}; 
\node[main] (12) [right=4cm of 10] {$s_F$}; 
\draw (0) edge[->, above, sloped, black] node{$P(s_{C_1}) = \frac{1}{2}$}  (1);
\draw (0) edge[->, below, sloped, black] node{$P(s_{C_2}) = \frac{1}{2}$}  (2);
\draw (1) edge[->, above, sloped, red] (5);
\draw (1) edge[->, below, sloped, green] (6);
\draw (1) edge[->, below, sloped, blue] (7);
\draw (2) edge[->, above, sloped, red] (4);
\draw (2) edge[->, below, sloped, green] (6);
\draw (2) edge[->, below, sloped, blue] (8);
\draw (5) edge[->, below, sloped, orange] (9);
\draw (5) edge[->, bend left = 50, above, sloped, purple] node{$r_2 = \gamma$} (12);
\draw (3) edge[->, below, sloped, orange] (10);
\draw (3) edge[->, above, sloped, purple] node{$r_2 = \gamma$} (12);
\draw (7) edge[->, below, sloped, orange] (11);
\draw (7) edge[->, above, sloped, purple] node{$r_2 = \gamma$} (12);
\draw (4) edge[->, below, sloped, orange] (12);
\draw (4) edge[->, below, sloped, purple] (9);
\draw (6) edge[->, below, sloped, orange] (12);
\draw (6) edge[->, below, sloped, purple] (10);
\draw (8) edge[->, bend right = 50, below, sloped, orange] (12);
\draw (8) edge[->, below, sloped, purple] (11);
\draw (9) edge[->, above, sloped, black] node{$r_2 = val$} (12);
\draw (10) edge[->, above, sloped, black] node{$r_2 = val$} (12);
\draw (11) edge[->, below, sloped, black] node{$r_2 = val$} (12);
\end{tikzpicture} 
\caption{The figure depicts an example of the 2-agent MDP encoding of a 3-SAT problem with $2$ clauses $C_i$ and $3$ variables $x_j$. i) In this MDP, the start state is $s_I$. Regardless of the agents' actions, the next state is either $s_{C_1}$ or $s_{C_2}$, chosen with equal probability. The reward of agent $\cL$ is equal to $0$ in state $s_I$.
ii) In state $s_{C_i}$, the action of agent $\cL$ does influence transitions nor rewards. Agent $\cA$ can select between its three actions, effectively choosing on of the literals in $C_i$, e.g., by taking $a_{\cA, j}$, the agent selects $j$-th literal in $C_i$, i.e., $C_{i, j}$. When action $a_{\cA}$ is taken, the agents transition to the corresponding literal state, i.e., $s_{C_{i, a_{\cA}}}$, and the reward of agent $\cL$ is $0$.
iii) In state $s_{x_i}$, agent $\cA$ does not influence transitions nor rewards. Agent $\cL$ can choose between its two actions. If action $a_{\cL, 1}$ is taken, the next state is $s_{val(x_i)}$ and the immediate reward is $0$. If action $a_{\cL, 2}$ is taken, the immediate reward is $\gamma$ and the next state is $s_F$. These transitions effectively simulate the choice that agent $\cL$ is making: it can either keep $x_i$ and receive the value in state $s_{val(x_i)}$, or it can collect reward of $\gamma$ and transition to $s_F$. 
iv) In state $\bar{x}_i$, agent $\cA$ does not influence transitions nor rewards. Agent $\cL$ can choose between its two actions. If action $a_{\cL, 1}$, the next state is $s_F$. If action $a_{\cL, 2}$ is taken,  the next state is $s_{val(x_i)}$. In both cases, the immediate reward is $0$. Similar to the previous case, agent $\cL$ can either transition to $s_F$ obtaining reward $0$, or can flip $\bar x_i$ and receive the value in state $s_{val(x_i)}$.
v) 	In state $val_{x_i}$, agent $\cA$ assigns a ``value'' to $x_i$. If it takes action $a_{\cA, 1}$, the reward of agent $\cL$ is equal to $1+ \frac{\epsilon \cdot m}{(1-\gamma) \cdot \gamma^3}$, which simulates the case when $x_i$ is set to {\em true}. Otherwise, if agent $\cA$ takes action $a_{\cA, 2}$ or $a_{\cA, 3}$, the reward of agent $\cL$ is equal to is equal to $- \frac{\epsilon \cdot m}{(1-\gamma) \cdot \gamma^3}$, which simulates the case when $x_i$ is set to {\em false}. vi) {\bf Goal:} Force the actions of agent $\cL$ that lead to orange transitions, by selecting literals in clauses $C_1$ and $C_2$ and setting the values of variables $x_i$ to $1+ \frac{\epsilon \cdot m}{(1-\gamma) \cdot \gamma^3}$ or $- \frac{\epsilon \cdot m}{(1-\gamma) \cdot \gamma^3}$.  
}\label{fig.mdp3sat}
\label{fig.env}
\end{figure*}
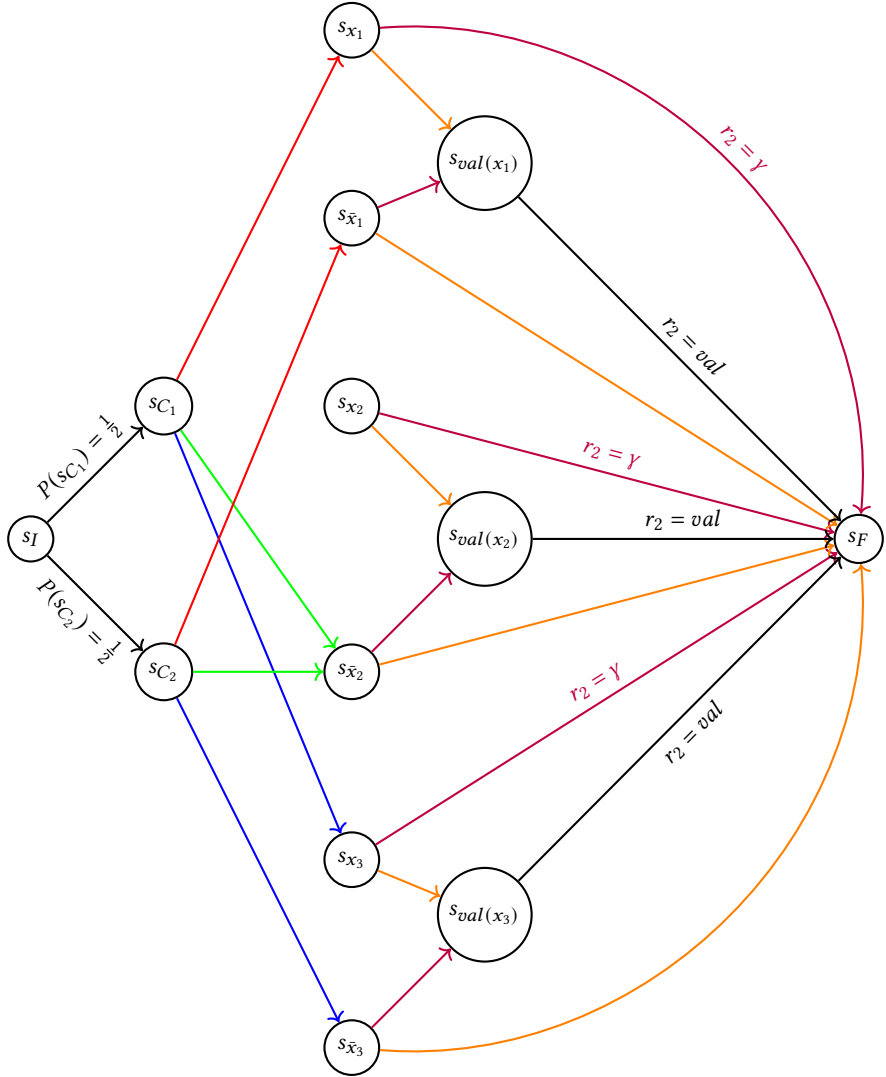

\end{proof}

\section{Proof of Theorem \ref{thm.cost_lower_bound}}\label{sec.app.lower_bound}

To prove the theorem, we first prove a few intermediate results. Our proof technique is similar to the ones from prior work, n particular \cite{ma2019policy,rakhsha2021policy}, and can be considered as an adaptation of these techniques to our problem setting.

We first combine the lower bounds in Lemma \ref{lm.policy_reward_transitions} to obtain a lower bound on the influence of $\pola$ on $Q^{\poll}_{\pola}$-values relative to $\pola'$. Such a lower bound is useful since we can state an optimality condition for a target policy using $Q^{\poll}_{\pola}$. 

\begin{lemma}\label{lm.cost_lower_bound}
For any policy $\pola$, policy $\pola'$, and policy $\poll$, the following holds	
\begin{align*}
\norm{\pola - \pola'}_{1, \infty} \ge \frac{1-\gamma}{ \norm{R_{\cL}}_{\infty} + \gamma \cdot  \norm{V^{\pola', \pi_2}}_{\infty}} \cdot \norm{Q^{\poll}_{\pola} - Q_{\pola'}^{\poll}}_{\infty}.
\end{align*}
\end{lemma}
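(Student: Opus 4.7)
The plan is to start from the Bellman equation $Q^{\poll}_{\pola} = R_{\cL}^{\pola} + \gamma \cdot (P^{\pola})^T \cdot V^{\pola, \poll}$ introduced in the additional background, apply it to both $\pola$ and $\pola'$, and take the difference. The key algebraic step will be to add and subtract the cross term $\gamma \cdot (P^{\pola})^T \cdot V^{\pola', \poll}$ so as to split the difference into three pieces:
\begin{align*}
Q^{\poll}_{\pola} - Q^{\poll}_{\pola'} = \bigl(R_{\cL}^{\pola} - R_{\cL}^{\pola'}\bigr) + \gamma \cdot \bigl((P^{\pola})^T - (P^{\pola'})^T\bigr) V^{\pola', \poll} + \gamma \cdot (P^{\pola})^T \bigl(V^{\pola, \poll} - V^{\pola', \poll}\bigr).
\end{align*}

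Next I would take the $\ell_\infty$ norm of both sides and apply the triangle inequality. The first two terms can be bounded directly using Lemma \ref{lm.policy_reward_transitions}, yielding $\norm{\pola - \pola'}_{1, \infty} \cdot \norm{R_\cL}_\infty$ and $\norm{\pola - \pola'}_{1, \infty} \cdot \norm{V^{\pola', \poll}}_\infty$ respectively (here I use that $\norm{M v}_\infty \le \norm{M}_{1, \infty} \cdot \norm{v}_\infty$ for the transposed transition matrix acting on $V^{\pola', \poll}$). For the third term, since $(P^{\pola})^T$ is row-stochastic, its induced $\ell_\infty$ operator norm is $1$, so it contributes at most $\gamma \cdot \norm{V^{\pola, \poll} - V^{\pola', \poll}}_\infty$.

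The remaining step is to relate the value-function difference back to the $Q$-difference so that the inequality becomes self-contained. Since $V^{\pola, \poll}(s) = \sum_{a_\cL} \poll(s, a_\cL) \cdot Q^{\poll}_{\pola}(s, a_\cL)$, the same convex combination applied to both $\pola$ and $\pola'$ immediately gives $\norm{V^{\pola, \poll} - V^{\pola', \poll}}_\infty \le \norm{Q^{\poll}_{\pola} - Q^{\poll}_{\pola'}}_\infty$. Substituting this into the bound yields
\begin{align*}
\norm{Q^{\poll}_{\pola} - Q^{\poll}_{\pola'}}_\infty \le \norm{\pola - \pola'}_{1, \infty} \cdot \bigl(\norm{R_\cL}_\infty + \gamma \cdot \norm{V^{\pola', \poll}}_\infty\bigr) + \gamma \cdot \norm{Q^{\poll}_{\pola} - Q^{\poll}_{\pola'}}_\infty,
\end{align*}
and absorbing the $\gamma$ term on the right (using $\gamma < 1$) and rearranging produces exactly the desired inequality.

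The main obstacle is the self-referential shape of the recursion: naively unrolling the Bellman equation would produce an infinite geometric series, but the cleaner route is to keep it in one step and solve the resulting linear inequality in $\norm{Q^{\poll}_{\pola} - Q^{\poll}_{\pola'}}_\infty$. The only subtle choice is which cross term to introduce when decomposing, since one must ensure that $V^{\pola', \poll}$ (not $V^{\pola, \poll}$) multiplies the transition difference, so that the recursive $V$-difference appears with a row-stochastic $(P^{\pola})^T$ in front whose operator norm is $1$.
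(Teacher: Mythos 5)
Your proof is correct and follows essentially the same route as the paper: the same Bellman-equation decomposition with the cross term $(P^{\pola})^T V^{\pola',\poll}$, the same use of Lemma \ref{lm.policy_reward_transitions} for the reward and transition differences, the same row-stochasticity and convex-combination arguments to reduce the $V$-difference to the $Q$-difference, and the same rearrangement of the resulting self-referential inequality. Your closing remark about which cross term to introduce (so that $V^{\pola',\poll}$ multiplies the transition difference) is exactly the choice the paper makes, and it is indeed what makes the bound match the stated constant.
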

\begin{proof}
When written in the vector notation, we obtain
\begin{align*}
	\norm{Q^{\poll}_{\pola} - Q_{\pola'}^{\poll}}_{\infty} &= \norm{R_{\cL}^{\pola} + \gamma \cdot (P^{\pola})^T \cdot V^{\pola, \poll} - R_{\cL}^{\pola'} - \gamma \cdot (P^{\pola'})^T \cdot V^{\pola', \poll}}_{\infty}
	\\&\underbrace{\le}_{(i)} \norm{R_{\cL}^{\pola} - R^{\pola'}}_\infty + \gamma \cdot \norm{(P^{\pola})^T \cdot V^{\pola, \poll} - (P^{\pola'}))^T \cdot V^{\pola', \poll}}_{\infty}\\
	\\&= \norm{R_{\cL}^{\pola} - R_{\cL}^{\pola'}}_\infty + \gamma \cdot \norm{(P^{\pola} - P^{\pola'})^T \cdot V^{\pola', \poll} + (P^{\pola})^T\cdot (V^{\pola, \poll} - V^{\pola', \poll}) }_{\infty}\\
	\\&\underbrace{\le}_{(ii)} \norm{R_{\cL}^{\pola} - R_{\cL}^{\pola'}}_\infty + \gamma \cdot \norm{(P^{\pola} - P^{\pola'})^T \cdot V^{\pola', \poll}}_{\infty} + \gamma \cdot \norm{(P^{\pola})^T \cdot (V^{\pola, \poll} - V^{\pola', \poll}) }_{\infty},
\end{align*}
where in $(i)$ and $(ii)$ we applied the triangle inequality. Now, note that 
\begin{align*}
    \norm{(P^{\pola} - P^{\pola'})^T \cdot V^{\pola', \poll}}_{\infty} &= \max_{s, a_\cL} |\sum_{s'} ( P^{\pola}(s, a_\cL, s') - P^{\pola'}(s, a_\cL, s')) \cdot V^{\pola', \poll}(s') | \\
    &\le \max_{s, a_\cL} \sum_{s'} | P^{\pola}(s, a_\cL, s') - P^{\pola'}(s, a_\cL, s')| \cdot \norm{V^{\pola', \poll}}_{\infty} = \norm{P^{\pola} - P^{\pola'}}_{1, \infty} \cdot \norm{V^{\pola', \poll}}_{\infty}. 
\end{align*}
Similarly
\begin{align*}
    \norm{(P^{\pola})^T \cdot (V^{\pola, \poll} - V^{\pola', \poll}) }_{\infty} &= \max_{s, a_\cL} |\sum_{s'} P^{\pola}(s, a_\cL, s')\cdot ( V^{\pola, \poll}(s') - V^{\pola', \poll}(s'))| \\
    &\le \max_{s, a_\cL} \sum_{s'} | P^{\pola}(s, a_\cL, s')| \cdot \norm{V^{\pola, \poll} - V^{\pola', \poll}}_{\infty} \\
    &= \norm{V^{\pola, \poll} - V^{\pola', \poll}}_{\infty} \le \norm{Q_{\pola}^{\poll} - Q_{\pola'}^{\poll}}_{\infty},
\end{align*}
where the last inequality follows from the fact that for all states $s$
\begin{align*}
    |V^{\pola, \poll}(s) - V^{\pola', \poll}(s)| = | \sum_{a_\cL} \poll(s, a_{\cL}) \cdot ( Q_{\pola}^{\poll}(s, a_{\cL}) - Q_{\pola'}^{\poll}(s, a_{\cL})  )| \le \max_{a_\cL} | Q_{\pola}^{\poll}(s, a_{\cL}) - Q_{\pola'}^{\poll}(s, a_{\cL}) |.
\end{align*}
Putting everything together, we have
\begin{align*}
    (1-\gamma) \cdot \norm{Q^{\poll}_{\pola} -   Q_{\pola'}^{\poll}}_{\infty} \le \norm{R_{\cL}^{\pola} - R_{\cL}^{\pola'}}_\infty + \gamma \cdot \norm{P^{\pola} - P^{\pola'}}_{1, \infty} \cdot \norm{V^{\pola', \poll}}_{\infty}.
\end{align*}
Finally, we can apply Lemma \ref{lm.policy_reward_transitions} to obtain
\begin{align*}
    (1-\gamma) \cdot \norm{Q^{\poll}_{\pola} -   Q_{\pola'}^{\poll}}_{\infty} \le \norm{\pola - \pola'}_{1, \infty} \cdot \norm{R_2}_\infty + \norm{\pola - \pola'}_{1, \infty} \cdot \gamma \cdot \norm{V^{\pola', \poll}}_{\infty}
\end{align*}
which implies the statement.  
\end{proof}

Finally, we are ready the prove the theorem. 

\textbf{Statement of Theorem \ref{thm.cost_lower_bound}}: {\em 
The attack cost of any solution to the optimization problem \eqref{prob.instance_1}, if it exists, satisfies 
    \begin{align*}
        \Cost(\pola, \initpi) \ge \frac{1-\gamma}{2} \cdot \frac{\norm{\overline \chi_{0}}_{\infty} }{\norm{R_2}_{\infty} + \gamma \cdot \norm{V^{\initpi, \targetpi}}_{\infty}}.
    \end{align*}
}
\begin{proof}
Assume that a policy that forces $\targetpi$ exists, and denote it by $\pola^{*}$. If $\overline \chi_{0}(s, a) = 0$, then the statement trivially follows, since the cost function is non-negative. 
Consider state-action pair $(s', a_\cL')$ such that $\overline \chi_{0}(s',a') > 0$ and let $a^{\dagger}_{\cL}$ be the action that $\targetpi$ takes in $s'$.
Since $\overline \chi_{0}(s',a') > 0$, we know that $\occstate^{\pola, \targetpi}(s) > 0$ for all $\pola$, hence, $\occstate^{\initpi, \targetpi\{s,a_\cL'\}}(s) > 0$ and $\occstate^{\pola^*, \targetpi\{s,a_\cL'\}}(s) > 0$ hold. We have that 
\begin{align*}
    \norm{Q^{\targetpi}_{\pola^*} -   Q_{\initpi}^{\targetpi}}_{\infty} &\ge \frac{1}{2} \cdot \max_{s, a_\cL} [Q^{\targetpi}_{\pola^*}(s, a_\cL) -   Q_{\initpi}^{\targetpi}(s, a_\cL)]  -  \frac{1}{2} \cdot \min_{s, a_\cL} [Q^{\targetpi}_{\pola^*}(s, a_\cL) -   Q_{\initpi}^{\targetpi}(s, a_\cL)] \\
    &\ge \frac{1}{2} \cdot [Q^{\targetpi}_{\pola^*}(s', a^{\dagger}_{\cL}) -   Q_{\initpi}^{\targetpi}(s', a^{\dagger}_{\cL})] - \frac{1}{2} \cdot  [Q^{\targetpi}_{\pola^*}(s', a_\cL') -   Q_{\initpi}^{\targetpi}(s', a_\cL')] \\
    &= \frac{1}{2} \cdot [Q^{\targetpi}_{\pola^*}(s', a^{\dagger}_{\cL}) - Q^{\targetpi}_{\pola^*}(s', a_\cL')] + \frac{1}{2} \cdot [ Q^{\targetpi}_{\initpi}(s', a') - Q^{\targetpi}_{\initpi}(s', a^{\dagger}_{\cL})]\\ 
    &\underbrace{\ge}_{(i)} \frac{1}{2} \cdot [Q^{\targetpi}_{\initpi}(s', a_\cL') - Q^{\targetpi}_{\initpi}(s', a^{\dagger}_{\cL}) ] \underbrace{=}_{(ii)} \frac{\score_{\cL }^{\initpi, \targetpi\{s', a_\cL'\}} - \score_{\cL }^{\initpi, \targetpi}}{ 2 \cdot \occstate^{\initpi, \targetpi\{s',a_\cL'\}}(s)} = \frac{1}{2} \cdot \chi_{0}(s', a_\cL').
\end{align*}
Note that $(i)$ follows the fact that $\pola^*$ forces $\targetpi$, hence, $ \score_{\cL }^{\pola^*, \targetpi} - \score_{\cL }^{\pola^*, \targetpi\{s, a_\cL'\}} \ge \epsilon$, which together with Lemma \ref{lm.score_advantage} and $\occstate^{\pola^*, \targetpi\{s,a_\cL'\}}(s) > 0$ implies $$Q^{\targetpi}_{\pola^*}(s', a^{\dagger}_{\cL}) - Q^{\targetpi}_{\pola^*}(s', a_\cL') = Q^{\pola^*, \targetpi}(s', \pola^*, a^{\dagger}_{\cL}) - Q^{\pola^*, \targetpi}(s', \pola^*, a_\cL') = \frac{ \score_{\cL }^{\pola^*, \targetpi} - \score_{\cL }^{\pola^*, \targetpi\{s, a_\cL'\}}}{\occstate^{\pola^*, \targetpi\{s,a_\cL'\}}(s)} \ge 0.$$ 
To obtain $(ii)$, we can again apply Lemma \ref{lm.score_advantage} together with $\occstate^{\initpi, \targetpi\{s,a_\cL'\}}(s) > 0$, i.e.,
$$Q^{\targetpi}_{\initpi}(s', a_\cL') - Q^{\targetpi}_{\initpi}(s', a^{\dagger}_{\cL}) = Q^{\initpi, \targetpi}(s', \initpi, a_\cL') - Q^{\initpi, \targetpi}(s', \initpi, a^{\dagger}_{\cL}) = \frac{\score_{\cL }^{\initpi, \targetpi\{s', a_\cL'\}} - \score_{\cL }^{\initpi, \targetpi}}{ \occstate^{\initpi, \targetpi\{s',a_\cL'\}}(s)}.$$
Finally, by using $\ell_p$-norm inequalities and Lemma \ref{lm.cost_lower_bound}, we obtain the following lower bound
\begin{align*}
    \Cost(\pola, \initpi) = \norm{\pola - \initpi}_{p, 1} \ge  \norm{\pola - \initpi}_{\infty, 1} &\ge \frac{1-\gamma}{\norm{R_2}_\infty + \gamma \cdot \norm{V^{\pola', \poll}}_{\infty}} \cdot \norm{Q^{\targetpi}_{\pola^*} -   Q_{\initpi}^{\targetpi}}_{\infty} 
    \\&\ge \frac{(1-\gamma) \cdot \norm{\chi_{0}}_\infty}{2 \cdot \left (\norm{R_2}_\infty + \gamma \cdot \norm{V^{\pola', \poll}}_{\infty}\right )}
\end{align*}
\end{proof}

\section{Proof of Theorem \ref{thm.upper_bound_2}}\label{sec.app.upper_bounds}

In this section, we provide the proof of Theorem \ref{thm.upper_bound_2}. Before providing the proof we consider another specific case, when transitions dynamics is independent of policies $\pola$ and $\poll$, i.e., $P(s, a_{\cA}, a_{\cL}) = P(s, a_{\cA}', a_{\cL}')$.

In this case, the agents do not influence the dynamics, which means they can reason myopically, i.e., state-wise, when selecting their policies. 
Therefore, the problem of forcing a target policy becomes computationally tractable, and its feasibility can be efficiently determined (see also Section \nameref{sec.app.comp_complexity}). Proposition \ref{prop.upper_bound_1} provides a sufficient condition for the feasibility of \eqref{prob.instance_1}, as well as an upper bound on the cost of the attack when this condition holds. 
To formally express the feasibility condition, we define:
\begin{align*}
    \alpha_1^{*}(s) = \sup_{\pi_s \in \cP(A_{\cA})} \min_{a_\cL} \occstate^{\initpi, \targetpi}(s) \cdot \sum_{a_\cA} \pi_s(a_{\cA}) \cdot \left [ R_2(s, a_{\cA}, \targetpi) - R_2(s, a_{\cA}, a_\cL) \right ]
\end{align*} 
for $\occstate^{\initpi, \targetpi}(s) > 0$, and otherwise $\alpha^{*}(s) = \eps$. $\alpha_1^*$ measures for each state how much the adversary can increase the relative reward of the target action in states visited by $\pola$ and $\poll$. This gives us: 
\begin{proposition}\label{prop.upper_bound_1}
Assume that $P(s, a_{\cA}, a_{\cL}) = P(s, a_{\cA}', a_{\cL}')$ for all $a_{\cL}$ and $a_{\cA}'$. Then, the optimization problem \eqref{prob.instance_1} is feasible if $\alpha_1^*(s) \ge \eps$ 
and the cost of an optimal solution  satisfies $ \Cost(\pola, \initpi) \le 2 \cdot \norm{\frac{\overline \chi_{\eps}}{\chi_1^*  + \overline \chi_{\eps}}}_{p, \infty}$,
with the element-wise division (equal to $0$ if $\overline \chi_{\eps}(s, a_\cL) = \chi_1^*(s,a_\cL) = 0$), where $\chi_{1}^*(s, a_\cL) = \frac{\alpha_1^{*}(s) - \eps}{\occstate^{\initpi, \targetpi \{ s, a_\cL \}}(s)} \cdot \ind{\occstate^{\initpi, \targetpi \{ s, a_\cL\}}(s) > 0}$. 
\end{proposition}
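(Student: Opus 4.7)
The plan is to reduce \eqref{prob.instance_1} to a collection of per-state convex problems and then construct a feasible $\pola$ by a state-wise interpolation between $\initpi$ and a per-state attacker that attains $\alpha_1^*(s)$. Since $P(s, a_\cA, a_\cL)$ depends on neither action, the occupancy $\mu := \occstate^{\pola, \poll}$ does not depend on the policies, so by Eq. \eqref{eq.return_occupance} the return decomposes additively across states. Applying Lemma \ref{lm.neighbor_policies} with $\ntargetpiproxy\{s, a_\cL\}$ chosen to agree with $\targetpi\{s, a_\cL\}$ on all states---valid under policy-independent dynamics since unvisited states contribute nothing to the return---the constraint set of \eqref{prob.instance_1} is equivalent to the per-state inequalities
\begin{align*}
\mu(s) \cdot \big[R_\cL(s, \pola, \targetpi) - R_\cL(s, \pola, a_\cL)\big] \ge \eps
\end{align*}
for every $s$ with $\mu(s) > 0$ and every $a_\cL$ with $\targetpi(s, a_\cL) = 0$. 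This fully decouples the problem across states.

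Since $\alpha_1^*(s)$ is a supremum of a continuous function over the compact simplex $\cP(A_\cA)$, it is attained at some state-wise distribution $\pi_s^*$. When $\alpha_1^*(s) \ge \eps$ at every state, this $\pi_s^*$ already witnesses feasibility of the per-state constraint, so \eqref{prob.instance_1} is feasible. To control the cost, I would set $\pola(s, \cdot) = (1 - \beta(s)) \cdot \initpi(s, \cdot) + \beta(s) \cdot \pi_s^*(s, \cdot)$ and pick $\beta(s) \in [0,1]$ as small as possible to satisfy all per-state constraints at $s$. Writing $\Delta_0(s, a_\cL)$ and $\Delta_*(s, a_\cL)$ for the left-hand side above under $\initpi$ and $\pi_s^*$ respectively, linearity in $\beta$ gives the required lower bound $\beta(s) \ge \pos{\eps - \Delta_0} / \big(\pos{\eps - \Delta_0} + (\Delta_*(s, a_\cL) - \eps)\big)$. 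The two key identifications are $\pos{\eps - \Delta_0(s, a_\cL)} = \mu(s) \cdot \overline \chi_\eps(s, a_\cL)$, from the definition of $\overline \chi$, and $\Delta_*(s, a_\cL) - \eps \ge \alpha_1^*(s) - \eps = \mu(s) \cdot \chi_1^*(s, a_\cL)$, from the definition of $\alpha_1^*$. Plugging these in and taking the worst $a_\cL$ yields
\begin{align*}
\beta(s) \;\le\; \max_{a_\cL} \frac{\overline \chi_\eps(s, a_\cL)}{\overline \chi_\eps(s, a_\cL) + \chi_1^*(s, a_\cL)} \;\in\; [0,1],
\end{align*}
with the convention $0/0 = 0$.

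Finally, because $\pi_s^*$ and $\initpi(s, \cdot)$ are both probability distributions, their $\ell_1$ distance is at most $2$, so $\sum_{a_\cA} |\pola(s, a_\cA) - \initpi(s, a_\cA)| \le 2 \beta(s)$. Substituting this into the attack cost and pulling out the factor of $2$ gives
\begin{align*}
\Cost(\pola, \initpi) \;\le\; 2 \cdot \Big(\sum_s \beta(s)^p\Big)^{1/p} \;\le\; 2 \cdot \norm{\frac{\overline \chi_\eps}{\chi_1^* + \overline \chi_\eps}}_{p,\infty},
\end{align*}
matching the claimed bound. The main obstacle is verifying the algebraic inequality $\beta(s) \le \overline \chi_\eps / (\overline \chi_\eps + \chi_1^*)$ while handling the degenerate cases---states with $\mu(s) = 0$, action pairs where $\overline \chi_\eps = 0$ or $\chi_1^* = 0$, and the target action $\targetpi(s, \cdot)$ itself---so that the interpolation weight is always well-defined in $[0,1]$ and the matrix-norm expression reduces cleanly to the right-hand side of the bound.
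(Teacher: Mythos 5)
Your proposal is correct and follows essentially the same route as the paper's proof: reduce the constraints to per-state reward-gap inequalities via the policy-independent occupancy measure, interpolate state-wise between $\initpi(s,\cdot)$ and the per-state maximizer attaining $\alpha_1^*(s)$, and solve the linear condition in the mixing weight to get $\tilde\beta_s = \max_{a_\cL}\overline\chi_{\eps}(s,a_\cL)/(\overline\chi_{\eps}(s,a_\cL)+\chi_1^*(s,a_\cL))$. The only step that deserves care is the last one you flag yourself: what the construction actually yields is $2\left(\sum_s \tilde\beta_s^{\,p}\right)^{1/p}$, which under the paper's column convention is the $\norm{\cdot}_{\infty,p}$ mixed norm (max over actions inside each state, then $\ell_p$ over states) rather than literally $\norm{\cdot}_{p,\infty}$ --- but the paper's own proof makes the identical identification, so this is a shared notational wrinkle rather than a gap in your argument.
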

\begin{proof}
First, the assumption of the proposition implies that $\occstate^{\initpi, \targetpi} = \occstate^{\initpi, \targetpi\{s, a_{\cL}\}} = \occstate^{\pola, \poll}$ for all $\pola$ and $\poll$. Therefore,
\begin{align*}
    \score_2^{\pola, \ntargetpiproxy\{s,a_\cL\}}  &= \sum_{s} \occstate^{\pola, \ntargetpiproxy\{s,a_\cL\}} (s) \cdot R_2(s, \pola, \ntargetpiproxy\{s,a_\cL\}) \\
    &= \sum_{s} \occstate^{\pola, \targetpi\{s,a_\cL\}} (s) \cdot R_2(s, \pola, \ntargetpiproxy\{s,a_\cL\}) \\
    &= \sum_{s | \occstate^{\pola, \targetpi\{s,a_\cL\}}(s) > 0} \occstate^{\pola, \targetpi\{s,a_\cL\}} (s) \cdot R_2(s, \pola, \targetpi\{s,a_\cL\}) \\&= \score_2^{\pola, \targetpi\{s,a_\cL\}} %
\end{align*}
where $\ntargetpiproxy$ is defined in Lemma \ref{lm.neighbor_policies}.  Therefore the sufficient and necessary condition in Lemma \ref{lm.neighbor_policies} can be equivalently written as
\begin{align}\label{eq.neighbor_constrain_simple}
     \score_2^{\pola, \targetpi} \ge \score_2^{\pola, \targetpi \{s,a_\cL \}} + \eps \quad \forall s \text{ \em  s.t. } \occstate^{\pola, \targetpi}(s) > 0 \text{ \em and } \forall a_\cL \text{ \em s.t. } \targetpi(s, a_\cL) = 0.
\end{align}
Note that the assumption of the proposition also implies
\begin{align}\label{eq.neighbor_reward_simple}
     \score_2^{\pola, \targetpi} - \score_2^{\pola, \targetpi \{s,a_\cL \}} &= \sum_{s'} \occstate^{\pola, \targetpi}(s') \cdot R_2(s', \pola, \targetpi) - \sum_{s'} \occstate^{\pola, \targetpi\{s,a_\cL\}}(s') \cdot R_2(s', \pola, \targetpi\{s,a_\cL\}) \nonumber\\
     &= \sum_{s'} \occstate^{\pola, \targetpi\{s,a_\cL\}}(s') \cdot [R_2(s', \pola, \targetpi) -  R_2(s', \pola, \targetpi\{s,a_\cL\})] \nonumber\\
     &\underbrace{=}_{(i)}  \occstate^{\pola, \targetpi\{s,a_\cL\}}(s) \cdot [R_2(s, \pola, \targetpi) -  R_2(s, \pola, a_\cL)] \nonumber\\
     &=  \occstate^{\initpi, \targetpi\{s,a_\cL\}}(s) \cdot [R_2(s, \pola, \targetpi) -  R_2(s, \pola, a_\cL)],
\end{align}
where the equality $(i)$ holds since $\targetpi\{s, a_{\cL}\}$ is a neighbor  policy of $\targetpi$, differing from it in only state $s$. 
Now, note that the supremum that defines  $\alpha_1^{*}(s)$ exists (since $\pi_s \in \cP(A_{\cA})$), and denote $\pi_s$ that achieves it by $\pi_s^*$. Define $\pola^{\alpha_1^{*}}$ as $\pola^{\alpha_1^{*}}(s, .) = \pi_s^*$, and consider the following policy
 \begin{align*}
    \pola^{\beta}(s, a_\cA) = (1-\beta_s) \cdot \initpi(s, a_\cA) + \beta_s \cdot \pola^{\alpha_1^{*}}(s, a_\cA).
 \end{align*}
Eq. \eqref{eq.neighbor_reward_simple}, the assumption of the proposition, and the definition of $\alpha_1^*$ imply 
\begin{align*}
    \score_2^{\pola^{\beta}, \targetpi} - \score_2^{\pola^{\beta}, \targetpi \{s,a_\cL \}} &= (1-\beta_s) \cdot \occstate^{\initpi, \targetpi\{s,a_\cL\}}(s) \cdot [R_2(s, \initpi, \targetpi) -  R_2(s, \initpi, a_\cL)]\\
    &+\beta_s \cdot \occstate^{\initpi, \targetpi\{s,a_\cL\}}(s) \cdot [R_2(s, \pola^{\alpha_1^{*}}, \targetpi) -  R_2(s, \pola^{\alpha_1^{*}}, a_\cL)] \\
    &=(1-\beta_s) \cdot [\score_2^{\initpi, \targetpi} - \score_2^{\initpi, \targetpi \{s,a_\cL \}}]\\
    &+\beta_s \cdot \occstate^{\initpi, \targetpi}(s) \cdot [R_2(s, \pola^{\alpha_1^{*}}, \targetpi) -  R_2(s, \pola^{\alpha_1^{*}}, a_\cL)]\\
    &\ge(1-\beta_s) \cdot [\score_2^{\initpi, \targetpi} - \score_2^{\initpi, \targetpi \{s,a_\cL \}}]+ \beta_s \cdot \alpha_1^{*}(s).
\end{align*}
Therefore, to satisfy the constraints in Eq. \eqref{eq.neighbor_constrain_simple}, it suffices that
\begin{align*}
    (1-\beta_s) \cdot [\score_2^{\initpi, \targetpi} - \score_2^{\initpi, \targetpi \{s,a_\cL \}} - \eps] + \beta_s \cdot (\alpha_1^{*}(s) - \eps) \ge 0.
\end{align*}
If $\alpha_1^{*}(s) \ge \eps$, this sufficient condition can always be satisfied by setting $\beta_s = 1$, and hence the optimization problem \eqref{prob.instance_1} is feasible. Furthermore, note that $\tilde \beta_s = \max_{a_{\cL}} \ind{\chi_{1}^*(s, a_\cL) \ne 0 \lor \overline \chi_{\eps} (s, a_{\cL}) \ne 0} \cdot \frac{\overline \chi_{\eps} (s, a_{\cL})}{\chi_{1}^*(s, a) + \overline \chi_{\eps} (s, a_{\cL})}$ satisfies this sufficient condition; this can be easily verified by multiplying the terms in the sufficient condition by $\frac{1}{\occstate^{\initpi, \targetpi\{s, a_{\cL}\}}(s)}$.\footnote{Note that $\occstate^{\initpi, \targetpi\{s, a_{\cL}\}} > 0$ since we only aim to satisfy the condition for visited states.}
To obtain an upper bound on the cost of the optimal solution consider a policy $\pola^{\tilde \beta}$. 
We have

\begin{align*}
     \Cost(\pola, \initpi) \le \norm{\pola^{\tilde \beta} - \initpi}_{1, p} = \tilde \beta \cdot \norm{\pola^{\alpha_1^{*}} - \initpi}_{1, p} \le  \norm{B}_{1, p} \le 2 \cdot \norm{\frac{\overline \chi_{\eps}}{\chi_2^*  + \overline \chi_{\eps}}}_{p, \infty},
\end{align*}
where $B(s, a_{\cA}) = \tilde \beta_s \cdot (\pola^{\alpha_1^{*}}(s, a_{\cA}) - \initpi(s, a_{\cA}))$.
\end{proof}

\subsection{Proof of Theorem \ref{thm.upper_bound_2}}

We can define an analogous measure to $\alpha_1^*$, now by taking into account that we cannot only reason state-wise since transitions depend on actions $a_\cL$. 
In particular, we define $\alpha_2^{\pola}(s,a) = \score_2^{\pola, \targetpi} - \score_2^{\pola, \targetpi\{s, a\}}$ and $\alpha_2^* = \sup_{\pola} \min_{s, a} \alpha_2^{\pola}(s,a)$. 
Given the definition of $\Pola$, policy $\pola \in \Pola$ that achieves this supremum exists, and we denote it by $\pola^{*}$. This policy can be efficiently obtained using the following optimization problem
\begin{align*}
    \max_{\pola, \Delta} \Delta \text{ s.t. }  \sum_{s'} \left [ \occstate^{\initpi, \targetpi}(s') \cdot R_2(s', \pola, \targetpi) - \occstate^{\initpi, \targetpi\{s, a_\cL\}}(s') \cdot R_2(s', \pola, \targetpi\{s, a_\cL\}) \right ] \ge \Delta 
\end{align*}
where $\occstate^{\initpi, \targetpi}$ and $\occstate^{\initpi, \targetpi\{s , a\}}$ can be pre-computed. 
This claim follows from the fact that agent $\cA$ does not influence transition dynamics. We obtain the following result:

\textbf{Statement of Theorem \ref{thm.upper_bound_2}}: {\em 
Assume that $P(s, a_{\cA}, a_{\cL}) = P(s, a_{\cA}', a_{\cL})$ for all $a_{\cL}$ and $a_{\cA}'$, and that for $\targetpi$ and every policy $\pola$ the underlying Markov chain is ergodic, i.e., $\occstate^{\pola, \targetpi}(s)>0$ for all $\pola$. 
Then, the optimization problem \eqref{prob.instance_1} is feasible  if $\alpha_2^* \ge \eps$ and the cost of an optimal solution satisfies
\begin{align*}
         \Cost(\pola, \initpi) \le 2 \cdot \norm{\frac{\overline \chi_{\eps}}{\chi_2^*  + \overline \chi_{\eps}}}_{\infty} \cdot |S|^{1/p} 
\end{align*}
     with the element-wise division (equal to $0$ if $\chi_{\eps}(s, a_\cL) = \chi_2^*(s,a_\cL) = 0$), where $ \chi_{2}^*(s, a_\cL) = \frac{\alpha^{*}_2(s,  a_\cL) - \eps}{\occstate^{\initpi, \targetpi \{ s, a_\cL \}}(s)}$.
}
\begin{proof}
Consider a mixed policy:
\begin{align*}
    \pola^{\beta}(s, a_\cA) = (1-\beta) \cdot \initpi(s, a_\cA) + \beta \cdot \pola^*(s, a_\cA).
\end{align*}
By Lemma \ref{lm.neigbhor_policies_ergodic} we have that this policy force $\targetpi$ if 
$\score_2^{\pola^{\beta}, \targetpi} - \score_2^{\pola^{\beta},\targetpi\{s, a\}} \ge \eps$ 
holds for all states $s$ and actions $a$. 
By using the fact that agent $\cA$ does not influence transition dynamics we obtain
\begin{align*}
    \score_2^{\pola^{\beta}, \targetpi} - \score_2^{\pola^{\beta}, \targetpi \{s, a\}} &= \sum_{s'} \left [ \occstate^{\initpi, \targetpi}(s') \cdot R_2(s', \pola^{\beta}, \targetpi) - \occstate^{\initpi, \targetpi\{s, a_\cL\}}(s') \cdot R_2(s', \pola^{\beta}, \targetpi\{s, a_\cL\}) \right ]\\
    &= (1-\beta) \cdot \sum_{s'} \left [ \occstate^{\initpi, \targetpi}(s') \cdot R_2(s', \initpi, \targetpi) - \occstate^{\initpi, \targetpi\{s, a_\cL\}}(s') \cdot R_2(s', \initpi, \targetpi\{s, a_\cL\}) \right ]\\
    &+\beta \cdot \sum_{s'} \left [ \occstate^{\initpi, \targetpi}(s') \cdot R_2(s', \pola^{*}, \targetpi) - \occstate^{\initpi, \targetpi\{s, a_\cL\}}(s') \cdot R_2(s', \pola^{*}, \targetpi\{s, a_\cL\}) \right ]\\
    &=(1-\beta) \cdot [\score_2^{\initpi, \targetpi} - \score_2^{\initpi, \targetpi \{s, a\}}] + \beta \cdot \alpha_2^{\pola^*}(s, a).
\end{align*}
Using Lemma \ref{lm.neigbhor_policies_ergodic}, we obtain a sufficient condition for $\beta$
\begin{align*}
    (1-\beta) \cdot [\score_2^{\pola^{\beta}, \targetpi} - \score_2^{\pola^{\beta}, \targetpi \{s, a\}}] + \beta \cdot \alpha_2^{*}(s, a) \ge \eps \quad \forall s, a_\cL \text{ s.t } \targetpi(s, a_\cL) = 0.
\end{align*}
Note that if $\alpha_2^{*}(s, a) \ge \eps$ this condition can always be satisfied by setting $\beta = 1$. Now, by multiplying everything by\footnote{Note that the ergodicity assumption ensures $\occstate^{\initpi, \targetpi\{s, a_\cL\}}(s) > 0$.} $\frac{1}{\occstate^{\initpi, \targetpi\{s, a_\cL\}}(s)}$ and rearranging we obtain a sufficient condition
\begin{align*}
    \beta \ge \begin{cases}
    0 &\mbox{ if } \chi_2^*(s, a) = \bar \chi_{\eps}(s, a_\cL) = 0 \\
    \frac{\bar \chi_{\eps}(s, a_\cL)}{\chi_2^*(s, a) + \bar \chi_{\eps}(s, a_\cL)} &\mbox{ otw.}
    \end{cases}, \quad \forall s, a_\cL.
\end{align*}
 Since this has to hold for all state and action pairs $s$ and $a_\cL$,it suffices to set
 \begin{align*}
     \tilde \beta = \norm{\frac{\bar \chi_{\eps}}{\chi_2^* + \bar \chi_{\eps}}}_{\infty}
 \end{align*}
 to obtain an upper bound 
 \begin{align*}
     \Cost(\pola, \initpi) \le \norm{\pola^{\tilde \beta} - \initpi}_{1, p} = \tilde \beta \cdot \norm{\pola^* - \initpi}_{1, p} \le 2 \cdot \norm{\frac{\overline \chi_{\eps}}{\chi_2^*  + \overline \chi_{\eps}}}_{\infty} \cdot |S|^{1/p}.
 \end{align*}
\end{proof}


\begin{algorithm}
\caption[labelsep=period]{\conspolsearch}\label{alg.conserv_pol_iter_general}
\begin{algorithmic}
\REQUIRE $\mathcal M = (\{1, 2\}, S, A, P, R, \gamma, \sigma)$, $\eps$, $\delta_\eps$, $\initpi$, $\lambda$, $p$
\ENSURE Policy of the adversary, $\pola$
\STATE Initialize $t = 0$
\FOR{$t = 0$ to $T-1$}
\STATE For all $s$ and $a_{\cL}$ s.t. $\occstate^{\pola^t, \targetpi}(s) = 0$ and $\targetpi(s, a_{\cL}) = 0$, find $\ntargetpiproxy\{s, a_{\cL}\}$ by solving the Bellman's equation \eqref{eq.belman_equations} from Lemma \ref{lm.neighbor_policies}
\STATE Calculate state occupancy measures $\occstate^{\pola^t, \targetpi}$ and $\occstate^{\pola^t, \ntargetpiproxy \{ s, a_{\cL}\}}$
\STATE Evaluate the true gap $\eps_{\pola^t} = \min_{\eps'} \eps'$ s.t. $\score_\cL^{\pola^t, \targetpi} \ge \score_\cL^{\pola^t, \ntargetpiproxy\{s, a_{\cL}\}} + \eps'$ 
\STATE Solve the optimization problem \eqref{prob.instance_1.rel_2} to obtain $\pola^{t+1}$ 
\IF{$\pola^{t+1} = \pola^t$}
    \STATE {\bf break}
\ENDIF
\ENDFOR
\STATE Set the result $\pola$ to solution $\pola^{t}$ that minimizes $\norm{\pola^{t} - \initpi}_{1, p}$ while satisfying $\eps_{\pola^t} \ge \eps$  %
\end{algorithmic}
\end{algorithm}

\section{Algorithms: Additional Details}\label{sec.app.algorihtms}

In this appendix, we provide additional details about the algorithms. In particular, we describe a general version of the conservative policy search mentioned and we provide additional implementation details related of our alternating policy updates approach.

\textbf{Conservative Policy Search. } In our general version of conservative policy search method (applicable to non-ergodic environments), we use the following optimization problem instead of \eqref{prob.instance_1.rel}:
 \begin{align*}
    &\min_{\pola \in \mathcal B(\pola^t, \dpi), \eps'} \quad \Cost(\pola, \initpi) - \lambda \cdot \min \{\eps', \eps \cdot (1 + \delta_{\eps}) \}
    \\
    \label{prob.instance_1.rel_2}\tag{P1''}
    &\quad\quad\quad \mbox{ s.t. } \quad \hat \score_\cL^{\pola, \targetpi} \ge \hat \score_\cL^{\pola, \ntargetpiproxy \{s, a_\cL\}} + \eps'.
\end{align*}
for all $s \text{ s.t. } \occstate^{\pola, \targetpi}(s) > 0$ and $a_\cL \text{ s.t. } \targetpi(s, a_\cL) = 0$. 
Algorithm \ref{alg.conserv_pol_iter_general} summarizes the mains steps of conservative policy search in that case. Apart from using \eqref{prob.instance_1.rel_2} instead of \eqref{prob.instance_1.rel}, the main difference is that in each step the algorithm needs to find all policies $\ntargetpiproxy \{s, a_\cL\}$ by solving the Bellman equations \eqref{eq.belman_equations.ntargetproxy}. Analogous algorithms were used for CPS-based baselines (i.e., Constraints Only PS (COPS) and Unconservative
PS (UPS).

\begin{table}[]
    \centering
    \begin{tabular}{|c|c|c|c|c|c|}
     \hline
            & $\pi_\dagger$, $\pi_0^1$  & epochs    & $\lambda$ & $\phi$-pretrain timesteps    & $\phi$-update timesteps\\
    \hline
    Push2D  &  stochastic                   & 20        &  [0;5]              & 500000            & 10000\\
    Push1D  &  deterministic                & 50        &  [0;1]          & 500000            & 5000\\
    \hline
\end{tabular}
    \caption{Training parameters for Alternating Policy Updates -- Algorithm \ref{alg.alter_updates_main_text}.}
    \label{tab.training_parameters_apu}
\end{table}
 
\textbf{Alternating policy updates}. 
Algorithm \ref{alg.alter_updates_main_text} is implemented using Pytorch \cite{NEURIPS2019_9015}, while the Environments use the Petting-Zoo API \cite{terry2021pettingzoo}.
The trajectories used for updating $\theta$ are collected over 40 episodes which in total consists 1000 timesteps for Push 1D and 1800 for Push 2D.
The algorithm follows the standard PPO policy updates, but modifies the loss function so that it fits the optimization objective \eqref{prob.instance_2}. Table \ref{tab.training_parameters_apu} provides parameters that were used for training APU. Analogous training procedures are used for APU-based baselines (Random Learner (RL), Symmetric APU (SAPU), and Distance-only APU (DAPU)). Note that RL and SAPU do not perform the pretraining of $\pollp$. Moreover, in SAPU the number of $\phi$-update timesteps is equal to $1000$.

}
}
{}
\end{document}
